\documentclass[10pt,english]{article}
\usepackage{geometry}
\geometry{verbose,tmargin=1in,bmargin=1in,lmargin=1in,rmargin=1in}
\usepackage[T1]{fontenc}
\usepackage[latin9]{inputenc}
\usepackage{bm}
\usepackage{amsmath}
\usepackage{amssymb} 
\usepackage[unicode=true,
 bookmarks=false,
 breaklinks=false,pdfborder={0 0 1},colorlinks=false]
 {hyperref}
\hypersetup{
 colorlinks,citecolor=blue,filecolor=blue,linkcolor=blue,urlcolor=blue}
\usepackage{xcolor,colortbl}
\definecolor{Gray}{gray}{0.85}
\usepackage{enumitem}
\makeatletter
\usepackage{amsthm}
\usepackage{cite}  
\usepackage{comment}
\usepackage[sort,compress]{natbib}
\usepackage{booktabs,mathtools}
\usepackage{graphicx}
\usepackage[linesnumbered,ruled,vlined]{algorithm2e}
\usepackage{algorithmic}

\usepackage{multirow}
\usepackage{dsfont}
\usepackage{color}
\usepackage{float}

\definecolor{yjc}{RGB}{225,0,100}
\definecolor{lxs}{RGB}{138,43,226}

\definecolor{own_pink}{RGB}{217,25,169}
\definecolor{own_blue}{RGB}{0,100,223}

\definecolor{own_pink}{RGB}{217,25,169}
\definecolor{own_blue}{RGB}{0,100,223}

\allowdisplaybreaks

\DeclareMathOperator{\ind}{\mathds{1}}  

\newcommand{\defn}{\coloneqq}


\newcommand{\no}{0} 
\newcommand{\ror}{\sigma} 
\newcommand{\unb}{\cU} 

\newcommand{\pmin}{P}
\newcommand{\pmhat}{\widehat{P}}
\newcommand{\Pv}{\underline{P}}
\newcommand{\Phatv}{\underline{\widehat{P}}}


\newcommand{\ac}{\mathcal{A}}
\newcommand{\rew}{r}
\newcommand{\DRNVI}{{\sf DR-NVI}\xspace}

\newcommand{\rmg}{RMG\xspace}
\newcommand{\rmgs}{RMGs\xspace}
\newcommand{\allA}{\prod_{i=1}^n A_i}


\newcommand{\ba}{\bm{a}}


\newcommand{\cA}{\mathcal{A}}
\newcommand{\cB}{\mathcal{B}}

\newcommand{\cD}{\mathcal{D}}

\newcommand{\cF}{\mathcal{F}}
\newcommand{\cG}{\mathcal{G}}

\newcommand{\cM}{\mathcal{M}}
\newcommand{\cN}{N}

\newcommand{\cP}{\mathcal{P}}

\newcommand{\cS}{{\mathcal{S}}}

\newcommand{\cU}{\mathcal{U}}

\newcommand{\cW}{\mathcal{W}}
\newcommand{\cX}{\mathcal{X}}
\newcommand{\cY}{\mathcal{Y}}

\newcommand{\mymid}{\,|\,} 


\usepackage{scalerel,stackengine}
\stackMath
\newcommand\reallywidehat[1]{%
\savestack{\tmpbox}{\stretchto{%
  \scaleto{%
    \scalerel*[\widthof{\ensuremath{#1}}]{\kern-.6pt\bigwedge\kern-.6pt}%
    {\rule[-\textheight/2]{1ex}{\textheight}}
  }{\textheight}%
}{0.5ex}}%
\stackon[1pt]{#1}{\tmpbox}%
}
\newcommand\reallywidecheck[1]{%
\savestack{\tmpbox}{\stretchto{%
  \scaleto{
    \scalerel*[\widthof{\ensuremath{#1}}]{\kern-.6pt\bigwedge\kern-.6pt}%
    {\rule[-\textheight/2]{1ex}{\textheight}}
  }{\textheight}%
}{0.5ex}}%
\stackon[1pt]{#1}{\scalebox{-1}{\tmpbox}}%
}

\title{Sample-Efficient Robust Multi-Agent Reinforcement Learning \\in the Face of Environmental Uncertainty}

 \author{
	Laixi Shi\thanks{Department of Computing Mathematical Sciences, California Institute of Technology, CA 91125, USA.}\\
 	Caltech 
 	\and
 	Eric Mazumdar\footnotemark[1] \\ 	 
	Caltech
	\and
 	Yuejie Chi\thanks{Department of Electrical and Computer Engineering, Carnegie Mellon University, Pittsburgh, PA 15213, USA.} \\ 
	CMU 
	\and
	Adam Wierman\footnotemark[1] \\
	Caltech
 	} 

\date{\today}
\begin{document}
\theoremstyle{plain} \newtheorem{lemma}{\textbf{Lemma}}
\newtheorem{proposition}{\textbf{Proposition}}
\newtheorem{theorem}{\textbf{Theorem}}
\newtheorem{assumption}{Assumption}
\newtheorem{corollary}{Corollary}[theorem] 

\theoremstyle{remark}\newtheorem{remark}{\textbf{Remark}}

\maketitle
 
  \sloppy
 \begin{abstract}
To overcome the sim-to-real gap in reinforcement learning (RL), learned policies must maintain robustness against environmental uncertainties. While robust RL has been widely studied in single-agent regimes, in multi-agent environments, the problem remains understudied---despite the fact that the problems posed by environmental uncertainties are often exacerbated by strategic interactions. This work focuses on learning in distributionally robust Markov games (\rmgs), a robust variant of standard Markov games, wherein each agent aims to learn a policy that maximizes its own worst-case performance when the deployed environment deviates within its own prescribed uncertainty set. This results in a set of robust equilibrium strategies for all agents that align with classic notions of game-theoretic equilibria. Assuming a non-adaptive sampling mechanism from a generative model, we propose a sample-efficient model-based algorithm (\DRNVI) with finite-sample complexity guarantees for learning robust variants of various notions of game-theoretic equilibria. We also establish an information-theoretic lower bound for solving RMGs, which confirms the near-optimal sample complexity of \DRNVI with respect to problem-dependent factors such as the size of the state space, the target accuracy, and the horizon length.
\end{abstract}

\noindent \textbf{Keywords:} model uncertainty, distribution shift, multi-agent reinforcement learning, robust Markov games.

\allowdisplaybreaks

\setcounter{tocdepth}{2}
\tableofcontents

\section{Introduction}

Many real-world applications of artificial intelligence naturally involve multiple agents in dynamically evolving environments. Examples include ecosystem protection \citep{fang2015security}, board games \citep{silver2017mastering}, strategic management \citep{saloner1991modeling}, and autonomous driving \citep{zhou2020smarts} among many others. One of the most promising algorithmic paradigms for addressing these problems is that of (deep) multi-agent reinforcement learning (MARL) \citep{silver2017mastering,vinyals2019grandmaster,lanctot2019openspiel} through a {\em decision-making} perspective. In full generality, it allows for agents with misaligned and possibly conflicting interests to optimize their own long-term rewards in an unknown dynamic environment, while taking one another into account. As such, MARL can often be modeled as learning in Markov games (MGs) \citep{littman1994markov,shapley1953stochastic}.  Due to the game-theoretic nature of MGs, one often relies on solution concepts which take the form of equilibria --- strategies/policies that are stable under rational deviations for all agents --- like Nash equilibria (NE) \citep{nash1951non,shapley1953stochastic}, correlated equilibria (CE) \citep{aumann1987correlated}, and coarse correlated equilibra (CCE) \citep{ aumann1987correlated,moulin1978strategically}.

\subsection{Environmental uncertainty in MARL}

However, the equilibria of MGs can be very sensitive to environmental perturbations. Environmental uncertainties caused by system noise, model mismatch, and sim-to-real gaps can cause dramatic changes to both the qualitative outcomes of the game as well as agents' payoffs. While this problem is present in single-agent RL, the need for robustness is even more acute in the multi-agent setting where the game-theoretic interactions can cause instabilities~\citep{slumbers2023game}.  Indeed, playing an equilibrium solution learned in the simulated environment might lead to a catastrophic drop in a single agent's payoff or even all agents' payoffs when the deployed environment deviates slightly from what is expected \citep{balaji2019deepracer,zhang2020robust,zeng2022resilience,yeh2021sustainbench}, a point we illustrate in the following example.

\begin{figure*}[t]
\centering
	\includegraphics[width=0.95\linewidth]{./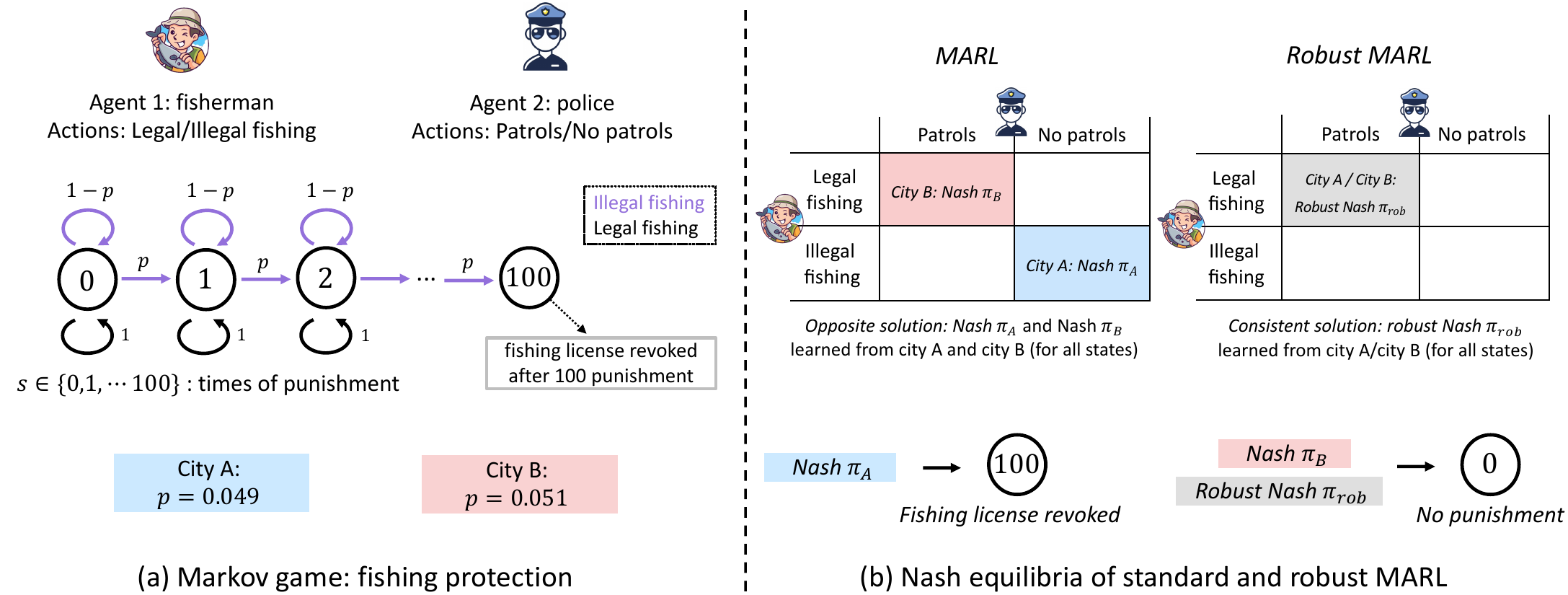} 
	\caption{A two-player general-sum Markov game modeling preventing illegal fishing.  
 (a) shows the state space (circles) and the simplified transitions; the fisherman arrives at distinct states by executing different Nash equilibrium solutions $\pi_A$ (from city A) or $\pi_B$ (from city B). 
	(b) in two slightly different environments (city A versus city B), it shows the solutions $\pi_A, \pi_B$ of the standard game, and the consistent solution {\em robust Nash} $\pi_{rob}$ of a robust variant of the game (detailed in Appendix~\ref{proof:solution-for-example}).}\label{fig:intro}
\end{figure*}

\paragraph{Example: fishing protection.} \textit{
To emphasize the impact of model uncertainty in MARL, in Figure~\ref{fig:intro}, we present a concrete example of a simple two-player game that models the interaction between a fisherman and law enforcement trying to prevent illegal fishing. The state $s\in \{0,1,\cdots,100\}$ represents the number of punishments received by the fisherman, with the license being revoked at $s=100$. The environment is governed by a model parameter $p$. We observe from Figure~\ref{fig:intro}(b) that for slightly perturbed environments, city A ($p=0.049$) and city B ($p=0.051$), the solutions of the MGs are two Nash equilibria with drastically different outcomes: no punishment under policy $\pi_B$ learned from city B (in red) and a revoked license under policy $\pi_A$ learned from city A (in blue). More details are presented in Appendix~\ref{proof:solution-for-example}. 
} The example above illustrates how the standard formulation of a MG can be vulnerable to model uncertainties and result in unstable solutions with divergent outcomes. As such, robustness and stability become a pressing need and key challenge for the deployment of MARL algorithms.

To address this, we consider robust MARL problems as (distributionally) robust Markov games (RMGs) --- a robust counterpart of standard MGs \citep{zhang2020robust,kardecs2011discounted}. The natural solution concepts for RMGs are equilibria not only between agents, but also between multiple natural adversaries that choose the worst-case environments within some prescribed uncertainty set for each agent. By design, they exhibit more robustness and consistency in the face of unmodeled disturbances. To illustrate this, consider the example in Figure~\ref{fig:intro}, where one can observe that the solutions of a RMG ($\pi_{rob}$ in gray) remains consistent and stable across similar environments city A and city B.

Despite some recent efforts \citep{zhang2020robust,kardecs2011discounted,ma2023decentralized,blanchet2023double}, a fundamental understanding of learning in RMGs is lacking. Indeed, while the robust formulation of single-agent RL has been well studied~\citep{iyengar2005robust,nilim2005robust,shi2023curious,xu2023improved}, understanding how to efficiently learn equilibrium policies in robust Markov games remains an open question. We focus on understanding and achieving near-optimal sample efficiency in robust MGs, reflecting the fact that in many large-scale applications, agents must learn from samples from an unknown but potentially extremely large environment \citep{silver2016mastering,vinyals2019grandmaster,achiam2023gpt}.
While some attempts have been made to design sample-efficient algorithms for robust MARL \citep{wang2023finite,blanchet2023double}, the current solutions are still far from optimal. With that in mind, we investigate the following open question:
\begin{quote}
{\em Can we achieve robustness and near-optimal sample efficiency in MARL simultaneously? }
\end{quote}

\subsection{Main contributions} 
To address the open question, this work concentrates on designing algorithms for robust MGs with near-optimal sample complexity guarantees. We consider three solution concepts for RMGs, which are robust variants of standard equilibria --- robust NE, robust CE, and robust CCE. We focus on a class of RMGs, where the  uncertainty sets of the environment are constructed following an {\em agent-wise (s,a)-rectangularity} condition for computational tractability \citep{iyengar2005robust,wiesemann2013robust} (see Section~\ref{sec:framework-robust-marl}). Such a condition allows each agent to independently consider its uncertainty set according to their personal interest. We consider total variation (TV) distance as the distance metric for the uncertainty set, motivated by its practical \citep{pan2023adjustable,lee2021optidice} and theoretical appeal \citep{panaganti2021sample,shi2023curious,blanchet2023double}.

Concretely, our study focuses on finite-horizon RMGs with $n$ agents. We denote the episode length by $H$, the size of the state space by $S$, the size of the $i$-th agent's action space by $A_i$, and use $\ror_i \in (0,1]$ to represent the uncertainty level of the $i$-th agent.  We assume access to a generative model that can draw samples from the nominal environment in a non-adaptive manner. The goal is to find an $\varepsilon$-approximate equilibrium for RMGs --- a joint policy such that each agent's benefit is at most $\varepsilon$ away under rational deviations. The main contributions are summarized as follows.

\begin{itemize}
    \item {\em Near-optimal sample complexity upper bound.}
 We design a model-based algorithm --- distributionally robust Nash value iteration (\DRNVI), which can provably find any solution among $\varepsilon$-approximate robust-$\{$NE, CCE, CE$\}$ with high probability, when the sample size exceeds
\begin{align}
       \widetilde{O} \left(\frac{SH^3\prod_{i=1}^n A_i  }{   \varepsilon^2} \min \Big\{H,  ~\frac{1}{\min_{1\leq i\leq n} \ror_i}\Big\}\right).\label{eq:upper-intro}
   \end{align}
This significantly improves upon prior art \citep{blanchet2023double} $\widetilde{O} \big(S^4 \left(\prod_{i=1}^n A_i\right)^3 H^4 /\varepsilon^2\big)$\footnote{Note that  \citet{blanchet2023double} targets a different (and more challenging) setting with offline data. We translate the results of \citet{blanchet2023double} to the generative setting we consider.} \citep{blanchet2023double} by at least a factor of $\widetilde{O}\big(S^3 \left(\prod_{i=1}^n A_i\right)^2\big)$, and further delineates the impact of the uncertainty levels.
Our results are derived by addressing the intricate statistical dependencies arising from game-theoretical interactions among agents, a challenge not present in robust single-agent RL. Additionally, we employ distributionally robust optimization to address the nonlinear payoffs of agents in RMGs, which lack a closed form.

\item {\em Information-theoretic lower bound.}
To understand the optimality of our algorithm we establish a lower bound for solving RMGs, showing that no algorithm can learn any of $\varepsilon$-approximate robust-$\{$NE, CCE, CE$\}$ with fewer samples than
\begin{align}
       \widetilde{O} \bigg( \frac{SH^3 \max_{1\leq i \leq n} A_i   }{   \varepsilon^2} \min \Big\{H,  ~\frac{1}{\min_{1\leq i\leq n} \ror_i}\Big\} \bigg). \label{eq:lower-intro}
       \end{align}  
To the best of our knowledge, this is the first information-theoretic lower bound for RMGs, regardless of the distance metric in use.
We construct new hard scenarios for tightness, differing from existing ones in both robust single-agent RL and standard MGs, which may be of independent interest.   This in turn establishes that the sample complexity of \DRNVI is optimal for all RMGs with respect to many critical problem-dependent parameters such as $S, H, \{\ror_i\}_{1\leq i\leq n}$, making \DRNVI the first near-optimal finite-sample guarantee for robust MGs, regardless of the divergence metric in use.

\end{itemize}

 \paragraph{Notation.}
 Throughout this paper, we introduce the notation $[T]\coloneqq \{1,\cdots,T\}$ for any positive integer $T>0$. We denote by $\Delta(\cS)$ the probability simplex over a set $\cS$ and $x = \big[x(s,a)\big]_{(s,a)\in\cS\times\cA}\in \mathbb{R}^{SA}$ (resp.~$x = \big[x(s)\big]_{s\in\cS}\in \mathbb{R}^{S}$) as any vector that constitutes certain values for each state-action pair (resp.~state).

\section{Background: Standard Markov Games}\label{sec:background}

We begin by covering the foundational aspects of multi-agent general-sum standard Markov games in a finite-horizon setting. 

\paragraph{Standard Markov games.} A finite-horizon {\em multi-agent general-sum Markov game} can be represented as $\mathcal{MG}=\big\{ \cS,  \{\cA_i\}_{1 \le i \le n}, P,  \rew, H \big\}$. This game involves $n$ agents who optimizes their own benefits in a shared environment, consisting of the following key components. 
\begin{itemize}
	\item State space $\cS= \{1,\cdots,S\}$ of the shared environment with $S$ different states. 
	\item Joint action space $\ac$: for each $1\leq i\leq n$, we represent $\mathcal{A}_i=\{1,\cdots, A_i\}$ as the action space of the $i$-th agent that contains $A_i$ different actions. In addition, we denote the joint action space for all agents (or a subset of agents) as	$\ac \coloneqq \mathcal{A}_1 \times \cdots \times \mathcal{A}_m$ (or 
			$\mathcal{A}_{-i} \coloneqq \prod_{j:j\neq i}\mathcal{A}_j$ for all $1\leq i\leq n$). 	%
		%
	For convenience, we denote the boldface letter $\bm{a} \in \ac$ (resp.~$\bm{a}_{-i} \in \cA_{-i}$) as  a joint action profile for all agents (resp.~all agents excluding the $i$-th agent).  
		

	\item  Probability transition kernel $P = \{P_h\}_{1\leq h\leq H}$ with $P_h: \cS \times \ac \mapsto \Delta(\cS)$. Specifically, $P_h(s'\mymid s,\bm{a})$ represents the probability of $\mathcal{MG}$ transitioning from current state $s \in \cS$ to the next state $s'\in \cS$ at time step $h$, given the agents choose the joint action profile $\bm{a}\in \ac$. 

	\item Reward function $\rew=\{r_{i,h}\}_{1\leq i\leq n, 1\leq h\leq H}$ with $r_{i,h}: \cS\times \ac \mapsto [0,1]$. Specifically, for any $(i,h,s,\bm{a})\in [n] \times [H] \times \cS\times \cA$, let $r_{i,h }(s,\bm{a})$ be the immediate (deterministic) reward received by the $i$-th agent in  state $s$ when the joint action profile is $\bm{a}$, which is normalized to $[0,1]$ without loss of generality. 
\item $H$ is the horizon length of the standard MG.

\end{itemize}


\paragraph{Markov policies and value functions.} Throughout the paper, we focus on the class of Markov policies, namely, the action selection rule is solely determined by the current state $s$, 
independent from previous trajectories (including visited states, executed actions, and received rewards) of all agents. Specifically, for any $1\leq i\leq n$, the $i$-th agent executes actions according to a policy $ \pi_i = \{\pi_{i,h} :\cS   \mapsto \Delta(\ac_i)  \}_{1\leq h\leq H}$, with $\pi_{i,h}(a \mymid s)$ the probability of selecting action $a$ in state $s$ at time step $h$.
The joint Markov policy of all agents can be defined as $\pi= (\pi_1,\ldots, \pi_n): \cS \times [H] \mapsto \Delta(\ac)$, namely, the joint action profile $\bm{a}$ of all agents is chosen according to the distribution specified by $\pi_h(\cdot \mymid s) = (\pi_{1,h}, \pi_{2,h}\ldots, \pi_{n,h})(\cdot\mymid s) \in \Delta(\ac)$ conditioned on  state
$s$ at time step $h$. 

With the above notation in mind, for any given joint policy $\pi$ and transition kernel $P$ of the $\mathcal{MG}$,
we characterize the long-term cumulative reward by defining the value function $V_{i,h}^{\pi, P}: \cS \mapsto \mathbb{R}$ (resp.~Q-function $Q_{i,h}^{\pi, P}: \cS \times \ac \mapsto \mathbb{R}$) of the $i$-th agent as follows: for all $(h,s,a)\in [H]\times \cS \times \cA$,
\begin{align}
	 V_{i,h}^{\pi, P}(s) &\coloneqq\mathbb{E}_{\pi,
P}\left[\sum_{t=h}^{H} \gamma^t r_{i,t}\big(s_{t}, \bm{a}_{t}\big)\mid s_{h}=s\right], \nonumber \\
  Q_{i,h}^{\pi, P}(s, \ba)&\coloneqq\mathbb{E}_{\pi,P}\left[\sum_{t=h}^{H} \gamma^t r_{i,t}\big(s_{t}, \bm{a}_{t}\big)\mid s_{h}=s, \ba_h = \ba\right], \label{eq:value-function-defn}
\end{align}
where the expectation is taken over the Markovian trajectory $\{(s_t,\bm{a}_t)\}_{h\leq t\leq H}$ by executing the joint policy $\pi$ under the transition kernel $P$, i.e., $\bm{a}_t\sim \pi_t(\cdot \mymid s_t)$ and $s_{t+1}\sim P(\cdot \mymid s_t, \bm{a}_t)$. 

\paragraph{Best-response policy.} For any given joint policy $\pi$, we employ $\pi_{-i}$ to represent the policies of all agents excluding the $i$-th agent. We define the maximum value function of the $i$-th agent at time step $h$ against the joint policy $\pi_{-i}$ of the other agents as
\begin{equation}
	\label{eq:defn-optimal-V}
	 V_{i,h}^{\star,\pi_{-i}, P}(s)\coloneqq\max_{\pi'_i: \cS \times [H]\rightarrow \Delta(\mathcal{A}_i)} V_{i,h}^{\pi'_i \times \pi_{-i}, P}(s),
\end{equation}
where $\pi'_i \times \pi_{-i}$ represents the joint policy of all agents when the $i$-th agent executes policy $\pi'_i$. It is well-known \citep{filar2012competitive} that there exists at least one Markovian policy,  the {\em best-response policy}, that achieves $V_{i,h}^{\star,\pi_{-i}, P}(s)$ for all $s\in \cS$ and all $h\in[H]$ simultaneously. We denote the best-response policy using
 $\pi_i^{\star, P}\big( \pi_{-i} \big): \cS \times [H] \mapsto \Delta(\mathcal{A}_i)$. 

\paragraph{Solution concepts: equilibria.} In MGs, strategic agents are modeled in a possibly competitive framework and focus on finding some sort of equilibrium strategies. Here, we consider three common types of equilibria --- NE, CE, and CCE for MGs. 
\begin{itemize}
	\item {\em Nash equilibrium (NE).} A product policy $\pi=\pi_1\times \cdots \times \pi_n \in \Delta(\cA_1) \times \Delta(\cA_2) \times \cdots \times \Delta(\cA_n)$ is 
said to be a {\em (mixed-strategy Markov) NE} if 
\begin{equation}
	\text{for all }(s,i)\in\cS\times [n]: ~ V_{i,1}^{\pi, P}(s)=V_{i,1}^{\star,\pi_{-i}, P}(s).
	\label{eq:defn-Nash-E}
\end{equation}
Namely, as long as all players act independently, no player can benefit by unilaterally diverging from its present policy, given the current policies of the opponents.

\item {\em Coarse correlated equilibrium (CCE).}
A joint policy $\pi \in \Delta(\ac)$ is said to be a CCE \citep{moulin1978strategically,aumann1987correlated} if it holds that
\begin{equation}
	\text{for all }(s,i)\in\cS\times [n]: ~ V_{i,1}^{\pi, P}(s) \geq V_{i,1}^{\star,\pi_{-i}, P }(s).
	\label{eq:defn-CCE}
\end{equation}
As a relaxation of NE, CCE also guarantees that no player has incentive to unilaterally deviated from the current policy. The key difference from the NE definition is that it permits policies to be interrelated among players. 

\item {\em Correlated equilibrium (CE).} 
Before proceeding, for each $1 \leq i \leq n$, we define a set of function $f_i\defn \{f_{i,h,s}\}_{h\in[H],s\in\cS}$ with $f_{i,h,s}: \cA_i \mapsto \cA_i$, and denoting $\cF_i$ as the set of possible $f_i$. Armed with this, we can combine such $f_i$ with any joint policy $\pi$ to reach a new policy $f_i \diamond \pi$, where $f_i \diamond \pi$ will choose $(a_1, \ldots, a_{i-1}, f_i(a_i), a_{i+1}, \ldots, a_n)$ when policy $\pi$ selects $(a_1, \ldots, a_{i-1}, a_i, a_{i+1}, \ldots, a_n)$.
With these in place, a joint policy $\pi \in \Delta(\ac)$ is said to be a CE \citep{moulin1978strategically,aumann1987correlated}   if it holds that
\begin{equation}
	\text{for all }(s,i)\in\cS\times [n]: ~V_{i,1}^{\pi, P}(s) \geq \max_{f_i \in \cF_i} V_{i,1}^{f_i \diamond \pi, P }(s).
	\label{eq:defn-CE}
\end{equation}
CE is a also a relaxation of NE, which does not require the joint policy $\pi$ to be a product policy.
\end{itemize}

\section{Distributionally Robust Markov Games}\label{sec:framework-robust-marl}

We consider a robust variant of standard MGs  incorporating environmental uncertainties --- termed {\em distributionally robust Markov games} (\rmg{s}). \rmgs represent a richer class than standard MGs, allowing for different prescribed environmental uncertainty sets as long as they meet a  {\em rectangularity} condition, detailed below.

\subsection{Distributionally robust Markov games}

A {\em distributionally robust multi-agent general-sum Markov game} (\rmg) in the finite-horizon setting  is defined by 
$$\mathcal{MG}_{\mathsf{rob}} = \big\{ \cS, \{\cA_i\}_{1 \le i \le n},\{\cU^{\ror_i}_\rho(P^\no)\}_{1 \le i \le n}, \rew, H \big\}, $$ 
where $\cS, \{\cA_i\}, \rew$, and $H$ are identical to those of standard MGs (see Section~\ref{sec:background}). A notable deviation from standard MGs is that: for $1\leq i\leq n$, instead of assuming a fixed transition kernel, each $i$-th agent anticipates that the transition kernel is allowed to be chosen arbitrarily from a prescribed uncertainty set $\cU^{\ror_i}_\rho(P^\no)$. Here, the uncertainty set $\cU^{\ror_i}_\rho(P^\no)$ is constructed centered on a {\em nominal} kernel $P^\no: \cS\times \ac \mapsto \Delta(\cS)$,  with its size and shape defined by a certain distance metric $\rho$ and a radius parameter $\ror_i>0$. Note that, for generality, to accommodate individual robustness preferences, each agent is permitted to tailor its own uncertainty set $\cU^{\ror_i}_\rho(P^\no)$ by choosing different size $\ror_i$ and even the shape determined by different divergence function $\rho$. Here, we consider the same divergence function for all agents for simplicity. And we focus on the discussion of the transition kernel's uncertainty in this work, it's worth noting that similar uncertainty can also be considered for each agent's reward function.

\paragraph{Uncertainty set with {\em agent-wise $(s,a)$-rectangularity}.}
In the following, we specify the construction of the transition kernel uncertainty sets $\cU_\rho(P^\no) = \{\cU_\rho^{\ror_i}(P^\no)\}_{1 \le i \le n}$ for RMGs.  Drawing inspiration from the {\em rectangularity} condition advocated in robust single-agent RL \citep{iyengar2005robust,wiesemann2013robust,zhou2021finite,shi2023curious}, we consider a multi-agent variant of rectangularity in RMGs --- {\em agent-wise $(s,a)$-rectangularity}.
This condition enables the robust counterpart of Bellman recursions and computational tractability of the problems.
It allows for each agent to independently choose its own uncertainty set that can be decomposed into a product of subsets over each state-action pair.

 In particular, we assume all agents use the same distance metric $\rho$ for their uncertainty sets.\footnote{Generally, each agent can decide their own (possibly different) distance metric for the uncertainty set. We consider the same $\rho$ for simplicity.} Each $i$-th agent can choose their own uncertainty level $\ror_i >0$ independently. With $\rho$ and  $\{\ror_i\}_{1\leq i \leq n}$ in hand, the uncertainty set $\cU_\rho(P^\no) $ of all agents obeying {\em agent-wise $(s,a)$-rectangularity} is mathematically specified as:
\begin{align}
	 \forall i\in[n]:~\cU^{\ror_i}_\rho(P^\no)  & \defn \otimes \; \cU^{\ror_i}(P^\no_{h,s,\ba}) \quad \text{with}\label{eq:sa-rec-defn} \\
	\cU^{\ror_i}_\rho(P^\no_{h,s,\ba})& \defn \left\{ P_{h,s,\ba} \in \Delta (\cS): \rho \left(P_{h,s,\ba}, P^0_{h,s,\ba}\right) \leq \ror_i \right\},  \notag
\end{align}
where $\otimes$ represents the Cartesian product and we denote a vector of the transition kernel $P$ or $P^{\no}$ at any state-action pair $(s,\ba) \in \cS \times \cA$ respectively as
\begin{align}\label{eq:defn-P-sa}
	&P_{h,s,\ba} \defn P_h(\cdot \mymid s, \ba) \in \mathbb{R}^{1\times S}, \qquad P_{h,s, \ba }^\no \defn P^\no_h(\cdot \mymid s,\ba) \in \mathbb{R}^{1\times S}.
\end{align}
Here, the `distance' function $\rho$ for each agent's uncertainty set can be chosen from many candidate functions that measure the difference between two probability vectors,  such as $f$-divergence (including total variation (TV), chi-square, and Kullback-Leibler (KL) divergence) \citep{yang2021towards}, $\ell_q$ norm \citep{clavier2023towards}, and Wasserstein distance \citep{xu2023improved}. In this work, we focus on the uncertainty sets that are constructed using TV distance: 
\begin{align}\label{eq:general-infinite-P}
	\rho_{\mathsf{TV}}\left(P_{h,s,\ba}, P^\no_{h,s,\ba}\right) \defn  \frac{1}{2}\left\|P_{h,s, \ba}- P^0_{h,s, \ba}\right\|_1.
\end{align}

\paragraph{Robust value functions.}
For a \rmg, each agent aims to maximize its own worst-case performance over all possible transition kernels in its own (possibly different) prescribed uncertainty set $\cU^{\ror_i}_\rho(P^\no)$. For any joint policy $\pi \in \Delta(\cA)$, the worst-case performance of the $i$-th agent at time step $h$ can be measured by the {\em robust value function} $V_{i,h}^{\pi,\ror_i}$ and the {\em robust Q-function} $Q_{i,h}^{\pi,\ror_i}$, defined as
\begin{align}
	V_{i,h}^{\pi,\ror_i}(s)& \coloneqq \inf_{P\in \unb^{\ror_i}_\rho(P^{\no})} V_{i,h}^{\pi,P} (s)  \qquad\mbox{and}\qquad  Q_{i,h}^{\pi,\ror_i}(s, \ba) \coloneqq  \inf_{P\in \unb^{\ror_i}_\rho(P^{\no})} Q_{i,h}^{\pi,P} (s, \ba)
\label{eq:value-function-defn-robust}
\end{align}
for all $ (i,h,s,\ba)\in [n] \times [H] \times \cS \times \cA $.
Similar to standard MGs, given a fixed joint policy $\pi_{-i}$ for all agents but the $i$-th agent, by optimizing over  $\pi'_i: \cS \times [H] \rightarrow \Delta(\mathcal{A}_i)$ that is executed independently from $\pi_{-i}$, we can further define the maximum of the robust value function for each agent as follows: for all $(i,h,s) \in [n] \times [H] \times  \cS:$
\begin{align}
	\label{eq:defn-optimal-V}
	   V_{i,h}^{\star,\pi_{-i}, \ror_i}(s) & \coloneqq \max_{\pi'_i: \cS \times [H] \mapsto \Delta(\mathcal{A}_i)} V_{i,h}^{\pi'_i \times \pi_{-i}, \ror_i }(s) \notag  \\
	 & = \max_{\pi'_i: \cS \times [H] \mapsto \Delta(\mathcal{A}_i)} \inf_{P\in \unb^{\ror_i}_\rho(P^{\no})} V_{i,h}^{\pi_i' \times \pi_{-i},P} (s). 
\end{align}
Similar to standard MGs, it can be easily verified that there exists at least one policy \citep[Section A.2]{blanchet2024double}, denoted by $\pi_i^{\star, \ror_i}\big( \pi_{-i} \big): \cS \times [H] \rightarrow \Delta(\mathcal{A}_i)$ and referred to as the {\em robust best-response policy} for the $i$-th agent, 
that can simultaneously attain $V_{i,h}^{\star,\pi_{-i}, \ror_i}(s)$ for all $s\in \cS$ and $h\in[H]$.

\paragraph{Robust Bellman equations.} 
Analogous to standard MGs, \rmgs feature a robust counterpart of the Bellman equation ---  {\em robust Bellman  equation}. 
In particular, the robust value functions $\{V_{i,h}^{\pi, \ror_i}\}$ of \rmgs associated with any joint policy $\pi$  
obey: for all $(i,h,s)\in  [n] \times [H] \times \cS$,
\begin{align}
V_{i,h}^{\pi, \ror_i}(s)  &= \mathbb{E}_{\ba \sim \pi_h(s)} \left[ r_{i,h}(s, \ba) +   \inf_{P\in \unb^{\ror_i}_\rho(P^{\no}_{h,s,\ba})} P V_{i,h+1}^{\pi, \ror_i} \right].\label{eq:bellman-consistency-sa}
\end{align}

We emphasize that the above robust Bellman equation is fundamentally linked to the {\em agent-wise $(s,a)$-rectangularity}  condition (cf.~\eqref{eq:sa-rec-defn}) imposed on the designed uncertainty set. Specifically, this condition decouples the dependency of uncertainty subsets across different agents, each state-action pair, and different time steps, leading to  the Bellman recursive equation.

\subsection{Solution concepts for robust Markov games}

For \rmgs, the games are no longer  $n$-agent games, but become $2n$-agent games between agents and $n$ natural adversaries to choose the worst-case transitions.  
Given the possibly conflicting objectives, finding an equilibrium becomes a core goal for \rmgs.
Below, we introduce three robust variants of widely considered standard solution concepts --- robust NE, robust CE, and robust CCE for any \rmg.
\begin{itemize}
	\item {\em Robust NE.} A product policy $\pi = \pi_1 \times \pi_2 \times \cdots \times \pi_{n}$ is 
said to be a {\em robust NE}  if (cf.~\eqref{eq:defn-Nash-E})
\begin{equation}
	\forall (i,s)\in[n] \times \cS:\quad V_{i,1}^{\pi, \ror_i}(s)=V_{i,1}^{\star,\pi_{-i}, \ror_i}(s).
	\label{eq:defn-robust-Nash-E}
\end{equation}
Robust NE indicates that given the current strategy of the opponents $\pi_{-i}$, when each agent considers the worst-case performance over its own uncertainty set $\unb^{\ror_i}_\rho(P^{\no})$, no player can benefit by unilaterally diverging from its present strategy.

\item {\em Robust CCE.}
  A (possibly correlated) joint policy $\pi \in \cS\times [H] \mapsto \Delta(\ac)$ is said to be a {\em robust CCE}  if it holds that (cf.~\eqref{eq:defn-CCE})
\begin{equation}
	\forall (i,s)\in[n] \times \cS:~ V_{i,1}^{\pi,\ror_i}(s) \geq V_{i,1}^{\star,\pi_{-i}, \ror_i }(s).
	\label{eq:defn-robust-CCE}
\end{equation}

As a relaxation of robust NE, robust CCE also guarantees that no player has incentive to unilaterally deviate from the current policy, where the policies are not necessarily independent among players. 
\item {\em Robust CE.} 
A joint policy $\pi \in \Delta(\ac)$ is said to be a robust CE  if it holds that (cf.~\eqref{eq:defn-CE})
\begin{equation}
	\forall (s,i)\in\cS\times [n]:~ V_{i,1}^{\pi, \ror_i}(s) \geq \max_{f_i \in \cF_i} V_{i,1}^{f_i \diamond \pi, \ror_i }(s).
	\label{eq:defn-robust-CE}
\end{equation}
\end{itemize}

It is known that computing exact robust equilibria is challenging and may not be necessary in practice.  As a result, people usually search for approximate equilibria. Toward this, as a slightly relaxation from \eqref{eq:defn-robust-Nash-E}, a product policy $\pi \in \Delta(\cA_1)  \times \cdots \times \Delta(\cA_n)$ is said to be an {\em $\varepsilon$-robust NE}  if 
\begin{equation}
	\mathsf{gap}_{\mathsf{NE}}(\pi) \defn \max_{s\in \cS, 1 \leq i \leq n} \left\{ V_{i,1}^{\star,\pi_{-i}, \ror_i }(s) - V_{i,1}^{\pi, \ror_i }(s)  \right\} \leq \varepsilon.
	\label{eq:defn-Nash-E-epsilon}
\end{equation}

Similarly, relaxing \eqref{eq:defn-robust-CCE} or \eqref{eq:defn-robust-CE}, a (possibly correlated) joint policy $\pi \in \Delta(\cA) $ is said to be an {\em $\varepsilon$-robust CCE}  if 
\begin{equation}
	\mathsf{gap}_{\mathsf{CCE}}(\pi) \defn \max_{s\in \cS, 1 \leq i \leq n} \left\{ V_{i,1}^{\star,\pi_{-i}, \ror_i }(s) - V_{i,1}^{\pi, \ror_i }(s)  \right\} \leq \varepsilon,
	\label{eq:defn-CCE-epsilon}
\end{equation}
or an {\em $\varepsilon$-robust CE} if 
\begin{align}
	\mathsf{gap}_{\mathsf{CE}}(\pi) & \defn \max_{s\in \cS, 1 \leq i \leq n} \left\{  \max_{f_i \in \cF_i} V_{i,1}^{f_i \diamond \pi, \ror_i }(s) - V_{i,1}^{\pi, \ror_i}(s)  \right\}  \leq \varepsilon.
	\label{eq:defn-Nash-CCE-epsilon}
\end{align}

The existence of robust NE has been verified \citep{blanchet2023double} under general divergence functions for the uncertainty set. Indeed, the robust equilibria defined here can be reduced to the standard equilibria associated with the robust variant of standard payoffs (robust Q-functions), which have been verified obeying $\{\text{NE}\} \subseteq \{\text{CE}\} \subseteq\{\text{CCE}\}$ \citep{roughgarden2010algorithmic}. Therefore, the existence of robust NE directly indicates the existence of robust CE and robust CCE.

\subsection{Non-adaptive sampling from a generative model}

Given the formulation of distributionally robust Markov games, a question of prime interest is how to  learn the robust equilibria without knowing the model exactly in a sample-efficient manner. 
 
\paragraph{Sampling mechanism: a generative model.}
As a widely used sampling mechanism in standard MARL \citep{zhang2020model-based,li2022minimax}, in this paper, we assume  access to a generative model (simulator) \citep{kearns1999finite} and collect samples in a non-adaptive manner. Specifically, for each tuple $(s,\ba,h)\in \cS\times \cA\times [H]$, we collect $N$ independent samples generated based on the true {\em nominal} transition kernel $P^{\no}$:
\begin{align}
	 s_{i,h,s, \ba} \overset{i.i.d}{\sim} P^\no_h(\cdot \mymid s, \ba), \qquad i = 1, 2,\ldots, N.
\end{align}
The total number of samples is thus $N_{\mathsf{all}} = NS\prod_{i=1}^n A_i$.

Armed with the collected dataset from the nominal environment, the goal is to learn a solution among $\varepsilon$-robust-$\{$NE, CCE, CE$\}$ for the game $\mathcal{MG}_{\mathsf{rob}}$ --- w.r.t.~some prescribed uncertainty set $\cU(P^\no)$ around the nominal kernel --- using as few samples as possible.

\section{Algorithm and Theory}
In this and the following sections, we focus on the class of robust MGs with uncertainty set measured by TV distance, namely, the uncertainty set $\cU^{\ror_i}_\rho(\cdot) =\cU^{\ror_i}_{\rho_{\mathsf{TV}}}(\cdot)$ w.r.t the TV distance $\rho = \rho_{\mathsf{TV}}$ defined in \eqref{eq:general-infinite-P}. For convenience, we abbreviate $\cU^{\ror_i}(\cdot) \defn \cU^{\ror_i}_{\rho_{\mathsf{TV}}}(\cdot)$.

\subsection{Distributionally robust Nash value iteration}
We develop a model-based approach tailored to solve robust Markov games, which involves two separate steps. First, we construct an empirical nominal transition kernel $\widehat{P}^\no$ using the collected samples from the generative model. Then armed with $\widehat{P}^\no$, we propose to apply distributionally robust Nash value iteration (\DRNVI) to compute a robust equilibrium solution for all agents.

\paragraph{Nominal model estimation.}
Based on the empirical frequency of state transitions, we estimate the empirical nominal transition kernel $\widehat{P}^\no = \{\widehat{P}_h^\no\}_{h\in[H]}$, where the entries of $\widehat{P}_h^\no \in \mathbb{R}^{S \prod_{i=1}^n A_i \times S}$ at each time step $h$ is constructed as follows: for all $ (h,s, \ba)\in \cS\times \cA$,
\begin{align}
	 \widehat{P}^0_h(s'\mymid s, \ba) \defn  \frac{1}{N} \sum\limits_{i=1}^N \mathds{1} \big\{ s_{i,h,s, \ba} = s' \big\}.
	\label{eq:empirical-P-infinite}
\end{align}
\paragraph{Distributionally robust Nash value iteration (\DRNVI).}  With the empirical nominal kernel $\widehat{P}^\no$ in hand, to compute a robust equilibrium solution, we propose \DRNVI by adapting a model-based algorithm for standard Markov games --- Nash value iteration \citep{liu2021sharp}, summarized in Algorithm~\ref{alg:nash-dro-vi-finite}.

 The process starts from the last time step $h=H$ and proceeds with $h=H-1, H-2,\cdots, 1$. At each time step $h\in[H]$, the robust Q-function can be estimated as  $\widehat{Q}_{i,h}$ (see line~\ref{eq:robust-q-estimate}) as: for all $ (i,h,s,\ba)\in  [n] \times [H] \times \cS\times \cA$,
\begin{align}
 \widehat{Q}_{i,h}(s, \ba) = r_{i,h}(s, \ba) + \inf_{ \cP \in \unb^{\ror_i}(\widehat{P}^{\no}_{h,s,\ba})} \cP \widehat{V}_{i,h+1}. \label{eq:nvi-iteration}
\end{align}
Directly solving \eqref{eq:nvi-iteration} presents significant computational challenges due to the need to optimize over an $S$-dimensional probability simplex, a task whose complexity increases exponentially with the state space size $S$. Fortunately, leveraging strong duality  enables us to solve \eqref{eq:nvi-iteration} equivalently via its dual problem \citep{iyengar2005robust}:  
\begin{align}
&\widehat{Q}_{i,h}(s, \ba) = r_{i,h}(s, \ba) +   \max_{\alpha\in [\min_s \widehat{V}_{i,h+1}(s), \max_s \widehat{V}_{i,h+1}(s)]}  \Big\{ \widehat{P}^{\no}_{h,s,\ba} \left[\widehat{V}_{i,h+1}\right]_{\alpha} - \ror_i \left(\alpha - \min_{s'}\left[\widehat{V}_{i,h+1}\right]_{\alpha}(s') \right) \Big\}  ,  \label{eq:nvi-iteration-dual} 
\end{align} 
where $[V]_{\alpha}$ denotes the clipped version of any  vector $V\in\mathbb{R}^S$ determined by some level $\alpha\geq 0$, namely,
\begin{align}
	[V]_{\alpha}(s) \defn \begin{cases} \alpha, & \text{if } V(s) > \alpha, \\
V(s), & \text{otherwise.}
\end{cases} \label{eq:V-alpha-defn}
\end{align}
With robust Q-function estimates $\{\widehat{Q}_{i,h}\}_{i\in[n]}$ available for all agents at time step $h$, the sub-routine in line~\ref{eq:nash-subroutine} $\mathsf{Equilibrium} \in \mathsf{Compute-}\{ \mathsf{Nash}, \mathsf{CE}, \mathsf{CCE}\}$ represents the algorithm for computing the corresponding  robust-$\{$NE, CE, CCE$\}$, respectively. Note that for the studied \rmgs, a robust-NE/CE/CCE is equivalent to a corresponding NE/CE/CCE associated with the payoff matrices $\{\widehat{Q}_{i,h}\}_{i\in[n]}$. On the computing and learning front of the sub-routine $\mathsf{Equilibrium}(\cdot)$, for a general standard MG, the NE has been proved PPAD-hard to compute \citep{daskalakis2013complexity}, even for two-player matrix games (except for two-player zero-sum games). Notably, even when the non-robust standard MG associated with the nominal transition kernel is a two-player zero-sum game, the corresponding robust MG is  generally not because agents may select different worst-case transition kernels. Conversely, computing CE/CCE is computationally tractable  within polynomial time through linear programming \citep{liu2021sharp}. 

\begin{algorithm}[t]
\begin{algorithmic}[1]
	\STATE \textbf{input:} empirical nominal transition kernel $\widehat{P}^{\no}$; reward function $r$; uncertainty levels $\{\ror_i\}_{i\in[n]}$. \\ 
	\STATE  \textbf{initialization:} $\widehat{Q}_{i,h}(s,a)= 0$, $\widehat{V}_{i,h}(s)=0$ for all $(s,\ba,h) \in \cS\times \cA  \times [H+1]$. \\
   \FOR{$h = H, H-1, \cdots,1$}
		
		\FOR{$i= 1,2,\cdots,n$ and $s\in \cS, \ba\in \cA $}
			
		\STATE	Set $\widehat{Q}_{i,h}(s, \ba)$ according to \eqref{eq:nvi-iteration}. \label{eq:robust-q-estimate}

			\ENDFOR

		\FOR{$s\in \cS$}
		\STATE Get $\pi_{h}(s)=\{\pi_{i,h}(s)\}_{1 \leq i \leq n} \leftarrow  \mathsf{Equilibrium} \left( \{\widehat{Q}_{i,h}(s,\cdot) \}_{1 \leq i \leq n}\right)$. \label{eq:nash-subroutine}
		 
		\STATE	Set $\widehat{V}_{i,h}(s) = \mathbb{E}_{\ba \sim \pi_h}[\widehat{Q}_{i,h}(s,\ba)]$.
		\ENDFOR
\ENDFOR
	
	\STATE \textbf{output:} $\{\widehat{Q}_{i,h}\}$, $\{\widehat{V}_{i,h}\}$, and $\widehat{\pi} = \{\pi_h\}_{1\leq h \leq H}$.
 
\end{algorithmic}
\caption{Distributionally robust equilibrium value iteration (\DRNVI).}
 \label{alg:nash-dro-vi-finite}
\end{algorithm}

\subsection{Sample complexity: upper and lower bounds}\label{sec:theoretical-results}

We now present our main theoretical results regarding the sample complexity of learning robust equilibria of robust Markov games, including an upper bound of \DRNVI (Algorithm~\ref{alg:nash-dro-vi-finite}) and an information-theoretic lower bound.
First, we introduce the finite-sample guarantee for \DRNVI, which is proven in Appendix~\ref{proof:upper-bound}.

\begin{theorem}[Upper bound for \DRNVI]\label{thm:robust-mg-upper-bound} 
	Recall the TV uncertainty set $\cU^{\ror_i}(\cdot) = \cU^{\ror_i}_{\rho_{\mathsf{TV}}}(\cdot)$ defined in \eqref{eq:defn-P-sa}. Consider any $\delta \in (0,1)$ and any \rmg $\mathcal{MG}_{\mathsf{rob}} = \big\{ \cS, \{\cA_i\}_{1 \le i \le n},\{\cU^{\ror_i}(P^\no)\}_{1 \le i \le n}, \rew,  H \big\}$ with $\ror_i \in (0,1]$ for all $i\in[n]$. For any $\varepsilon \leq \sqrt{\min \big\{H,  \frac{1}{\min_{1\leq i\leq n} \ror_i} \big\}}$, Algorithm~\ref{alg:nash-dro-vi-finite} can output any robust equilibrium among $\varepsilon$-robust $\{$NE, CCE, CE$\}$ by executing different subroutine $\mathsf{Equilibrium} \in \mathsf{Compute-}\{ \mathsf{Nash}, \mathsf{CE}, \mathsf{CCE}\}$ in line~\ref{eq:nash-subroutine}. Namely, for some constant $C_1$, we can achieve any of the following results
     \begin{align*}
 \mathsf{gap}_{\mathsf{NE}}(\widehat{\pi}) &\leq \varepsilon, \\
  \mathsf{gap}_{\mathsf{CCE}}(\widehat{\pi}) &\leq \varepsilon, \\
	\mathsf{gap}_{\mathsf{CE}}(\widehat{\pi})& \leq \varepsilon
	\end{align*}
with probability at least $1-\delta$, as long as the total number of samples obeys 
\begin{align*}
	N_{\mathsf{all}} \geq  \frac{ C_1 S H^3\prod_{1\leq i\leq n} A_i   }{  \varepsilon^2}\min \Big\{H,  \frac{1}{\min_{1\leq i\leq n} \ror_i} \Big\} \log\left(\frac{18S \allA nHN}{\delta} \right).
\end{align*}

\end{theorem}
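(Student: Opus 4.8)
The plan is to control the true-game equilibrium gap of $\widehat\pi$ by splitting it into a purely statistical discrepancy between the empirical and true robust value functions plus an empirical-equilibrium term that I will argue is nonpositive. Throughout, write $V_{i,h}^{\pi,\ror_i}$ and $V_{i,h}^{\star,\pi_{-i},\ror_i}$ for the robust (best-response) value functions against the \emph{true} nominal kernel $P^\no$, and $\widehat V_{i,h}^{\pi,\ror_i}$, $\widehat V_{i,h}^{\star,\pi_{-i},\ror_i}$ for their counterparts against the empirical kernel $\widehat{P}^\no$. First I would observe that the updates in Algorithm~\ref{alg:nash-dro-vi-finite} are exactly the robust Bellman consistency recursion \eqref{eq:bellman-consistency-sa} under $\widehat{P}^\no$, so $\widehat V_{i,h}=\widehat V_{i,h}^{\widehat\pi,\ror_i}$, and then show by backward induction on $h$ that $\widehat V_{i,h}^{\star,\widehat\pi_{-i},\ror_i}(s)\le \widehat V_{i,h}(s)$ for all $(i,s)$. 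The inductive step combines the monotonicity of the robust operator $\inf_{P\in\unb^{\ror_i}(\cdot)}P(\cdot)$ in its value argument with the defining stage-wise inequality of the subroutine $\mathsf{Equilibrium}$ (for CCE: $\mathbb{E}_{\ba\sim\pi_h(s)}\widehat Q_{i,h}(s,\ba)\ge \max_{a_i'}\mathbb{E}_{\ba_{-i}\sim\pi_{-i,h}(s)}\widehat Q_{i,h}(s,a_i',\ba_{-i})$, with the analogous CE/NE conditions). This certifies that $\widehat\pi$ is an \emph{exact} equilibrium of the empirical robust game.

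With this in hand, for CCE I would decompose, for each $(i,s)$,
\begin{align}
V_{i,1}^{\star,\widehat\pi_{-i},\ror_i}(s) - V_{i,1}^{\widehat\pi,\ror_i}(s)
&= \underbrace{\big(V_{i,1}^{\star,\widehat\pi_{-i},\ror_i}(s) - \widehat V_{i,1}^{\star,\widehat\pi_{-i},\ror_i}(s)\big)}_{\mathrm{(I)}}
+ \underbrace{\big(\widehat V_{i,1}^{\star,\widehat\pi_{-i},\ror_i}(s) - \widehat V_{i,1}(s)\big)}_{\mathrm{(II)}\,\le\,0} \notag\\
&\quad + \underbrace{\big(\widehat V_{i,1}(s) - V_{i,1}^{\widehat\pi,\ror_i}(s)\big)}_{\mathrm{(III)}} ,
\end{align}
so the gap is at most $|\mathrm{(I)}|+|\mathrm{(III)}|$, both of which are differences of robust value functions evaluated under $\widehat{P}^\no$ versus $P^\no$ (for the best-response configuration, and for the fixed joint policy $\widehat\pi$, respectively). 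The same decomposition applies verbatim to NE and to CE, the latter after replacing the best response by $\max_{f_i\in\cF_i}$.

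The core is therefore a uniform bound on $\|\widehat V_{i,h}^{\pi,\ror_i}-V_{i,h}^{\pi,\ror_i}\|_\infty$ and its best-response analogue, which I would prove by backward induction on $h$. Adding and subtracting $\inf_{P\in\unb^{\ror_i}(\widehat{P}^\no_{h,s,\ba})}PV_{i,h+1}$ splits the per-step error into a \emph{propagation} part, which the monotonicity and $\|\cdot\|_\infty$-nonexpansiveness of the robust operator bound by $\|\widehat V_{i,h+1}-V_{i,h+1}\|_\infty$, and a \emph{statistical} part $\inf_{P\in\unb^{\ror_i}(\widehat{P}^\no_{h,s,\ba})}PV_{i,h+1}-\inf_{P\in\unb^{\ror_i}(P^\no_{h,s,\ba})}PV_{i,h+1}$. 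Via the strong-duality form \eqref{eq:nvi-iteration-dual}, the statistical part is at most $\max_\alpha|(\widehat{P}^\no_{h,s,\ba}-P^\no_{h,s,\ba})[V_{i,h+1}]_\alpha|$. Since the generative-model samples are independent across time steps, $V_{i,h+1}$ depends only on data at steps $\ge h+1$ and is thus independent of $\widehat{P}^\no_{h,s,\ba}$; conditioning on the future-step samples freezes $[V_{i,h+1}]_\alpha$ into a fixed vector, so a Bernstein bound together with a union bound over a scalar $\alpha$-net of cardinality polynomial in $N$ and over the $SH\,\allA\, n$ tuples $(h,s,\ba,i)$ controls it, producing the $\log(18S\,\allA\, nHN/\delta)$ factor.

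I expect the main obstacle to be the distinctively multi-agent coupling hidden in term $\mathrm{(I)}$: the best-response recursion for agent $i$ averages the per-action statistical error over $\ba_{-i}\sim\widehat\pi_{-i,h}(s)$, and this equilibrium policy $\widehat\pi_{-i,h}$ is itself a function of the \emph{same} step-$h$ samples defining $\widehat{P}^\no_{h,s,\ba}$, so the conditioning argument no longer decouples them. The plan to break this within-step dependence is to bound the $\widehat\pi_{-i,h}(s)$-average by the maximum over $\ba_{-i}\in\cA_{-i}$ (valid since $\widehat\pi_{-i,h}(s)$ is a probability vector) and then take a uniform bound over all joint actions, which is what forces the $\allA$ dependence. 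Finally, to obtain the $\min\{H,1/\min_i\ror_i\}$ factor rather than a crude $H^2$, I would push the per-step statistical bound through a Bernstein analysis using a variance bound for the clipped robust value $[V_{i,h+1}]_\alpha$ under TV uncertainty: telescoping the variance terms over $h$ yields the $1/\ror_i$ scaling, a parallel Hoeffding bound yields the $H$ scaling, and taking the minimum (using the stated range $\varepsilon\le\sqrt{\min\{H,1/\min_i\ror_i\}}$ to discard higher-order terms) and substituting $N=N_{\mathsf{all}}/(S\,\allA)$ gives the claimed sample size.
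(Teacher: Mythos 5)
Your overall architecture coincides with the paper's: your empirical-equilibrium certification is the paper's Lemma~\ref{lem:algo-output-equilibrium}, your three-term decomposition with $\mathrm{(II)}\le 0$ is exactly the paper's Step~1 (the paper writes term $\mathrm{(I)}$ via the true-game best response $\widetilde{\pi}_i^\star\times\widehat{\pi}_{-i}$, which upper bounds your best-response difference, so the two are interchangeable), and your statistical core---strong duality \eqref{eq:nvi-iteration-dual}, Bernstein plus a scalar $\alpha$-net, conditioning on future-step samples so that $\widehat{V}_{i,h+1}^{\pi,\ror_i}$ is frozen, and a uniform bound over all $(s,\ba)$ to neutralize the within-step dependence of $\widehat{\pi}_h$ on step-$h$ samples---is the paper's Lemma~\ref{lemma:tv-dro-b-bound-star-marl}. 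However, there is one genuine gap, and it sits exactly where the claimed rate is won or lost.

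You propagate the error in sup-norm: $\|\widehat V_{i,h}^{\pi,\ror_i}-V_{i,h}^{\pi,\ror_i}\|_\infty \le \|\widehat V_{i,h+1}^{\pi,\ror_i}-V_{i,h+1}^{\pi,\ror_i}\|_\infty + \max_{s,\ba}(\text{stat error at }h)$, and then hope to ``telescope the variance terms over $h$.'' These two steps are incompatible. What telescopes (law of total variance, as in the paper's Lemmas~\ref{lem:key-lemma-reduce-H} and \ref{lem:key-lemma-reduce-H-2}) is the \emph{occupancy-weighted} sum $\sum_{j}\langle d_h^j,\mathsf{Var}(\cdot)\rangle$, where $d_h^j = e_s^\top\prod_{k}\Pv_{i,k}^{\pi,\widehat V}$ are the distributions generated by the policy and the worst-case kernels; once you collapse to a per-step maximum over $(s,\ba)$, the sum $\sum_h \max_{s,\ba}\sqrt{\mathsf{Var}_{h,s,\ba}}$ can genuinely be of order $H\cdot\min\{H,1/\ror_i\}$ (every step can contain some state-action pair whose next-state value spread equals the full range), and no cancellation is available. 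Your plan therefore yields $N\gtrsim H^2\min\{H,1/\min_i\ror_i\}^2\log(\cdot)/\varepsilon^2$ per $(s,\ba,h)$, i.e.\ a total sample complexity with $\min\{H,1/\min_i\ror_i\}^2$---worse than the theorem by a factor as large as $H$. The fix is the one the paper implements: keep the recursion in matrix-product form, $V_{i,h}^{\pi,\ror_i}-\widehat V_{i,h}^{\pi,\ror_i}\le \sum_{j=h}^H\big(\prod_{k=h}^{j-1}\Pv_{i,k}^{\pi,\widehat V}\big)a_{i,j}^\pi$ (and its mirror with $\Pv_{i,k}^{\pi,V}$), apply Cauchy--Schwarz to get $\sum_j\langle d_h^j,\sqrt{\mathsf{Var}}\rangle\le\sqrt{H\sum_j\langle d_h^j,\mathsf{Var}\rangle}$, and only then invoke the telescoping bound $3H\cdot\min\{1/\ror_i,H\}$ together with the range Lemma~\ref{lemma:pnorm-key-value-range}. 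Note also that in the direction $\widehat V - V$ this weighted analysis forces a variance-mismatch term $\mathsf{Var}_{\Pv_j^{\pi}}(\widehat V - V)$, which the paper resolves with a self-bounding inequality in $\max_j\|\widehat V_{i,j+1}^{\pi,\ror_i}-V_{i,j+1}^{\pi,\ror_i}\|_\infty$ absorbed by taking $N\gtrsim H^2\log(\cdot)$; your sketch would need this step as well.
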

Before delving into the implications of  Theorem~\ref{thm:robust-mg-upper-bound}, we provide a lower bound for solving robust Markov games. The proof is provided in Appendix~\ref{proof:thm:robust-mg-lower-bound}.
\begin{theorem}[Lower bound for solving robust MGs]\label{thm:robust-mg-lower-bound}
Consider any tuple $\big\{ S, \{A_i\}_{1 \le i \le n},\{\ror_i\}_{1 \le i \le n}, H \big\}$ obeying $\ror_i \in (0, 1 - c_0]$ with $0 <c_0 \leq \frac{1}{4}$ being any small enough positive constant, and $H > 16 \log2$. Let
\begin{align}
 \varepsilon \leq \begin{cases} \frac{c_2}{H}, &\text{if } \ror_1\leq \frac{c_2}{2H}, \\
    1 & \text{otherwise}
    \end{cases}
\end{align}
for any $c_2 \leq \frac{1}{4}$.
 We can construct a set of \rmgs  --- denoted as $\mathcal{M} = \{\mathcal{MG}_i\}_{i\in [I]}$, such that for any dataset with in total $N_{\mathsf{all}}$ independent samples over all state-action pairs generated from the nominal environment (for any game $\mathcal{MG}_i \in \cM$): one has
	\begin{align}
\inf_{\widehat{\pi}}\max_{\mathcal{MG}_i\in \cM} \left\{ \mathbb{P}_{\mathcal{MG}_i}\big(  \mathsf{gap}_{\mathsf{NE}}(\widehat{\pi}) >\varepsilon\big) \right\} &\geq\frac{1}{8}, \notag \\
	\inf_{\widehat{\pi}}\max_{\mathcal{MG}_i\in \cM} \left\{ \mathbb{P}_{\mathcal{MG}_i}\big(  \mathsf{gap}_{\mathsf{CCE}}(\widehat{\pi}) >\varepsilon\big) \right\} &\geq\frac{1}{8},  \\
	\inf_{\widehat{\pi}}\max_{\mathcal{MG}_i\in \cM} \left\{ \mathbb{P}_{\mathcal{MG}_i}\big(  \mathsf{gap}_{\mathsf{CE}}(\widehat{\pi}) >\varepsilon\big) \right\} &\geq\frac{1}{8}, \notag
\end{align}
provided that
\begin{align}\label{eq:lower-bound}
N_{\mathsf{all}} \leq \frac{C_2  S H^3  \max_{1\leq i\leq n} A_i    }{ \varepsilon^2} \min \Big\{H,  \frac{1}{\min_{1 \leq i \leq n} \ror_i  } \Big\}.
\end{align}
Here, $C_2$ is some small enough constant, the infimum is taken over all estimators $\widehat{\pi}$, and $\mathbb{P}_{\mathcal{MG}_i}$ denotes the probability
when the game is $\mathcal{MG}_i$ for all $\mathcal{MG}_i \in \cM$.
\end{theorem}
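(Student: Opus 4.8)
The plan is to prove Theorem~\ref{thm:robust-mg-lower-bound} through a two-point (Le~Cam) / Assouad-type information-theoretic argument applied to a carefully engineered family $\cM$ of \rmgs whose hardness stems from the interaction between the robustness radius and the horizon. First I would reduce the multi-agent problem to a single-agent robust RL instance: construct every $\mathcal{MG}_i\in\cM$ so that a single designated agent $i^\star$ faces a genuine decision while the transition kernel and rewards depend only on $a_{i^\star}$, rendering the remaining agents' actions irrelevant. On such instances the robust best response of every dummy agent is trivially attained, so $\mathsf{gap}_{\mathsf{NE}}$, $\mathsf{gap}_{\mathsf{CCE}}$, and $\mathsf{gap}_{\mathsf{CE}}$ all collapse to the robust suboptimality of agent $i^\star$'s marginal policy---this is exactly why one construction can be simultaneously hard for all three solution concepts. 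The sampling accounting is the key multi-agent subtlety: the generative model draws $N$ i.i.d.\ samples per joint profile $(s,\bm{a},h)$, and since transitions ignore the dummy coordinates, all $\prod_{j\ne i^\star}A_j$ profiles sharing a fixed $a_{i^\star}$ inform the same transition parameter. Hence the effective per-parameter sample size is $\widetilde N=N\prod_{j\ne i^\star}A_j$, and the total budget decomposes as $N_{\mathsf{all}}=S\,A_{i^\star}\,\widetilde N$.

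Second, I would build the hard single-agent instance. Each state gadget offers agent $i^\star$ a choice among its actions; under the nominal kernel, action $a$ routes to a reward-accumulating ``good'' component with probability $p_a$ tuned near a common threshold $p$, with exactly one action carrying a slightly elevated $p_a=p+\Delta$. The crucial quantitative ingredient is the robust value of the good component: analyzing the clipped robust Bellman recursion in \eqref{eq:nvi-iteration-dual}, the worst-case kernel diverts up to $\ror_{i^\star}$ probability mass away from the reward at each step, so the accumulated robust value saturates as $\sum_t (1-\ror_{i^\star})^t\asymp \min\{H,1/\ror_{i^\star}\}$ rather than growing linearly in $H$. Consequently a $\Delta$-perturbation of the nominal transition induces a robust value gap of order $\Delta\cdot\min\{H,1/\ror_{i^\star}\}$, and a layered (recurrent) chain of such gadgets with $p$ calibrated near the boundary supplies the remaining $H$-powers through the Bernoulli variance $p(1-p)$, yielding the full $H^3\min\{H,1/\ror_{i^\star}\}$ dependence. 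I would treat the two regimes separately to match the theorem's $\varepsilon$-condition: $\ror_1\le c_2/(2H)$ with $\varepsilon\le c_2/H$ (where $\min\{H,1/\ror_1\}=H$), and the complementary large-$\ror$ regime where the saturation level $1/\ror$ is active.

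Third, with the value gap calibrated so that choosing the wrong action costs $\Theta(\varepsilon)$, I would invoke the information-theoretic bound. Neighboring instances differ only in which action carries $p+\Delta$, so the observed sample laws are products of Bernoulli-type distributions whose pairwise divergence is $\mathsf{KL}\asymp \Delta^2/\big(p(1-p)\big)$ per parameter, and the total testing information is $\widetilde N\cdot\mathsf{KL}$. Keeping $\widetilde N\cdot\mathsf{KL}\lesssim 1$ forces the minimax testing error above a constant (Le~Cam over two points, or Assouad/Fano to couple the $S$ states and the $A_{i^\star}$ actions simultaneously), so any estimator $\widehat\pi$ misidentifies the optimal action---hence incurs gap $>\varepsilon$---on some $\mathcal{MG}_i\in\cM$ with probability at least $\tfrac{1}{8}$. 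Substituting $\Delta\asymp\varepsilon/\min\{H,1/\ror_{i^\star}\}$ (with the chain contributing the extra $H$-powers) into $\widetilde N\lesssim 1/\mathsf{KL}$ and using $N_{\mathsf{all}}=S\,A_{i^\star}\,\widetilde N$ yields the threshold \eqref{eq:lower-bound}.

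The main obstacle is twofold. The first difficulty is the non-asymptotic control of the robust value's sensitivity to the nominal transition probability in the second step, so that it scales exactly as $\min\{H,1/\ror_{i^\star}\}$ and, together with the layered construction and the boundary calibration of $p$, delivers a tight $H^3\min\{H,1/\ror\}$ in \emph{both} the small-$\ror$ and large-$\ror$ regimes; this requires a delicate analysis of the clipped recursion with the penalty $\ror_i\big(\alpha-\min_{s'}[V]_\alpha(s')\big)$ propagated across the horizon. The second, distinctly multi-agent difficulty is ensuring the two problem-dependent factors---the action-space factor $\max_{1\le i\le n}A_i$ and the robustness factor tied to $\min_{1\le i\le n}\ror_i$---combine \emph{multiplicatively} as claimed, rather than degrading to a per-agent maximum; this forces a careful choice of the active agent and gadget geometry that couples a large action space with the minimal-radius saturation. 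It is precisely this coupling, absent in standard-MG hard instances (which lack robustness saturation) and in single-agent robust RL hard instances (which lack the shared-generative-model sample accounting), that makes the construction new.
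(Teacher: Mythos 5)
Your high-level skeleton does match the paper's: reduce to a single-agent robust MDP by making all agents but one into dummies (which is precisely why one construction simultaneously handles robust NE, CCE, and CE), split the analysis into the regimes $\ror_1\leq \tfrac{c_2}{2H}$ and $\ror_1>\tfrac{c_2}{2H}$, exploit the saturation of the robust value range at $\min\{H,1/\ror\}$, bound the per-sample KL divergence by $\Delta^2/\big(p(1-p)\big)$ with $p\asymp 1/H$ (resp.\ $p\asymp\ror$), and close with an information-theoretic testing bound; your sample-sharing accounting across dummy action profiles is even slightly more careful than the paper's.

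There is, however, a genuine gap in the horizon dependence, located exactly where you write ``with the chain contributing the extra $H$-powers.'' You commit to the calibration $\Delta\asymp\varepsilon/\min\{H,1/\ror_{i^\star}\}$ (one wrong action costs $\varepsilon$) and to a test over $\asymp S\,A_{i^\star}$ hypotheses with budget $\widetilde{N}\cdot\mathsf{KL}\lesssim 1$. Plugging in your own quantities gives $\widetilde{N}\lesssim p(1-p)/\Delta^2\asymp p\,\min\{H,1/\ror\}^2/\varepsilon^2$, hence in the small-$\ror$ regime $N_{\mathsf{all}}\lesssim S\max_i A_i\cdot H/\varepsilon^2$ (at best $S\max_iA_i\cdot H^2/\varepsilon^2$ if one also accounts for samples being diluted over the $H$ non-stationary steps) --- short of the claimed $S\max_iA_i\,H^4/\varepsilon^2$ by at least a factor of $H^2$, and similarly short in the large-$\ror$ regime. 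The paper's device for recovering these two powers of $H$ is a horizon-indexed hypothesis class: a Gilbert--Varshamov packing $\Theta\subseteq\{0,1\}^H$ with $|\Theta|\geq e^{H/8}$ and pairwise Hamming separation $H/8$, where $\theta_h$ encodes which action is elevated at step $h$. This buys two things at once: (i) any policy that is $\varepsilon$-accurate must agree with $\theta$ on all but $H/8$ steps, and since erring at step $h$ costs $\asymp\Delta\min\{H,1/\ror\}\tfrac{H-h+1}{H}$, the accumulated loss over $\Omega(H)$ erroneous steps is $\Omega\big(\Delta H\min\{H,1/\ror\}\big)$, so one may take $\Delta\asymp\varepsilon/\big(H\min\{H,1/\ror\}\big)$, shrinking the per-sample KL by $H^2$; and (ii) Fano's denominator becomes $\log|\Theta|\asymp H$ rather than $O(\log(SA))$. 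Net of the fact that the hypothesis $\theta$ is informed by samples from all $H$ steps, this yields exactly the missing $H^2$. Without this packing over the horizon (or an equivalent Assouad-type argument over the $H$ per-step coordinates), the plan as stated cannot reach $H^3\min\{H,1/\ror\}$; your ``layered chain'' gestures in this direction but supplies no mechanism, and the explicit choices you do make are quantitatively inconsistent with the target bound.
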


We now highlight several key implications and comparisons that follow from the above results.

\paragraph{Near-optimal sample complexity for RMGs.}
Theorem~\ref{thm:robust-mg-upper-bound} shows that the proposed model-based algorithm \DRNVI can achieve any robust solution among $\varepsilon$-robust $\{$NE, CCE, CE$\}$ when the total number of samples exceeds the order of 
\begin{align}
\widetilde{O}\bigg( \frac{ S H^3\prod_{1\leq i\leq n} A_i    }{  \varepsilon^2} \min \Big\{H,   \frac{1}{\min_{1 \leq i \leq n} \ror_i  } \Big\} \bigg). \label{eq:upper-bound-order}
\end{align}
Combining this with the lower bound in \eqref{eq:lower-bound} of Theorem~\ref{thm:robust-mg-lower-bound} confirms that the sample complexity of \DRNVI is optimal with respect to many salient factors, including $\varepsilon, S,H, \{\ror_i\}_{1\leq i \leq n}$. 
To the best of our knowledge, this is the first near-optimal sample complexity upper bound for solving robust MGs. As illustrated in Figure~\ref{fig:compare-results}, it uncovers that the sample requirement of \DRNVI depends on all agents' uncertainty levels $\{\ror_i\}$ and is inversely proportional to $\min_{i\in[n]} \ror_i$ when $\min_{i\in[n]} \ror_i \gtrsim 1/H$. Furthermore, the sample complexity of  \DRNVI (Theorem~\ref{thm:robust-mg-upper-bound}) significantly improve upon the prior art  $ \widetilde{O} \big(S^4 \left(\prod_{i=1}^n A_i\right)^3 H^4  /  \varepsilon^2 \big)$ \citep{blanchet2023double}.

 \begin{figure}[t]
\centering
	\includegraphics[width=0.6\linewidth]{./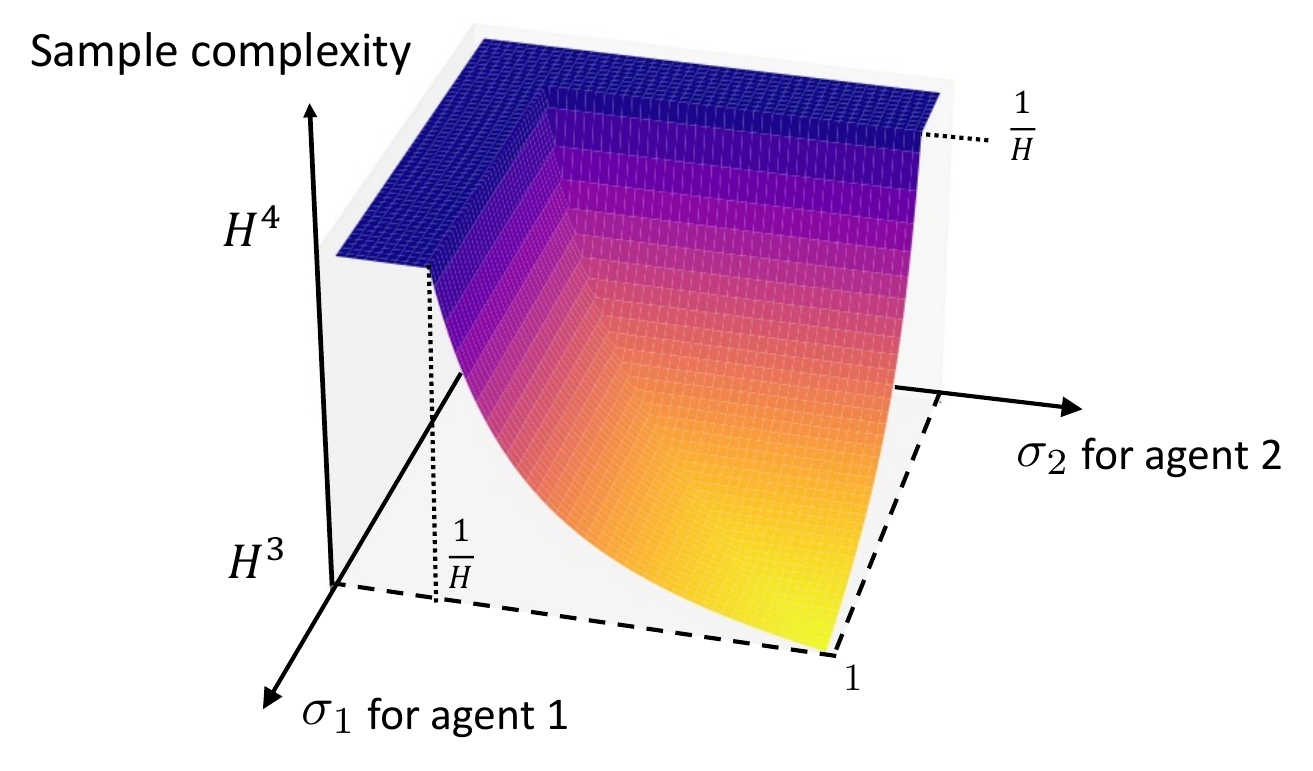} 
 \caption{Illustration of the sample complexity of \DRNVI with respect to the uncertainty levels $\ror_1$ and $\ror_2$ for two-player RMGs, where we only highlight the dependency with respect to the horizon length $H$.}
 \label{fig:compare-results}
 \end{figure}
 
\paragraph{Minimax-optimal sample complexity for single-agent RMDP.} We observe that when the size of the action space reduces to one except one agent, i.e. $A_2 = A_3 = \cdots = A_n = 1$, the robust MG simplifies to a single-agent robust Markov decision process (known as RMDP) \citep{iyengar2005robust}. Consequently, the upper bound of (cf.~\eqref{eq:upper-bound-order}) indicates that a simplified \DRNVI learns an $\varepsilon$-optimal policy for the RMDP associated with the first agent as soon as the sample complexity is on the order of 
\begin{align}
    \widetilde{O}\bigg( \frac{ S  A_1 H^3   }{  \varepsilon^2} \min \Big\{H,   \frac{1}{\ror_1  } \Big\} \bigg),
\end{align}
which is minimax-optimal in view of the lower bound (cf. \eqref{eq:lower-bound} of Theorem~\ref{thm:robust-mg-lower-bound}). To the best of our knowledge, these findings introduce the first minimax-optimal sample complexity for RMDPs in the finite-horizon setting, complementary to the infinite-horizon result established in \citet{shi2023curious}.

\paragraph{Benchmarking with standard MGs under non-adaptive sampling.}
Note that \DRNVI is based on a non-adaptive sampling mechanism from the generative model. Focusing on the same sampling mechanism, we compare the sample complexity of \DRNVI for solving robust MGs with the state-of-the-art approach (model-based \textsf{NVI}) \citep{zhang2020marl,liu2021sharp} for solving standard MGs as below\footnote{\citet{zhang2020marl} considered a two-player zero-sum standard MGs in the infinite-horizon setting. \citet{liu2021sharp} considered both two-player zero-sum and multi-player general sum standard MGs in online setting. We show the best possible outcomes after transferring into our settings}:
\begin{align}
  & \text{Standard MGs (by \textsf{NVI}):  } \qquad \qquad \text{Robust MGs (by our \DRNVI in Theorem~\ref{thm:robust-mg-upper-bound}): } \notag \\
   &\quad \widetilde{O} \bigg(\frac{S\prod_{i=1}^n A_i H^4}{  \varepsilon^2} \bigg)   \qquad \quad \qquad \qquad \begin{cases}
\widetilde{O} \Big(\frac{S\prod_{i=1}^n A_i H^4}{  \varepsilon^2} \Big) & \text{ if } 0 < \min\limits_{1\leq i\leq n} \ror_i \lesssim \frac{1}{H}\\
\widetilde{O}  \Big(\frac{S\prod_{i=1}^n A_i H^3}{ 
  \varepsilon^2 \min_{1 \leq i \leq n} \ror_i }  \Big) & \text{ if } \frac{1}{H} \lesssim \min\limits_{1 \leq i \leq n} \ror_i  <1 
  \end{cases}. \label{eq:compare-standard-robust-2}
\end{align}
It shows that  \DRNVI achieves enhanced robustness against model uncertainty in comparison to the prior art \textsf{NVI} for standard MGs, using the same or even sometimes fewer number of samples ($ \min_{1 \leq i \leq n} \ror_i \gtrsim 1/H$). In particular, as illustrated in Figure~\ref{fig:compare-results},
 \begin{itemize}
 	\item {\em When $ 0< \min_{1 \leq i \leq n} \ror_i  \lesssim \frac{1}{H}$}: the sample complexity dependency of \DRNVI on $H$ matches that of \textsf{NVI} in the order of $H^4$.
 	\item {\em When $\min_{1 \leq i \leq n} \ror_i  \gtrsim \frac{1}{H}$}: \DRNVI's sample complexity decreases towards $H^3$ as $\min_{1\leq i \leq n} \ror_i$ increases, which improves upon the sample complexity of \textsf{NVI} for standard MGs by a factor of $H \min_{1\leq i \leq n} \ror_i $ that goes to $H$ when $\min_{1\leq i \leq n} \ror_i = O(1)$. 
 	\end{itemize}

\paragraph{Technical challenges and insights.}
Compared to robust single-agent RL, robust MARL introduces complex statistical dependencies due to game-theoretical interactions between multiple agents and their natural adversaries to choose the worst-case transitions for each agent. Additionally, robust MGs are more intricate than standard MGs since the agents' payoffs become highly nonlinear without closed form, in contrast to being  linear  in standard MGs. To mitigate these challenges, we carefully control the statistical errors and exploit technical tools from distributionally robust optimization to achieve a near-optimal upper bound. Additionally, note that the established lower bound (Theorem~\ref{thm:robust-mg-lower-bound}) is the first information-theoretic lower bound for solving robust MGs, which is achieved by creating a new class of hard instances for the tightness with respect to $H$ and uncertainty levels $\{\ror_i\}_{1\leq i\leq n}$.

\section{Related Works}\label{sec:relatd-work}
In this section, we discuss a non-exhaustive set of related works, limiting our discussions primarily to provable RL algorithms in the tabular setting, which are most related to this paper.

\paragraph{Finite-sample studies of standard Markov games.}
Multi-agent reinforcement learning (MARL), originated from the seminal work \citep{littman1994markov}, has been widely studied under the framework of standard Markov games \citep{shapley1953stochastic}; see \citet{busoniu2008comprehensive,zhang2021multi,oroojlooy2023review} for detailed reviews. There has been no shortage of provably convergent MARL algorithms with asymptotic guarantees \citep{littman1996generalized,littman2001friend}.

A line of recent efforts have concentrated on understanding and developing algorithms for standard MGs with non-asymptotic guarantees (finite-sample analysis). Within this field, Nash equilibrium (NE) is arguably one of the most compelling solution concepts for standard MGs. Research on calculating NE primarily focuses on an important basic class: standard two-player zero-sum MGs \citep{bai2020provable,chen2022almost,mao2022provably,wei2017online,tian2021online,cui2022offline,cui2022provably,zhong2022pessimistic,jia2019feature,yang2022t,yan2022model,dou2022gap,wei2021last}. This focus arises because computing NEs in scenarios beyond the standard two-player zero-sum MGs is generally computationally intractable (i.e., PPAD-complete) \citep{daskalakis2013complexity,daskalakis2009complexity}.

For discounted infinite-horizon two-player zero-sum Markov games, the state-of-the-art sample complexity for learning NE \citep{zhang2020model} remains suboptimal due to the "curse of multiple agents" issue \citep{zhang2020model}. In contrast, for episodic finite-horizon two-player zero-sum Markov games standard MGs, \citet{bai2020near,jin2021v,li2022minimax} have overcome this curse, progressively achieving minimax-optimal sample complexity in the order of $O(S \max_{1\leq i \leq n} A_i H^4/\varepsilon^2)$. Besides NE, \citet{jin2021v,daskalakis2022complexity,mao2022provably,song2021can,li2022minimax,liu2021sharp} have extended this achievement to other computationally tractable solution concepts (e.g., CE/CCE) in general-sum multi-player MGs.
Focusing on the same non-adaptive sampling mechanism considered in this work, the sample complexity for learning NE/CE/CCE in standard MGs with the state-of-the-art approaches \citep{zhang2020model,liu2021sharp} still suffers from the curse of multiple agents, calculated as $O(S \prod_{1\leq i \leq n} A_i H^4/\varepsilon^2)$.

\paragraph{Robustness in MARL.}
Despite significant advances in standard MARL, current algorithms may fail dramatically due to perturbations or uncertainties in game components, resulting in significantly deviated equilibrium, as illustrated in Figure~\ref{fig:intro}. 
A growing body of research is now addressing the robustness of MARL algorithms against uncertainties in various components of Markov games, such as state \citep{han2022solution,he2023robust,zhou2023robustness,zhang2023safe}, environment (reward and transition kernel), the type of agents \citep{zhang2021robust}, or other agents' policies  \citep{li2019robust,kannan2023smart}; see \citet{vial2022robust} for a recent review. 

This work considers the robustness against environmental uncertainty, adopting  distributionally robust optimization (DRO) that has primarily been investigated in the context of supervised learning \citep{rahimian2019distributionally,gao2020finite,bertsimas2018data,duchi2018learning,blanchet2019quantifying}. Applying DRO for single-agent RL \citep{iyengar2005robust} to handle model uncertainty has garnered significant attention. When turning to MARL, the problem is conceptualized as robust Markov games within the DRO framework, an area that remains relatively underexplored with only a few provable algorithms developed \citep{zhang2020robust,kardecs2011discounted,ma2023decentralized,blanchet2023double}. Notably, \citet{kardecs2011discounted} verifies the existence of Nash equilibrium for robust Markov games under mild assumptions; \citet{zhang2020robust} derives asymptotic convergence for a Q-learning type algorithm under certain conditions;  \citet{ma2023decentralized,blanchet2023double} are the most related works that provide algorithms with finite-sample guarantees for various types of uncertainty set. Especially, \citet{ma2023decentralized} considers a restricted uncertainty level that could fail to bring robustness to MARL in certain scenarios. In particular, as the required accuracy level ($\varepsilon$ goes to zero or the robust MGs has a small minimal positive transition probabilities ($p_\text{min} \rightarrow 0$), the required uncertainty level becomes quite restrictive (obeying $\ror_i \leq \max\{\frac{\varepsilon}{SH^2}, \frac{p_\text{min}}{H}\}$ for all $i\in[n]$) --- potentially reducing robust MARL to standard MARL and failing to maintain desired robustness.

\paragraph{Single-agent distributionally robust RL (robust MDPs).}
For single-agent RL, considering robustness to model uncertainty using DRO framework --- i.e., distributionally robust dynamic programming and robust MDPs --- has gained significant attention across both theoretical and practical domains \citep{iyengar2005robust,xu2012distributionally,wolff2012robust,kaufman2013robust,ho2018fast,smirnova2019distributionally,ho2021partial,goyal2022robust,derman2020distributional,tamar2014scaling,badrinath2021robust,roy2017reinforcement,derman2018soft,mankowitz2019robust}.  Recently, a substantial body of work has been dedicated to exploring the finite-sample performance of provable robust single-agent RL algorithms,  where different sampling mechanisms, diverse divergence function of the uncertainty set, and other related problems/issues has been investigated a lot \citep{yang2021towards,panaganti2021sample,zhou2021finite,shi2022distributionally,wang2023finite,blanchet2023double,liu2022distributionally,wang2023sample,liang2023single,shi2023curious,wang2021online,xu2023improved,dong2022online,badrinath2021robust,ramesh2023distributionally,panaganti2022robust,ma2022distributionally,wang2023foundation,li2022first,kumar2023policy,clavier2023towards,yang2023avoiding,zhang2023regularized,li2023first,he2024sample}.

Among the studies of robust MDPs, those particularly relevant to this paper employ the uncertainty set using total variation (TV) distance in a tabular setting \citep{yang2021towards,panaganti2021sample,xu2023improved,dong2022online,liu2024distributionally}. It has been established that solving robust MDPs requires no more samples than solving standard MDPs in terms of the sample requirement \citep{shi2023curious} with a generative model. However, robust MARL involves additional complexities compared to robust single-agent RL. It remains an open question whether the findings from robust MDPs can be generalized to robust MARL, which includes more technical challenges and strategic interactions.  Our work takes a step towards the question, confirming that similar phenomena apply in robust MARL, albeit with increased difficulties due to the multi-agent dynamics.

\paragraph{RL with a generative model.}
Access to a generative model (or simulator) serves as a fundamental and idealistic sampling protocol that has been widely used to study finite-sample guarantees for diverse types of RL algorithms, such as various model-based, model-free, and policy-based algorithms
\citep{kearns2002sparse, agarwal2020model,azar2013minimax,li2020breaking, sidford2018near, wainwright2019stochastic, li2023q, kakade2003sample,pananjady2020instance, khamaru2020temporal,even2003learning,beck2012error,zanette2019almost,yang2019sample,woo2023blessing}. This work follows this fundamental protocol with a non-adaptive sampling mechanism to understand and design algorithms for robust Markov games. 
Besides generative model, there also exist other sampling protocols that involve more realistic scenarios such as online exploration setting \citep{dong2019q,zhang2020reinforcement,zhang2020model,jafarnia2020model,liu2020gamma,yang2021q,zhang2023settling,li2021breaking} or offline setting \citep{xie2021policy,rashidinejad2021bridging,jin2021pessimism,yin2021towards,yan2022efficacy,uehara2021pessimistic,woo2024federated,shi2022pessimistic,li2024settling}, which are interesting directions in the future.

\section{Conclusion}
Providing robustness guarantees is a pressing need for RL, one that is especially crucial in multi-agent RL (MARL) since game-theoretical interactions between agents bring in extra instability.
We address the vulnerability of MARL to environmental uncertainty by focusing on robust Markov games (RMGs) that consider robustness against worst-case distribution shifts of the shared environment. We design a provable sample-efficient model-based algorithm (\DRNVI) with a finite-sample complexity guarantee. In addition, we provide a lower bound for solving RMGs, which highlights that \DRNVI has near-optimal sample complexity with respect to the size of the state space, the target accuracy, and the horizon length. To the best of our knowledge, this is the first algorithm with near-optimal sample complexity for RMGs. Our work opens up interesting future directions for robust MARL including but not limited to taming the curse of multi-agents and studying other divergence functions for the uncertainty set.



\section*{Acknowledgement}

The work of L. Shi is supported in part by the Resnick Institute and Computing, Data, and Society Postdoctoral Fellowship at California Institute of Technology. The work of Y. Chi is supported in part by the grants ONR N00014-19-1-2404 and NSF CCF-2106778. The work of A. Wierman is supported in part from the NSF through CNS-2146814, CPS-2136197, CNS-2106403, NGSDI-2105648. The authors thank Shicong Cen, Gen Li, and Yaru Niu for valuable discussions.


%


\bibliography{bibfileRL,bibfileDRO,bibfileGame}
\bibliographystyle{apalike}

\appendix

\section{Preliminaries}

\subsection{Details of the example shown in Figure~\ref{fig:intro}}\label{proof:solution-for-example}

\paragraph{The standard Markov game for fishing protection.}
To simulate a scenario of defense against illegal fishing, we can formulate a two-player general sum finite-horizon standard Markov game between a fisher (the first player) and a police officer (the second player). This MG can be represented as $\mathcal{MG}^e = \big\{ \cS, \{\cA_i\}_{1 \le i \le 2}, p, r, H \big\}$. Here, $\cS \defn \{0,1,\cdots,100\}$ is the state space, where each state $s\in \cS$ represents the number of punishments received by the fisherman, with the license being revoked at $s=100$; $\cA_1 = \cA_2 = \{0,1\}$ is the action space. At each time step (round), the fisher chooses $a_1$ among space $\cA_1 = \{$legal fishing ($0$), illegal fishing ($1$)$\}$, while the officer chooses $a_2$ among $\cA_2 = \{$no patrols ($ 0$), go patrols ($1$)$\}$; $H$ is the horizon-length; the transition kernel is governed by a model parameter $p\in[0,1]$, shown in Figure~\ref{fig:reward}(a) (a detailed version of Figure~\ref{fig:intro}(a)), specified as
\begin{align}
\forall h\in[H]: \quad P_h(s^{\prime} \mymid s, a_1, a_2) = \left\{ \begin{array}{lll}
         p\mathds{1}(s^{\prime} = s+1) + (1-p)\mathds{1}(s^{\prime} = s)  & \text{if} \quad  s \in \cS\setminus \{100\}, a_1 = 1,\\
         \mathds{1}(s^{\prime} = s)   & \text{otherwise} .
                \end{array}\right.
        \label{eq:P-example}
\end{align}
In words, the state $s$ transit to $s'=s+1$ with probability $p$ when $a_1 = 1$, otherwise staying in $s'=s$, i.e.,
 In addition, $r=\{r_{i,h}\}_{i\in\{1,2\}, h\in[H]}$ represents the immediate reward (benefit) function of two players at each time step $h\in[H]$. Here, we consider time-invariant reward function $r_{i,h} =r_i$ for all $h\in[H]$. In particular, at any time step $h\in[H]$, $r_1(s,a_1,a_2,s')$ (resp.~$r_2(s,a_1,a_2,s')$) denotes the immediate benefit that the first agent (resp.~the second player) receives conditioned on the current state $s$, the actions of two players $(a_1, a_2)$, and the next state $s'$. The reward function for any state $s\in \cS\setminus \{100\}$ is defined in Figure~\ref{fig:reward}(b). And the reward function at state $s=100$ for two players is specified as below:
\begin{align}
\forall a_2 \in \{0,1\}: \quad &r_1(100,0, a_2, 100) = -1 \quad \text{and} \quad r_1(100,1, a_2, 100) = -20p  \notag \\
&r_2(100,0,0,100) = 1 \quad \text{and} \quad r_2(100,0,1,100) = 0 \notag \\
 &r_2(100,1,0,100) = 1 \quad \text{and} \quad r_2(100,1,1,100) = 3-2p. \label{eq:example-reward-100}
\end{align}

\begin{figure}[t]
\centering
	\includegraphics[width=1.0\linewidth]{./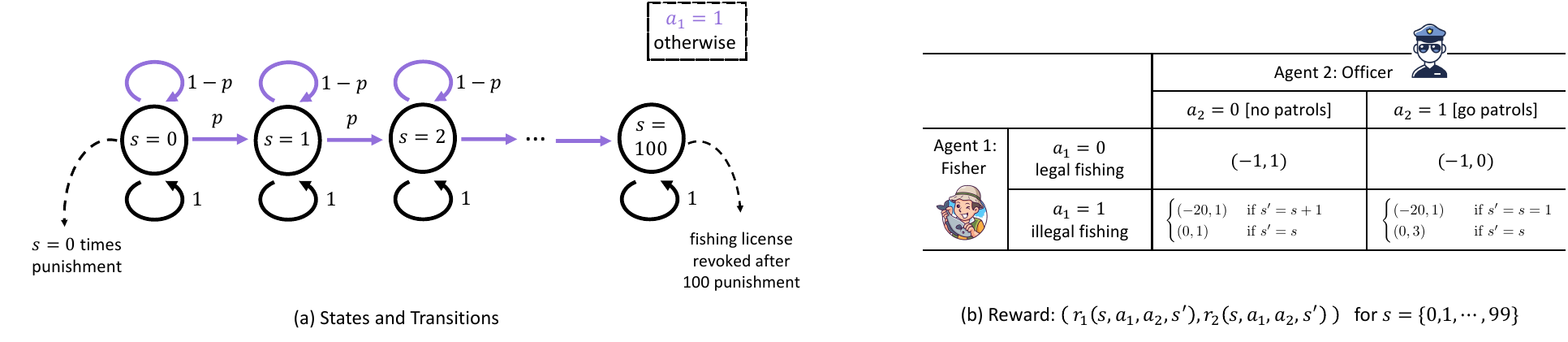} 
	\caption{(a) shows the transition kernels of the game at each time step $h$. (b) illustrates the immediate reward function of two agents.}
	\label{fig:reward}
\end{figure}

\paragraph{Computing the Nash equilibrium (NE).}
Notice that the NE of a standard Markov game is indeed a series of NE of the matrix games associated with the Q-function at each time step $h\in[H]$. We denote the NE of $\mathcal{MG}^e$ as $\pi^\star =(\mu^\star,\nu^\star) =  \{\mu^\star_h, \nu^\star_h\}_{h\in[H]}$ with $\mu^\star_h: \cS   \mapsto \Delta(\ac_1), \nu^\star_h: \cS   \mapsto \Delta(\ac_2)$ for all $h\in[H]$. To proceed, we start from characterizing the Q-function and Bellman consistency equation of the fishing protection game.

It is easily verified that for time step $H+1$, one has for any joint policy $\pi = (\mu, \nu)$,
\begin{align} \label{eq:example-Q-same-H+1}
\forall (i,a_1, a_2)\in\{1,2\} \times \cA_1 \times \cA_2: \quad  Q_{i,H+1}^{\pi,P}(s, a_1, a_2) = Q_{i,H+1}^{\pi,P}(s', a_1, a_2) =0.
\end{align}

Then, we characterize the Bellman consistency equation at time step $h=H, H-1, \cdots, 1$ for the optimal policy $\pi^\star$.  Notice that the rewards and the transition kernels have similar structures for all states except $s=100$. So we start from the cases when  $s\in \cS \setminus  \{100\}$.
  Recalling the definition of Q-function in \eqref{eq:value-function-defn}, the reward function $r$ (defined in Figure~\ref{fig:reward}(b)) and the transition kernel in \eqref{eq:P-example}, we have for any state $s\in \cS \setminus  \{100\}$ and any time step $h\in[H]$, the Q-function of the fisher (the first player) obeys
\begin{align}
    Q_{1,h}^{\pi^\star, P}(s,0,0) &= -1 +  V_{1,h+1}^{\pi^\star, P}(s) \notag \\
    Q_{1,h}^{\pi^\star, P}(s,0,1) &= -1 + V_{1,h+1}^{\pi^\star, P}(s) \notag \\
    Q_{1,h}^{\pi^\star, P}(s,1,0) &= -20p + pV_{1,h+1}^{\pi^\star, P}(s+1) + (1-p)V_{1,h+1}^{\pi^\star, P}(s)\notag \\
    Q_{1,h}^{\pi^\star, P}(s,1,1) &= -20p + pV_{1,h+1}^{\pi^\star, P}(s+1) + (1-p)V_{1,h+1}^{\pi^\star, P}(s). \label{eq:intro-agent-1-payoff}
\end{align}

Similarly, for the officer (the second player), we observe that for any state $s\in \cS \setminus  \{100\}$ and any time step $h\in[H]$:
\begin{align}
      Q_{2,h}^{\pi^\star, P}(s,0,0) &= 1 + V_{2,h+1}^{\pi^\star, P}(s) \notag \\
      Q_{2,h}^{\pi^\star, P}(s,0,1) &= 0 + V_{2,h+1}^{\pi^\star, P}(s), \notag \\
      Q_{2,h}^{\pi^\star, P}(s,1,0) &= 1 + pV_{2,h+1}^{\pi^\star, P}(s+1) + (1-p)V_{2,h+1}^{\pi^\star, P}(s), \notag \\
      Q_{2,h}^{\pi^\star, P}(s,1,1) &= 3-2p + pV_{2,h+1}^{\pi^\star, P}(s+1) + (1-p)V_{2,h+1}^{\pi^\star, P}(s).\label{eq:intro-agent-2-payoff}
\end{align}
Armed with above results, we are now ready to show that the NE for all $(h,s)\in[H] \times \cS$ are the same, which determined by the model parameter $p$ as below:
\begin{align}
  \forall (h,s)\in[H] \times \cS: \quad \begin{cases} \pi^\star_h(s) = \pi_B = (0,0) & \text{if } p >0.05 \\
 \pi^\star_h(s) = \pi_A = (1,1)& \text{if } p \leq 0.05. 
 \end{cases} \label{eq:example-NE-solution}
\end{align}
We will verify it by induction as below:
\begin{itemize}
	\item {\em Base case: when $h=H$.}
Applying \eqref{eq:intro-agent-1-payoff} and \eqref{eq:intro-agent-2-payoff} for $h=H$ with the fact in \eqref{eq:example-Q-same-H+1}, we arrive at for any state $s\in \cS \setminus  \{100\}$:
\begin{align}
 \forall a_2\in \{0,1\}: \quad Q_{1,H}^{\pi^\star, P}(s,0,a_2) &= -1 \quad \text{and} \quad  Q_{1,H}^{\pi^\star, P}(s,1,a_2) = -20p \notag \\
Q_{2,H}^{\pi^\star, P}(s,0,0) &= 1 \quad \text{and} \quad Q_{2,H}^{\pi^\star, P}(s,0,1) = 0 \notag \\
Q_{2,H}^{\pi^\star, P}(s,1,0) &= 1 \quad \text{and} \quad Q_{2,H}^{\pi^\star, P}(s,1,1) = 3-2p \label{eq:intro-agent-2-payoff-H}
\end{align}

Similarly, when state $s= 100$, recalling the reward function in \eqref{eq:example-reward-100}, we achieve the same Q-function on state $s=100$. Therefore, one has for all $s\in\cS$:
\begin{align}
  \forall a_2\in \{0,1\}: \quad Q_{1,H}^{\pi^\star, P}(s,0,a_2) &= -1 \quad \text{and} \quad  Q_{1,H}^{\pi^\star, P}(s,1,a_2) = -20p \notag \\
Q_{2,H}^{\pi^\star, P}(s,0,0) &= 1 \quad \text{and} \quad Q_{2,H}^{\pi^\star, P}(s,0,1) = 0 \notag \\
Q_{2,H}^{\pi^\star, P}(s,1,0) &= 1 \quad \text{and} \quad Q_{2,H}^{\pi^\star, P}(s,1,1) = 3-2p.  \label{eq:intro-agent-2-payoff-H-alls}
\end{align}

Consequently, in view of \eqref{eq:intro-agent-2-payoff-H-alls}, it can be verified that if $p<0.05$ (resp.~$p>0.05$), the unique NE of two agents on any state $s\in\cS$ at time step $H$ is the policy pair $\pi^\star_H(s) = (\mu^\star_H(s), \nu^\star_H(s)) = (1,1)$ (resp.~$\pi^\star_H(s) =(\mu^\star_H(s), \nu^\star_H(s)) =(0,0)$), leading to Nash $\pi_A \defn (1,1)$ when $p=p_A = 0.049$ (resp.~Nash $\pi_B \defn (0,0)$ when $p=p_B = 0.051$).

In addition, we observe the optimal value function satisfies that:
\begin{align}
	\forall s\in\cS: \begin{cases} V_{1,H}^{\pi^\star, P}(s) = -1 \quad \text{and} \quad  V_{2,H}^{\pi^\star, P}(s) = 1 & \quad \text{ if } p >0.05  \\
 V_{1,H}^{\pi^\star, P}(s) = -20p \quad \text{and} \quad  V_{2,H}^{\pi^\star, P}(s) = 3-2p  & \quad  \text{ if  } p \leq 0.05
 \end{cases}. \label{eq:example-value-H}
\end{align}

\item  {\em Induction.} The rest of this paragraph is to verify \eqref{eq:example-NE-solution} for all $(h,s)\in [H-1] \times \cS$ by induction. 
So suppose \eqref{eq:example-NE-solution} holds for time step $h+1$, then we will show that it also holds for time step $h$.

To begin with, we introduce the following claim which will be verified in Appendix~\ref{proof:eq:example-Q-same}: for any policy $\pi = (\mu,\nu)$ and any $s, s' \in\cS$:
\begin{align} \label{eq:example-Q-same}
\forall (i,h)\in\{1,2\} \times [H]: \quad  V_{i,h}^{\pi,P}(s) = V_{i,h}^{\pi,P}(s').
\end{align}

To proceed, armed with the fact in \eqref{eq:example-Q-same}, invoking the results in \eqref{eq:intro-agent-1-payoff} and \eqref{eq:intro-agent-2-payoff} yields that for all $s\in\cS$:
\begin{align}
Q_{1,h}^{\pi^\star, P}(s,0,0) &= -1 +  V_{1,h+1}^{\pi^\star, P}(s) \quad \text{and} \quad  Q_{2,h}^{\pi^\star, P}(s,0,0) = 1 + V_{2,h+1}^{\pi^\star, P}(s) \notag \\
    Q_{1,h}^{\pi^\star, P}(s,0,1) &= -1 + V_{1,h+1}^{\pi^\star, P}(s)  \quad \text{and} \quad Q_{2,h}^{\pi^\star, P}(s,0,1) = 0 + V_{2,h+1}^{\pi^\star, P}(s)\notag \\
    Q_{1,h}^{\pi^\star, P}(s,1,0) &=  -20p + V_{1,h+1}^{\pi^\star, P}(s)  \quad \text{and} \quad Q_{2,h}^{\pi^\star, P}(s,1,0) = 1 +V_{2,h+1}^{\pi^\star, P}(s)\notag \\
    Q_{1,h}^{\pi^\star, P}(s,1,1) &= -20p + V_{1,h+1}^{\pi^\star, P}(s)  \quad \text{and} \quad Q_{2,h}^{\pi^\star, P}(s,1,1) = 3-2p + V_{2,h+1}^{\pi^\star, P}(s). \label{eq:optimal-bellman-standard}
\end{align}

The above fact directly indicates that at time step $h$, the NE of the matrix games associated with the payoff $Q_{1,h}^{\pi^\star, P}(s)$ and  $Q_{2,h}^{\pi^\star, P}(s)$ satisfies
\begin{align}
		\forall s\in\cS: \quad \begin{cases} \pi^\star_h(s) = (0,0) & \text{if } p >0.05 \\
 \pi^\star_h(s) = (1,1)& \text{if } p \leq 0.05.
 \end{cases}
\end{align}

\end{itemize}

Summing up the base case and the induction results, we complete the proof for \eqref{eq:example-NE-solution}.

\paragraph{The robust MG and computing the robust Nash equilibrium (robust NE).}

When turns to the robust formulation of the fishing protection game, we construct a robust Markov game represented as $\mathcal{MG}^e_{\mathsf{rob}} = \big\{ \cS, \{\cA_i\}_{1 \le i \le 2}, p^\no, \ror, r, H \big\}$, where $\cS, \{\cA_i\}_{1 \le i \le 2}, r, H $ are the same as those defined in the standard MG $\mathcal{MG}^e$. Note that this example is designed to illustrate general environmental uncertainty (includes both the reward and transition kernel uncertainty) and is not tailored to the specific class of robust MGs defined in Section~\ref{sec:framework-robust-marl}. For simplicity, let each agent consider that the model parameter $p$ can perturb around some nominal one $p^0$ with uncertainty level $\ror =0.005$, i,e., $p\in [p^0 -\ror, p^0+\ror]$. Other components of the transition kernel is not allowed to perturb. With abuse of notation, for any joint policy $\pi$, we still denote the robust value function (resp.~robust Q-function) for $i$-th agent at time step $h$ as $V_{i,h}^{\pi,\ror}$ (resp.~$Q_{i,h}^{\pi,\ror}$). In addition, we denote the robust NE of $\mathcal{MG}_{\mathsf{rob}}^e$ as $\pi^{\star,\ror} =(\mu^{\star,\ror},\nu^{\star,\ror}) =  \{\mu^{\star,\ror}_h, \nu^{\star,\ror}_h\}_{h\in[H]}$, where $\mu^{\star,\ror}_h: \cS   \mapsto \Delta(\ac_1), \nu^{\star,\ror}_h: \cS \mapsto \Delta(\ac_2)$. 

Observe that in city A (resp.~city B), the nominal model parameter $p^0 = 0.049$ (resp.~$p^0 = 0.051$). Without loss of generality, we first focus on city A. To proceed, we shall verify the following claim using the same routine for computing NE of the standard MG $\mathcal{MG}^e$ (cf.~\eqref{eq:example-NE-solution}):
\begin{align}
  \text{In city A}: \quad (\mu^{\star,\ror}_h(s), \nu^{\star,\ror}_h(s)) = (0,0), \quad \forall (h,s)\in[H] \times \cS.
\label{eq:example-robust-NE-solution}
\end{align}

\begin{itemize}

  \item {\em Base case: when $h=H$.}
Recall the definitions of robust value/Q-function (cf.~\eqref{eq:value-function-defn-robust}), one has at time step $H$: for all $s\in\cS$,
\begin{align}
  \forall a_2\in \{0,1\}: \quad Q_{1,H}^{\pi^\star, \ror}(s,0,a_2) &= -1 \quad \text{and} \quad  Q_{1,H}^{\pi^\star, \ror}(s,1,a_2) = -20(p^0+\ror) = -1.08 \notag \\
Q_{2,H}^{\pi^\star, \ror}(s,0,0) &= 1 \quad \text{and} \quad Q_{2,H}^{\pi^\star, \ror}(s,0,1) = 0 \notag \\
Q_{2,H}^{\pi^\star, \ror}(s,1,0) &= 1 \quad \text{and} \quad Q_{2,H}^{\pi^\star, \ror}(s,0,1) = 3-2 (p^0+\ror)= 2.892.  \label{eq:intro-agent-2-payoff-H-alls}
\end{align}
As a result, it is easily verified that the unique robust NE of two agents on any state $s \in \cS$ at time step $H$ is the policy pair $(\mu^{\star,\ror}_H(s), \nu^{\star,\ror}_H(s)) = (0,0)$.

 \item {\em Induction.}
First of all, for any policy $\pi = (\mu,\nu)$ and $s,s'\in\cS$, similar to \eqref{eq:example-Q-same}
\begin{align} \label{eq:example-Q-same-robust}
\forall (i,h)\in\{1,2\} \times [H] : \quad  V_{i,h}^{\pi,\ror}(s) = V_{i,h}^{\pi,\ror}(s').
\end{align}
which indicates that the worst-case performance are indeed influenced by the uncertainty of the reward function but not the transition kernel perturbation.
Armed with above fact, invoking the robust Bellman consistency equation, similar to \eqref{eq:optimal-bellman-standard}, we can achieve that for all $h\in 1,2,\cdots, H-1$,
\begin{align}
Q_{1,h}^{\pi^{\star,\ror}, \ror}(s,0,0) &= -1 +  V_{1,h+1}^{\pi^{\star,\ror}, \ror}(s)  \quad \text{and} \quad  Q_{2,h}^{\pi^{\star,\ror}, \ror}(s,0,0) = 1 + V_{2,h+1}^{\pi^{\star,\ror}, \ror}(s) \notag \\
    Q_{1,h}^{\pi^{\star,\ror}, \ror}(s,0,1) &= -1 + V_{1,h+1}^{\pi^{\star,\ror}, \ror}(s)  \quad \text{and} \quad Q_{2,h}^{\pi^{\star,\ror}, \ror}(s,0,1) = 0 + V_{2,h+1}^{\pi^{\star,\ror}, \ror}(s)\notag \\
    Q_{1,h}^{\pi^{\star,\ror}, \ror}(s,1,0) &=  -1.08 + V_{1,h+1}^{\pi^{\star,\ror}, \ror}(s)  \quad \text{and} \quad Q_{2,h}^{\pi^{\star,\ror}, \ror}(s,1,0) = 1 +V_{2,h+1}^{\pi^{\star,\ror}, \ror}(s)\notag \\
    Q_{1,h}^{\pi^{\star,\ror}, \ror}(s,1,1) &= -1.08 + V_{1,h+1}^{\pi^{\star,\ror}, \ror}(s)  \quad \text{and} \quad Q_{2,h}^{\pi^{\star,\ror}, \ror}(s,1,1) = 2.892 + V_{2,h+1}^{\pi^{\star,\ror}, \ror}(s).
\end{align}

As a consequence,  the robust NE of the matrix games associated with the payoff $Q_{1,h}^{\pi^{\star,\ror}, \ror}(s)$ and  $Q_{2,h}^{\pi^{\star,\ror}, \ror}(s)$ satisfies  $(\mu^{\star,\ror}_h(s), \nu^{\star,\ror}_h(s)) = (0,0)$ for all $h\in 1,2,\cdots, H-1$.

\end{itemize}

Summing up the results in the base case and the induction, we verify the unique robust NE for $\mathcal{MG}^e_{\mathsf{rob}}$ in city A as \eqref{eq:example-robust-NE-solution}. 
The same unique robust NE can be verified in city B by following the same routine, which we omit for brevity. Thus, we show the unique robust NE in two slightly different environments (city A and city B) are identical.

\paragraph{Deriving the states of executing different equilibrium solutions.}
In view of \eqref{eq:example-NE-solution}, we know  that the NE of the standard MG $\mathcal{MG}^e$ in city A when $p= p_A = 0.049$ (resp.~city B when $p=p_B = 0.051$) is $\pi_A = (1,1)$ (resp.~$\pi_B = (0,0)$) for all $(h,s)\in [H] \times \cS$.
And the MG $\mathcal{MG}^e$ has some one-way transition structure, namely state $s$ can only transit to itself or a larger state $s+1$, while not any states $s' <s$. So as long as $H$ is large enough, the final state of executing $\pi_A = (1,1)$ will be state $s=100$ with the fishing license revoked since the fisher will always do illegal fishing $(a_1 = 1)$. The agents who execute the joint policy $\pi_B = (0,0)$ or the robust NE $(\mu^{\star,\ror}_h(s), \nu^{\star,\ror}_h(s)) = (0,0)$ will stay in $s=0$ with no punishment since the fisher will never choose illegal fishing ($a_1 = 1$).
 
\subsubsection{Proof of claim  \eqref{eq:example-Q-same}}\label{proof:eq:example-Q-same}
We will proof \eqref{eq:example-Q-same} by induction.
Note that the base case when $h=H$ has already been verified in \eqref{eq:example-value-H}.

Then suppose the claim holds at time step $h+1$, i.e., 
\begin{align} \label{eq:example-Q-same-h+1}
\forall (i,s,s') \in\{1,2\}\times \cS \times \cS: \quad  V_{i,h+1}^{\pi,P}(s) = V_{i,h+1}^{\pi,P}(s'),
\end{align}
it remains to show that the claim holds at time step $h$ as well.

Towards this, we first consider the  cases when state $s\in \cS \setminus  \{100\}$. Recall the recursion in  \eqref{eq:intro-agent-1-payoff}, we arrive at
	\begin{align}
	 Q_{1,h}^{\pi, P}(s,0,0) &= -1 +  V_{1,h+1}^{\pi, P}(s) \notag \\
    Q_{1,h}^{\pi, P}(s,0,1) &= -1 + V_{1,h+1}^{\pi, P}(s) \notag \\
    Q_{1,h}^{\pi, P}(s,1,0) &= -20p + pV_{1,h+1}^{\pi, P}(s+1) + (1-p)V_{1,h+1}^{\pi^\star, P}(s) \overset{\mathrm{(i)}}{=} -20p + V_{1,h+1}^{\pi, P}(s) \notag \\
    Q_{1,h}^{\pi, P}(s,1,1) &= -20p + pV_{1,h+1}^{\pi, P}(s+1) + (1-p)V_{1,h+1}^{\pi^\star, P}(s) \overset{\mathrm{(ii)}}{=} -20p + V_{1,h+1}^{\pi, P}(s), \label{eq:intro-agent-1-payoff-h}
    \end{align}
    where (i) and (ii) holds by the induction assumption in \eqref{eq:example-Q-same-h+1}.

Analogously, recalling \eqref{eq:intro-agent-2-payoff} for the second player (protector), we arrive at for any state $s\in \cS \setminus  \{100\}$ and time step $h\in[H]$,
\begin{align}
      Q_{2,h}^{\pi, P}(s,0,0) &= 1 + V_{2,h+1}^{\pi, P}(s) \notag \\
      Q_{2,h}^{\pi, P}(s,0,1) &= 0 + V_{2,h+1}^{\pi, P}(s), \notag \\
      Q_{2,h}^{\pi, P}(s,1,0) &= 1 +V_{2,h+1}^{\pi, P}(s), \notag \\
      Q_{2,h}^{\pi, P}(s,1,1) &= 3-2p + V_{2,h+1}^{\pi, P}(s). \label{eq:intro-agent-2-payoff-h}
\end{align}
Combining \eqref{eq:intro-agent-1-payoff-h} and \eqref{eq:intro-agent-2-payoff-h} gives that for any $s,s'\in\cS \setminus \{100\}$,
\begin{align} \label{eq:example-Q-same-h+1}
\forall (i,a_1, a_2) \in\{1,2\}\times \Delta(\cA_1) \times \Delta(\cA_2): \quad  Q_{i,h}^{\pi,P}(s,a_1, a_2) = Q_{i,h}^{\pi,P}(s', a_1, a_2),
\end{align}
which indicates
\begin{align}
 V_{i,h}^{\pi,P}(s) = \mathbb{E}_{(a_1,a_2) \in \mu(s) \times \mu(s)}[Q_{i,h}^{\pi,P}(s,a_1, a_2) ] = \mathbb{E}_{(a_1,a_2) \in \mu(s) \times \mu(s)}[Q_{i,h}^{\pi,P}(s',a_1, a_2) ] =  V_{i,h}^{\pi,P}(s').
\end{align}

Similarly, when $s=100$, it can be verified that \eqref{eq:intro-agent-1-payoff-h} and \eqref{eq:intro-agent-2-payoff-h} also hold. Therefore, we complete the induction argument by observing that for all $s,s'\in\cS$,  $V_{i,h}^{\pi,P}(s) = V_{i,h}^{\pi,P}(s')$ is satisfied.

\subsection{Additional notation and basic facts}\label{sec:notation}

For convenience, for any two vectors $x=[x_i]_{1\leq i\leq n}$ and $y=[y_i]_{1\leq i\leq n}$, the notation $ {x}\leq {y}$ (resp.~$ {x}\geq {y}$) means $x_{i}\leq y_{i}$ (resp.~$x_{i}\geq y_{i}$) for all $1\leq i\leq n$. 
We denote by $x \circ y=\big[x(s) \cdot y(s) \big]_{s\in\cS}$ the Hadamard product of any two vectors $x, y\in\mathbb{R}^S$.  And for any vecvor $x$, we let $x^{\circ 2} = \big[x(s,a)^2\big]_{(s,a)\in\cS\times\cA}$ (resp.~$x^{\circ 2} = \big[x(s)^2\big]_{s\in\cS}$). 
With slight abuse of notation, we denote ${0}$ (resp.~${1}$) as the all-zero (resp.~all-one) vector, and $e_i \in\mathbb{R}^S$ as a $S$-dimensional basis vector with the $i$-th entry being $1$ and others being $0$.
Recall that we abbreviate the subscript $\rho_{\mathsf{TV}}$ when the divergence function is specified to TV distance to write $ \cU^\ror(\cdot) = \cU^\ror_{\rho_{\mathsf{TV}}}(\cdot)$.

\paragraph{Additional matrix notation.}
For any $(i,h)\in [n] \times [H]$, we recall or introduce some additional notation and matrix notation that is useful throughout the analysis
\begin{itemize}

	\item $r_{i,h} = [r_{i,h}(s,\ba)]_{(s,\ba)\in \cS\times \cA} \in \mathbb{R}^{S \prod_{i=1}^n A_i}$: a reward vector that represents the reward function for the $i$-th player at  time step $h$.
	
	\item $\Pi^{\pi}_h \in \mathbb{R}^{S \times S \prod_{i=1}^n A_i}$: a projection matrix associated with time step $h$ and a given joint policy $\pi = \{\pi_{h}\}_{h\in[H]}$ in the  following form
\begin{align}
\label{eqn:bigpi}
	\Pi^{\pi}_h = {\scriptsize
	\begin{pmatrix}
		\pi_{h}(1)^{\top} &        {0}^{\top}     &  \cdots &  {0}^{\top} \\
		        {0}^{\top}     & \pi_h(2)^{\top} &  \cdots &  {0}^{\top} \\
			  \vdots  &        \vdots    & \ddots & \vdots \\	
		         {0}^{\top}    &      {0}^{\top}        &    \cdots     &  \pi_h(S)^{\top}
	\end{pmatrix}  },
\end{align}
where we recall $\pi_h(s) = \left[\pi_h(s,\ba)\right]_{\ba\in \cA}  \in \Delta(\cA)$ for all $s\in\cS$ denote the joint policy vectors from all agents.

 \item $r_{i,h}^{\pi} \in \mathbb{R}^{S}$: a reward vector associated with the distribution of actions chosen by any joint policy $\pi = \{\pi_h\}_{h\in[H]}$ at time step $h$. Here,  $r_{i,h}^{\pi}(s) = \mathbb{E}_{\ba \sim\pi_h(s)} [r_{i,h}(s,\ba)]$ for all $s\in \cS$, or equivalently $r_{i,h}^{\pi}=\Pi^{\pi}_h r_{i,h}$ (see \eqref{eqn:bigpi}).
\item $P_h^\no \in \mathbb{R}^{S\allA \times S}$: the matrix of the nominal transition kernel at time step $h$, with $P^\no_{h,s, \ba} \in \mathbb{R}^{1\times S}$ serves as the $(s,\ba)$-th row for any $(s,\ba) \in \cS\times \cA$.
\item $\widehat{P}_h^\no \in \mathbb{R}^{S\allA\times S}$: the matrix of the estimated nomimal transition kernel at time step $h$, with $\widehat{P}^\no_{h,s,\ba} \in \mathbb{R}^{1\times S}$ serves as the $(s,\ba)$-th row for any $(s,\ba) \in \cS\times \cA$.

\item $\pmin_{i,h}^{V} \in \mathbb{R}^{S \allA \times S}$, $\pmhat_{i,h}^{V} \in \mathbb{R}^{S\allA\times S}$: at time step $h$, those matrices represent the worst-case probability transition kernel within the $i$-th agent's uncertainty set around the nominal/estimated nominal transition kernel, associated with any vector $V\in\mathbb{R}^S$. As a result, we denote $\pmin_{i,h,s,\ba}^{V}$ (resp.~$\pmhat_{i,h,s,\ba}^{V}$) as the $(s,\ba)$-th row of the transition matrix $\pmin_{i,h}^{V}$ (resp.~$\widehat{\pmin}_{i,h}^{V}$), defined by
	\begin{subequations}\label{eq:inf-p-special-marl}
\begin{align}
	\pmin_{i,h,s,\ba}^{V} &= \mathrm{argmin}_{\cP\in \unb^{\ror_i}_\rho(P^{\no}_{h,s,\ba})} \cP V, \qquad \text{and} \qquad  \pmhat_{i,h,s,\ba}^{V} = \mathrm{argmin}_{\cP\in \unb^{\ror_i}_\rho(\widehat{P}^{\no}_{h,s,\ba})} \cP V.
\end{align}

Similarly, we define the corresponding probability transition matrices for some special value vectors that are useful: $\pmin_{i,h}^{\pi, V} \in \mathbb{R}^{S\allA\times S}$, $\pmin_{i,h}^{\pi, \widehat{V}} \in \mathbb{R}^{S\allA\times S}$,  $\pmhat_{i,h}^{\pi, V} \in \mathbb{R}^{S\allA\times S}$ and $\pmhat_{i,h}^{\pi, \widehat{V}} \in \mathbb{R}^{S\allA\times S}$.
Here, we already use the following short-hand notation:
\begin{align}
	\pmin_{i,h}^{\pi, V} &\defn \pmin_{i,h}^{V_{i,h+1}^{\pi,\ror_i}} \quad \text{and} \quad \pmin_{i,h,s,\ba}^{\pi, V} := \pmin_{i,h,s,\ba}^{V_{i,h+1}^{\pi,\ror_i}}= \mathrm{argmin}_{\cP\in \unb^{\ror_i}_\rho(P^{\no}_{h,s,\ba})} \cP V_{i,h+1}^{\pi,\ror_i}, \notag \\
	 \pmin_{i,h}^{\pi, \widehat{V}} &\defn \pmin_{i,h}^{\widehat{V}_{i,h+1}^{\pi, \ror_i} } \quad \text{and} \quad  \pmin_{h,s,\ba}^{\pi, \widehat{V}}:=\pmin_{h,s,\ba}^{\widehat{V}_{i,h+1}^{\pi, \ror_i}}  = \mathrm{argmin}_{\cP\in \unb^{\ror_i}_\rho(P^{\no}_{h,s,\ba})} \cP \widehat{V}_{i,h+1}^{\pi,\ror_i},  \notag \\
	 \pmhat_{i,h}^{\pi, V} &:= \pmhat_{i,h}^{V_{i,h+1}^{\pi, \ror_i}}  \quad \text{and} \quad \pmhat_{h,s,\ba}^{\pi, V} := \pmhat_{h,s,\ba}^{V_{i,h+1}^{\pi, \ror_i}} = \mathrm{argmin}_{P\in \unb^{\ror_i}_\rho(\widehat{P}^{\no}_{h,s,\ba})} P V_{i,h+1}^{\pi,\ror_i}, \notag \\
	  \pmhat_{i,h}^{\pi, \widehat{V}}&:=\pmhat_{i,h}^{\widehat{V}_{i,h+1}^{\pi, \ror_i}}  \quad \text{and} \quad \pmhat_{h,s,\ba}^{\pi, \widehat{V}}:=\pmhat_{h,s,\ba}^{\widehat{V}_{i,h+1}^{\pi, \ror_i}}  = \mathrm{argmin}_{P\in \unb^{\ror_i}_\rho(\widehat{P}^{\no}_{h,s,\ba})} P \widehat{V}_{i,h+1}^{\pi, \ror_i}. 
\end{align}
\end{subequations}

\item $P_{h}^{\pi} \in \mathbb{R}^{S\times S}$, $\widehat{P}_{h}^{\pi} \in \mathbb{R}^{S\times S}$, $\Pv_{i,h}^{\pi,V}\in \mathbb{R}^{S\times S}$, $\Pv_{i,h}^{\pi, \widehat{V}}\in \mathbb{R}^{S\times S}$, $\Phatv_{i,h}^{\pi, V} \in \mathbb{R}^{S\times S}$ and $\Phatv_{i,h}^{\pi, \widehat{V}} \in \mathbb{R}^{S\times S}$: at time step $h$, those six {\em square} probability  transition matrices w.r.t. a given joint policy $\pi$ are defined by multiplying the projection matrix in \eqref{eqn:bigpi} as below, resepctively:  
	\begin{align}
	\label{eqn:ppivq-marl}
		&\Pv_{h}^{\pi} \defn \Pi^{\pi}_h P_{h}^\no, \qquad \Phatv_{h}^{\pi} \defn  \Pi^{\pi}_h \widehat{P}_{h}^\no, \qquad  \Pv_{i,h}^{\pi,V} \defn \Pi^{\pi}_h\pmin_{i,h}^{\pi, V}, \qquad \Pv_{i,h}^{\pi, \widehat{V}} \defn \Pi^{\pi}_h\pmin_{i,h}^{\pi, \widehat{V}}, \nonumber \\
		&\Phatv_{i,h}^{\pi, V}  \defn \Pi_h^{\pi} \pmhat_{i,h}^{\pi, V} , \qquad \text{and} \qquad \Phatv_{i,h}^{\pi, \widehat{V}} \defn \Pi_h^{\pi} \pmhat_{i,h}^{\pi, \widehat{V}}.
	\end{align}

\end{itemize}

We then introduce two notations of the variance.
First, for any probability vector $P \in \mathbb{R}^{1 \times S}$ and vector $V \in \mathbb{R}^S$, we denote the variance
\begin{align}\label{eq:defn-variance}
  \mathrm{Var}_{P}(V) \defn P (V \circ V)-  (P V  ) \circ  (P V  ).
\end{align} 
Then in addition, for any transition kernel $P \in \mathbb{R}^{S\allA \times S}$ and vector $V \in \mathbb{R}^S$, we denote $\mathsf{Var}_{P}(V) \in \mathbb{R}^{S\allA}$ as a vector of variance whose $(s,\ba)$-th row of $\mathsf{Var}_{P}(V)$ is taken as
\begin{align}\label{eq:defn-variance-vector-marl}
	\mathsf{Var}_{P}(s,\ba) \defn \mathrm{Var}_{P_{s,\ba}}(V).
\end{align}

\subsection{Preliminary facts of \rmgs and empirical \rmgs}

\paragraph{Dual equivalence of robust Bellman operator with TV uncertainty set.} Opportunely, when the prescribed uncertainty set is in a benign form (such as using TV distance as the divergence function), the robust Bellman operator can be computed efficiently by solving its dual formulation instead \citep{iyengar2005robust,clavier2023towards,shi2023curious}.
In particular, the following lemma describes the equivalence between the robust Bellman operator and its dual form due to strong duality in the case of TV distance.
\begin{lemma}[Lemma~4, \citet{shi2023curious}]\label{lemma:tv-dual-form}
Consider any TV uncertainty set $\unb^{\ror}(P) = \unb^{\ror}_{\rho_{\mathsf{TV}}}(P)$ associated with any probability vector  $P\in\Delta(\cS)$, fixed uncertainty level $\ror\in(0,1]$. For any vector $V\in \mathbb{R}^S$ obeying $  V \geq  {0}$, recalling the definition of $[V]_\alpha$ in \eqref{eq:V-alpha-defn}, one has
\begin{align}
	\inf_{ \cP \in \unb^{\ror}(P)} \cP V &= \max_{\alpha\in [\min_s V(s), \max_s V(s)]} \left\{P \left[V\right]_{\alpha} - \ror~ \left(\alpha - \min_{s'}\left[V\right]_{\alpha}(s') \right)\right\} \label{eq:vi-l1norm}. 
\end{align}
\end{lemma}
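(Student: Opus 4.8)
The plan is to recognize the inner infimum as a linear program over the polytope $\unb^\ror(P) = \{\cP \in \Delta(\cS) : \tfrac12\|\cP - P\|_1 \le \ror\}$ and to derive the claimed identity as its Lagrangian dual. Since $\cP = P$ is strictly feasible for the single nonlinear inequality (it attains TV distance $0 < \ror$), Slater's condition holds; as the objective $\cP V$ is linear and the feasible set is a bounded polytope, there is no duality gap, so it suffices to compute the dual and massage it into the stated form.

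First I would dualize only the TV constraint with a multiplier $\lambda \ge 0$, writing
$$\inf_{\cP\in\unb^\ror(P)}\cP V = \max_{\lambda\ge 0}\Big\{-\lambda\ror + \underbrace{\min_{\cP\in\Delta(\cS)}\big[\cP V + \tfrac{\lambda}{2}\|\cP-P\|_1\big]}_{=:\,h(\lambda)}\Big\}.$$
To evaluate $h(\lambda)$, I would further dualize the simplex equality $\one^\top\cP = 1$ with a scalar multiplier $\alpha$ while keeping $\cP \ge 0$, which decouples the minimization across coordinates:
$$h(\lambda) = \max_{\alpha}\Big\{\alpha + \sum_{s}\min_{x\ge 0}\big[x\big(V(s)-\alpha\big) + \tfrac{\lambda}{2}\,|x-P(s)|\big]\Big\}.$$
The per-coordinate scalar problem is piecewise linear in $x$; solving it gives $-\infty$ unless $\alpha - \lambda/2 \le \min_s V(s)$, and otherwise equals $(V(s)-\alpha)P(s)$ when $V(s) \le \alpha + \lambda/2$ and $\tfrac{\lambda}{2}P(s)$ when $V(s) > \alpha + \lambda/2$.

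The key algebraic step is to collapse the two dual variables into one. Introducing the upper threshold $\beta := \alpha + \lambda/2$, the two cases combine into the single expression $(\min\{V(s),\beta\}-\alpha)P(s) = ([V]_\beta(s)-\alpha)P(s)$, so that $\alpha - \lambda\ror + \sum_s(\cdots) = P[V]_\beta - 2(\beta-\alpha)\ror$ after using $\one^\top P = 1$ and $\lambda = 2(\beta-\alpha)$. This objective is increasing in $\alpha$ under the finiteness constraint $2\alpha - \beta \le \min_s V(s)$, so I would set $\alpha = (\min_s V(s) + \beta)/2$, whence $\beta - \alpha = (\beta - \min_s V(s))/2$ and the objective becomes $P[V]_\beta - \ror(\beta - \min_s V(s))$. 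Renaming $\beta$ to $\alpha$ and noting that for $\alpha \in [\min_s V(s), \max_s V(s)]$ one has $\min_{s'}[V]_\alpha(s') = \min_s V(s)$ recovers exactly the right-hand side; restricting the range to $[\min_s V(s), \max_s V(s)]$ is without loss, since for $\alpha$ below $\min_s V(s)$ or above $\max_s V(s)$ the objective is dominated.

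I expect the main obstacle to be the bookkeeping in this reparametrization — correctly tracking the finiteness condition $\alpha - \lambda/2 \le \min_s V(s)$ and verifying that optimizing out the simplex multiplier $\alpha$ is precisely what produces the penalty $\ror(\alpha - \min_{s'}[V]_\alpha(s'))$. As a sanity cross-check, and as an independent route to the ``$\ge$'' direction, I would also verify weak duality directly: for any feasible $\cP$ and any $\alpha$, $\cP V \ge \cP[V]_\alpha = P[V]_\alpha + (\cP - P)[V]_\alpha \ge P[V]_\alpha - \ror(\alpha - \min_s V(s))$, where the last bound uses $(\cP-P)\one = 0$, the fact that $[V]_\alpha - (\min_s V(s))\one$ has entries in $[0,\alpha - \min_s V(s)]$, and $\tfrac12\|\cP - P\|_1 \le \ror$; examining when these inequalities are tight then pins down the worst-case $\cP$ as the one that removes total mass $\ror$ from the highest-value states and reassigns it to a minimizing state, giving the matching ``$\le$'' direction constructively.
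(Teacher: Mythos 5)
The first thing to note is that the paper never proves Lemma~\ref{lemma:tv-dual-form} at all: it imports the statement verbatim as Lemma~4 of \citet{shi2023curious}, and the surrounding text merely attributes it to strong duality \citep{iyengar2005robust}. So there is no internal proof to compare against, and your proposal should be judged as a self-contained reconstruction --- which it is, and a correct one. Your route (dualize the TV constraint with multiplier $\lambda$, then the simplex equality with multiplier $\alpha$, solve the decoupled per-coordinate scalar problems, and collapse the two multipliers via $\beta = \alpha + \lambda/2$ so the per-coordinate values unify into $([V]_\beta(s)-\alpha)P(s)$) checks out step by step: strong duality is available at both levels (everything is polyhedral, so LP duality suffices even without invoking Slater), the case analysis of the scalar problem is right, the algebra $\alpha + P[V]_\beta - \alpha - \lambda\ror = P[V]_\beta - 2(\beta-\alpha)\ror$ is right, and restricting the final variable to $[\min_s V(s), \max_s V(s)]$ is justified by the domination argument you sketch. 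The only place a written version needs more care is the maximization over $\alpha$: when you set $\alpha = (\min_s V(s)+\beta)/2$ you enforce the finiteness constraint $2\alpha - \beta \le \min_s V(s)$ but silently drop dual feasibility $\lambda \ge 0$, i.e.\ $\alpha \le \beta$; that choice is legitimate exactly when $\beta \ge \min_s V(s)$, while for $\beta < \min_s V(s)$ the binding constraint is $\alpha = \beta$ (so $\lambda = 0$ and the objective is just $\beta$), which is indeed dominated by $\beta = \min_s V(s)$ --- so the conclusion stands, but the two constraints should be tracked jointly rather than sequentially. Your closing weak-duality cross-check is also sound: since $(\cP - P)\one = 0$ you may replace $[V]_\alpha$ by $[V]_\alpha - \min_s V(s)\,\one$, whose entries lie in $[0, \alpha - \min_s V(s)]$, and the negative part of $\cP - P$ carries mass at most $\ror$, giving $(\cP - P)[V]_\alpha \ge -\ror\,(\alpha - \min_s V(s))$; this yields the ``$\ge$'' direction independently of any duality machinery, which is a nice robustness check on the main argument. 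In short: where the paper outsources this lemma to a citation, you have supplied an honest and essentially complete proof of it.
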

The above lemma ensures that the computation cost of applying robust Bellman operator is relatively the same as applying standard Bellman operator \citep{iyengar2005robust} up to some logarithmic factors.

\paragraph{Notations and facts of \rmgs and empirical \rmgs.}
First, recall that for any robust Markov game  $\mathcal{MG}_{\mathsf{rob}} = \big\{ \cS, \{\cA_i\}_{1 \le i \le n},\{\cU^{\ror_i}_\rho(P^\no)\}_{1 \le i \leq n}, \rew,  H \big\}$, according to robust Bellman equations in \eqref{eq:bellman-consistency-sa}, one has for any joint policy $\pi: \cS\times [H] \rightarrow \Delta(\cA)$ and any $(h,i,s,\ba) \in [H] \times [n] \times \cS\times \cA$:
\begin{align}
	Q_{i,h}^{\pi, \ror_i}(s,\ba) &= r_{i,h}(s, \ba) +  \inf_{P\in \unb^{\ror_i}_\rho(P^{\no}_{h,s,\ba})} P V_{i,h+1}^{\pi,\ror_i}, 
	\qquad \text{where } V_{i,h}^{\pi,\ror_i}(s) = \mathbb{E}_{a\sim \pi_h(s)}[Q^{\pi.\ror_i}_{i,h}(s,\ba)]. \label{eq:Q-value-mg}
\end{align}

Combined with the matrix notation in Appendix~\ref{sec:notation}, we arrive at
\begin{align}
V_{i,h}^{\pi,\ror_i} = r_{i,h}^{\pi} + \Pi^\pi_h \inf_{P\in \unb^{\ror_i}_\rho(P^{\no}_{h})} P V_{i,h+1}^{\pi,\ror_i} = r_{i,h}^{\pi} +\Pv_{i,h}^{\pi, V} V_{i,h+1}^{\pi,\ror_i}. \label{eq:Q-value-mg-matrix}
\end{align}

Then we denote the empirical robust Markov games based on the estimated nominal distribution $\widehat{P}^\no$ constructed in \eqref{eq:empirical-P-infinite} as $\widehat{\mathcal{MG}}_{\mathsf{rob}} = \big\{ \cS, \{\cA_i\}_{1 \le i \le n},\{\cU^{\ror_i}_\rho(\widehat{P}^\no)\}_{1 \le i \le n}, \rew,  H \big\}$. Analogous to \eqref{eq:value-function-defn-robust}, we can define the corresponding robust value function (resp.~robust Q-function) of any joint policy $\pi$ in $\widehat{\mathcal{MG}}_{\mathsf{rob}}$ as
$\big\{\widehat{V}_{i,h}^{\pi,\ror_i} \big\}_{1\leq i\leq n}$ (resp.~$\big\{\widehat{Q}_{i,h}^{\pi,\ror_i} \big\}_{1\leq i\leq n}$). In addition, similar to \eqref{eq:defn-optimal-V}, we can define the maximum of the robust value function for each agent over $\widehat{\mathcal{MG}}_{\mathsf{rob}}$ as follows :
\begin{align}
	\forall s\in\cS: \quad \widehat{V}_{i,h}^{\star,\pi_{-i},\ror_i}(s)\coloneqq \max_{\pi'_i: \cS \times [H] \rightarrow \Delta(\mathcal{A}_i)} \widehat{V}_{i,h}^{\pi'_i \times \pi_{-i},\ror_i}(s) = \max_{\pi'_i: \cS \times [H] \rightarrow \Delta(\mathcal{A}_i)} \inf_{P\in \unb^{\ror_i}(\widehat{P}^{\no})} \widehat{V}_{i,h}^{\pi_i' \times \pi_{-i},P} (s),
\end{align}
which can be achieved by at least one {\em robust best-response} policy for all $s\in\cS$ simultaneously \citep[Section A.2]{blanchet2024double}.

Moreover, applying the robust Bellman equation in \eqref{eq:bellman-consistency-sa} for the empirical \rmg $\widehat{\mathcal{MG}}_{\mathsf{rob}}$, for any joint policy $\pi$, 
\begin{align}
	\widehat{Q}_{i,h}^{\pi,\ror_i}(s,\ba) &= r_{i,h}(s, \ba) +  \inf_{P\in \unb^{\ror_i}_\rho(\widehat{P}^{\no}_{h,s,\ba})} P \widehat{V}_{i,h+1}^{\pi,\ror_i}, 
	\qquad \text{where } \widehat{V}_{i,h}^{\pi,\ror_i}(s) = \mathbb{E}_{a\sim \pi_h(s)}[\widehat{Q}^{\pi,\ror_i}_{i,h}(s,\ba)], \label{eq:bellman-equ-star-infinite-estimate}
\end{align}
which combined with the matrix notations in Appendix~\ref{sec:notation} leads to the matrix form of the robust Bellman equation:
\begin{align}
	\widehat{V}_{i,h}^{\pi,\ror_i} = r_{i,h}^{\pi} + \Pi^\pi_h \inf_{P\in \unb^{\ror_i}(\widehat{P}^{\no}_{h})} P \widehat{V}_{i,h+1}^{\pi,\ror_i} = r_{i,h}^{\pi} +\Phatv_{i,h}^{\pi, \widehat{V}} \widehat{V}_{i,h+1}^{\pi,\ror_i}. \label{eq:r-bellman-matrix}
\end{align}

Encouragingly, the above property of the robust Bellman equations ensure that the policy $\widehat{\pi}$ output by the proposed method \DRNVI (cf.~Algorithm~\ref{alg:nash-dro-vi-finite}) is a robust-$\{\text{NE},\text{CE}, \text{CCE}\}$ of the empirical \rmg $\widehat{\mathcal{MG}}_{\mathsf{rob}}$ when executing different corresponding subroutines, summarized in the following lemma:
\begin{lemma}\label{lem:algo-output-equilibrium}
The output policy $\widehat{\pi}$ by \DRNVI (cf.~Algorithm~\ref{alg:nash-dro-vi-finite}) is a robust-$\{\text{NE}, \text{CE}, \text{CCE}\}$ of the empirical \rmg $\widehat{\mathcal{MG}}_{\mathsf{rob}} = \big\{ \cS, \{\cA_i\}_{1 \le i \le n},\{\cU^{\ror_i}_\rho(\widehat{P}^\no)\}_{1 \le i \le n}, \rew,  H \big\}$ when executing different subroutine $\mathsf{Equilibrium} \in \mathsf{Compute-}\{ \mathsf{Nash}, \mathsf{CE}, \mathsf{CCE}\}$ accordingly, namely
	\begin{align}
	 \forall (i,h)\in [n]\times [H]: \quad \begin{cases}
  \widehat{V}_{i,h}  =\widehat{V}^{\widehat{\pi},\ror_i}_{i,h} = \widehat{V}^{\star,\widehat{\pi}_{-i},\ror_i}_{i,h} & \text{when }  \mathsf{Equilibrium} =\mathsf{Compute}-\mathsf{Nash} \\
   \widehat{V}_{i,h} =\widehat{V}^{\widehat{\pi},\ror_i}_{i,h} \geq \widehat{V}^{\star,\widehat{\pi}_{-i},\ror_i}_{i,h} & \text{when }  \mathsf{Equilibrium} =\mathsf{Compute}-\mathsf{CCE} \\
   \widehat{V}_{i,h} =  \widehat{V}^{\widehat{\pi},\ror_i}_{i,h} \geq  \max_{f_i \in \cF_i} V_{i,h}^{f_i \diamond \widehat{\pi}, \ror_i } & \text{when }  \mathsf{Equilibrium} =\mathsf{Compute}-\mathsf{CE}.
  \end{cases} \label{eq:iterative-solution-confirm}
	\end{align}
\end{lemma}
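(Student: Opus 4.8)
The plan is to prove the three branches of \eqref{eq:iterative-solution-confirm} by combining two downward inductions on $h$ with the one-step equilibrium guarantee supplied by the subroutine $\mathsf{Equilibrium}(\cdot)$ on the payoff matrices $\{\widehat{Q}_{i,h}(s,\cdot)\}_{i}$. Write $\widehat{\pi}=\{\widehat{\pi}_h\}$ for the output policy, and recall that by construction (line~\ref{eq:robust-q-estimate} and \eqref{eq:nvi-iteration}) the iterates obey $\widehat{Q}_{i,h}(s,\ba)=r_{i,h}(s,\ba)+\inf_{\cP\in\unb^{\ror_i}(\widehat{P}^{\no}_{h,s,\ba})}\cP\widehat{V}_{i,h+1}$ and $\widehat{V}_{i,h}(s)=\mathbb{E}_{\ba\sim\widehat{\pi}_h(s)}[\widehat{Q}_{i,h}(s,\ba)]$.

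\textbf{Step 1 (value consistency; the first equality in every case).} First I would show by downward induction that the algorithm's iterates coincide with the robust value/Q-functions of $\widehat{\pi}$ in $\widehat{\mathcal{MG}}_{\mathsf{rob}}$, i.e.\ $\widehat{Q}_{i,h}=\widehat{Q}^{\widehat{\pi},\ror_i}_{i,h}$ and $\widehat{V}_{i,h}=\widehat{V}^{\widehat{\pi},\ror_i}_{i,h}$ for all $(i,h)$. The base case $h=H+1$ is immediate from the zero initialization. For the inductive step, substituting $\widehat{V}_{i,h+1}=\widehat{V}^{\widehat{\pi},\ror_i}_{i,h+1}$ into the algorithm's $\widehat{Q}_{i,h}$ update reproduces exactly the robust Bellman consistency equation \eqref{eq:bellman-equ-star-infinite-estimate}, and averaging over $\ba\sim\widehat{\pi}_h(s)$ then yields $\widehat{V}_{i,h}=\widehat{V}^{\widehat{\pi},\ror_i}_{i,h}$. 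This already establishes the identity $\widehat{V}_{i,h}=\widehat{V}^{\widehat{\pi},\ror_i}_{i,h}$ asserted in all three branches of \eqref{eq:iterative-solution-confirm}.

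\textbf{Step 2 (best-response Bellman optimality equation).} The crux is a recursive characterization of the deviation value $\widehat{V}^{\star,\widehat{\pi}_{-i},\ror_i}_{i,h}$. Fixing $\widehat{\pi}_{-i}$, agent $i$ faces a single-agent RMDP over action set $\cA_i$, and I would argue that the agent-wise $(s,\ba)$-rectangularity \eqref{eq:sa-rec-defn} renders this induced RMDP $(s,a_i)$-rectangular: the rows feeding a fixed $(s,a_i)$ are exactly $\{\widehat{P}^{\no}_{h,s,(a_i,\ba_{-i})}\}_{\ba_{-i}}$, which are disjoint across $a_i$ and whose uncertainty subsets are selected independently. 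Since the inner objective $\mathbb{E}_{\ba_{-i}\sim\widehat{\pi}_{-i,h}(s)}[\cP_{h,s,(a_i,\ba_{-i})}V]$ is linear in the rectangular collection $\{\cP_{h,s,\ba}\}$, the infimum commutes with the expectation over $\ba_{-i}$, yielding
\[
\widehat{V}^{\star,\widehat{\pi}_{-i},\ror_i}_{i,h}(s)=\max_{\pi'_{i,h}(\cdot\mymid s)}\mathbb{E}_{a_i\sim\pi'_{i,h}(s),\,\ba_{-i}\sim\widehat{\pi}_{-i,h}(s)}\Big[r_{i,h}(s,\ba)+\inf_{\cP\in\unb^{\ror_i}(\widehat{P}^{\no}_{h,s,\ba})}\cP\,\widehat{V}^{\star,\widehat{\pi}_{-i},\ror_i}_{i,h+1}\Big].
\]
This interchange is the main obstacle, and the only place where rectangularity is essential; I would justify it as in the single-agent robust MDP theory underlying the existence result of \citep[Section A.2]{blanchet2024double}. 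I would also record the elementary monotonicity of the operator $V\mapsto\inf_{\cP\in\unb^{\ror_i}(\cdot)}\cP V$.

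\textbf{Step 3 (the three equilibrium branches).} Each branch follows from a second downward induction that feeds Step~1, Step~2, monotonicity, and the appropriate one-step property of the subroutine. For NE, the subroutine makes $\widehat{V}_{i,h}(s)$ equal to the best one-step deviation $\max_{\pi'_{i,h}}\mathbb{E}_{a_i\sim\pi'_{i,h},\,\ba_{-i}\sim\widehat{\pi}_{-i,h}(s)}[\widehat{Q}_{i,h}(s,\cdot)]$; combined with the Step~2 recursion and the inductive hypothesis $\widehat{V}^{\star,\widehat{\pi}_{-i},\ror_i}_{i,h+1}=\widehat{V}_{i,h+1}$ this gives $\widehat{V}^{\star,\widehat{\pi}_{-i},\ror_i}_{i,h}=\widehat{V}_{i,h}=\widehat{V}^{\widehat{\pi},\ror_i}_{i,h}$. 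For CCE, the subroutine only guarantees the inequality $\widehat{V}_{i,h}(s)\ge\max_{\pi'_{i,h}}\mathbb{E}[\widehat{Q}_{i,h}(s,\cdot)]$; carrying the hypothesis $\widehat{V}^{\star,\widehat{\pi}_{-i},\ror_i}_{i,h+1}\le\widehat{V}_{i,h+1}$ through the monotone operator in Step~2 yields $\widehat{V}^{\star,\widehat{\pi}_{-i},\ror_i}_{i,h}\le\widehat{V}_{i,h}=\widehat{V}^{\widehat{\pi},\ror_i}_{i,h}$. For CE, I would instead fix an arbitrary modification $f_i\in\cF_i$ and induct on the empirical robust values $\widehat{V}^{f_i\diamond\widehat{\pi},\ror_i}_{i,h}\le\widehat{V}^{\widehat{\pi},\ror_i}_{i,h}$: the robust Bellman consistency for $f_i\diamond\widehat{\pi}$ reduces its step-$h$ value (after invoking monotonicity with the inductive hypothesis) to $\mathbb{E}_{\ba\sim\widehat{\pi}_h(s)}[\widehat{Q}_{i,h}(s,(f_{i,h,s}(a_i),\ba_{-i}))]$, which the CE property of the subroutine bounds by $\mathbb{E}_{\ba\sim\widehat{\pi}_h(s)}[\widehat{Q}_{i,h}(s,\ba)]=\widehat{V}_{i,h}(s)$; maximizing over $f_i\in\cF_i$ closes the argument.
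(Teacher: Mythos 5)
Your proposal is correct and follows essentially the same route as the paper's proof: a backward induction combining (i) the consistency $\widehat{V}_{i,h}=\widehat{V}^{\widehat{\pi},\ror_i}_{i,h}$, (ii) a Bellman-optimality recursion for the best-response value $\widehat{V}^{\star,\widehat{\pi}_{-i},\ror_i}_{i,h}$ (the paper derives it inline via the exchangeability of $\max_{\widetilde{\pi}_i}$ and $\inf_{P}$ together with the existence of a simultaneous Markov best response, which is the same rectangularity fact you invoke), and (iii) the one-step Nash/CCE/CE property of the subroutine applied to the payoff matrices $\{\widehat{Q}_{i,h}(s,\cdot)\}_i$. The only difference is organizational---you factor the argument into a shared value-consistency induction plus per-concept inductions with an explicitly stated best-response recursion and monotonicity, whereas the paper runs a single induction per solution concept with these facts used inline---which does not change the substance.
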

\begin{proof}
See Appendix~\ref{proof:lem:algo-output-equilibrium}.
\end{proof}

\section{Proof of Theorem~\ref{thm:robust-mg-upper-bound}}\label{proof:upper-bound}

Before starting, let us introduce an essential lemma that characterize the difference between robust MGs and standard MGs. For each agent, the possible range of the robust value function shrinks as the uncertainty level $\ror_i$ of its own uncertainty set increases, shown below.

\begin{lemma}\label{lemma:pnorm-key-value-range} 
Consider the uncertainty set $\cU^{\ror_i}(\cdot) = \cU^{\ror_i}_{\rho_{\mathsf{TV}}}(\cdot)$ and any robust Markov game $\mathcal{MG}_{\mathsf{rob}} = \big\{ \cS, \{\cA_i\}_{1 \le i \le n},\{\cU^{\ror_i}(P)\}_{1 \le i \leq n}, \rew,  H \big\}$. The robust value function $\{ V_{i,h}^{\pi,\ror_i}\}_{i\in[n], h\in[H]}$ associated with any joint policy $\pi$  satisfies: 
	\begin{align*}
\forall (i,h)\in [n] \times [H]: \quad 	\max_{s\in\cS}V_{i,h}^{\pi,\ror_i}(s) - \min_{s\in\cS}V_{i,h}^{\pi,\ror_i}(s)& \leq \min \left\{\frac{1}{\ror_i}, H-h+1 \right\}.
	\end{align*}
	\end{lemma}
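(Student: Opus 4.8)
The plan is to fix the agent index $i$ throughout --- the agent-wise $(s,a)$-rectangularity decouples the agents, so $V_{i,h}^{\pi,\ror_i}$ obeys the scalar robust Bellman recursion \eqref{eq:bellman-consistency-sa} with its own radius $\ror_i$ --- and to control the range $R_h \defn \max_{s}V_{i,h}^{\pi,\ror_i}(s)-\min_{s}V_{i,h}^{\pi,\ror_i}(s)$ by a one-step recursion. The bound $R_h\le H-h+1$ is immediate: for rewards in $[0,1]$ every value $V_{i,h}^{\pi,\ror_i}(s)$ is an infimum over admissible kernels of a standard value, hence lies in $[0,H-h+1]$. The real content is the bound $R_h\le 1/\ror_i$, and I would obtain it by proving the recursion $R_h\le 1+(1-\ror_i)R_{h+1}$ and unrolling.

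The core step is to show that the robust one-step lookahead contracts the range. Write $G(s,\ba)\defn\inf_{P\in\unb^{\ror_i}(P^\no_{h,s,\ba})}PV$ with $V\defn V_{i,h+1}^{\pi,\ror_i}\ge 0$, and set $V_{\min}\defn\min_{s'}V(s')$ and $V_{\max}\defn\max_{s'}V(s')$. A trivial lower bound is $G(s,\ba)\ge V_{\min}$, since every $P$ in the uncertainty set is a probability vector. For the matching upper bound I would invoke the dual form of Lemma~\ref{lemma:tv-dual-form}: for every admissible level $\alpha\in[V_{\min},V_{\max}]$ the clipped vector $[V]_\alpha$ (cf.~\eqref{eq:V-alpha-defn}) satisfies $P^\no_{h,s,\ba}[V]_\alpha\le\alpha$ and $\min_{s'}[V]_\alpha(s')=V_{\min}$, so that each dual objective is at most $(1-\ror_i)\alpha+\ror_i V_{\min}\le(1-\ror_i)V_{\max}+\ror_i V_{\min}$; taking the maximum over $\alpha$ gives $G(s,\ba)\le(1-\ror_i)V_{\max}+\ror_i V_{\min}$. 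Combining the two bounds yields $\max_{s,\ba}G(s,\ba)-\min_{s,\ba}G(s,\ba)\le(1-\ror_i)R_{h+1}$.

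Finally I would assemble the recursion. Since $V_{i,h}^{\pi,\ror_i}(s)=\mathbb{E}_{\ba\sim\pi_h(s)}[r_{i,h}(s,\ba)+G(s,\ba)]$ is the sum of a reward term whose range is at most $1$ and an average of $G(s,\cdot)$ whose range is at most that of $G$, the triangle inequality gives $R_h\le 1+(1-\ror_i)R_{h+1}$. Starting from $R_{H+1}=0$ and unrolling, $R_h\le\sum_{k=0}^{H-h}(1-\ror_i)^k\le 1/\ror_i$, while bounding each summand termwise by $1$ recovers $H-h+1$; taking the minimum finishes the proof. I expect the only delicate point to be the upper-bounding of the dual objective --- in particular recognizing that at any feasible clipping level the penalty term pins $\min_{s'}[V]_\alpha(s')$ to $V_{\min}$, which is exactly what produces the crucial contraction factor $(1-\ror_i)$; without it the reward terms would accumulate to $H-h+1$ rather than saturate at $1/\ror_i$.
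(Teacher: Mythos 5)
Your proof is correct, and it reaches the paper's key one-step contraction by a genuinely different mechanism. Both you and the paper reduce the lemma to the recursion $R_h \le 1 + (1-\ror_i)R_{h+1}$ on the range $R_h$ and unroll the geometric sum, but the crucial inequality
$\inf_{P\in\unb^{\ror_i}(P^\no_{h,s,\ba})} P V \le (1-\ror_i)\max_{s'}V(s') + \ror_i \min_{s'}V(s')$
is established differently. The paper works in the primal: it explicitly constructs a member of the TV ball by stripping $\ror_i$ of probability mass from $P^\no_{h,s,\ba}$ and relocating it onto the state $s^\star_{i,h+1}$ minimizing $V_{i,h+1}^{\pi,\ror_i}$, then evaluates that kernel (verifying feasibility via the triangle inequality for TV). You instead work in the dual, invoking Lemma~\ref{lemma:tv-dual-form} and observing that for every feasible $\alpha\in[V_{\min},V_{\max}]$ one has $P^\no_{h,s,\ba}[V]_\alpha\le\alpha$ and $\min_{s'}[V]_\alpha(s')=V_{\min}$, so each dual objective is at most $(1-\ror_i)\alpha+\ror_i V_{\min}\le(1-\ror_i)V_{\max}+\ror_i V_{\min}$; your two supporting observations are both correct (clipping caps every entry at $\alpha$, and since $\alpha\ge V_{\min}$ the minimizing entry is untouched). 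Your route buys brevity and reuses machinery the paper already needs elsewhere (the same dual form drives Algorithm~\ref{alg:nash-dro-vi-finite} and Lemma~\ref{lemma:tv-dro-b-bound-star-marl}); the paper's route buys self-containedness --- it never appeals to strong duality, only to the geometry of the TV ball --- and makes the worst-case adversarial kernel explicit, which is conceptually aligned with how the lower-bound construction in Theorem~\ref{thm:robust-mg-lower-bound} perturbs transitions. Note that both arguments use $\ror_i\le 1$ in the same essential place: you need $1-\ror_i\ge 0$ to pass from $(1-\ror_i)\alpha$ to $(1-\ror_i)V_{\max}$, and the paper needs it to be able to remove $\ror_i$ total mass from a probability vector.
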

\begin{proof}
See Appendix~\ref{proof:lemma:pnorm-key-value-range}
\end{proof}

Equipped with the preceding lemma, we are now prepared to prove Theorem~\ref{thm:robust-mg-upper-bound} for three different robust solution concepts, respectively.

\subsection{Proof of learning robust NE/robust CCE}\label{sec:proof-nash-cce}

In this subsection, we focus on the two equilibrium concepts --- robust NE and robust CCE.
The proof is separated into several key steps as below.
\paragraph{Step 1: decomposing the error.}
Before proceeding, recall the goal is to prove that the output policy $\widehat{\pi}$ from Algorithm~\ref{alg:nash-dro-vi-finite} is an $\varepsilon$-robust NE/CCE with corresponding subroutine (cf.~line~\ref{eq:nash-subroutine}). Namely,  $\widehat{\pi} \in \Delta(\cA_1) \times \Delta(\cA_2) \times \Delta(\cA_n) $ is a product policy satisfies
\begin{align}
	\mathsf{gap}_{\mathsf{NE}}(\widehat{\pi}) \defn \max_{s\in\cS, i\in[n]} \left\{ V_{i,1}^{\star,\widehat{\pi}_{-i},\ror_i}(s) - V_{i,1}^{\widehat{\pi},\ror_i}(s)  \right\} \leq \varepsilon
\end{align}
or $\widehat{\pi} \in \Delta(\cA)$ is a (possibly correlated) policy obeys 
\begin{align}
	\mathsf{gap}_{\mathsf{CCE}}(\widehat{\pi}) \defn \max_{s\in\cS i\in[n] } \left\{ V_{i,1}^{\star,\widehat{\pi}_{-i},\ror_i}(s) - V_{i,1}^{\widehat{\pi},\ror_i}(s)  \right\} \leq \varepsilon.
\end{align}
We note that $\mathsf{gap}_{\mathsf{NE}}$ and $\mathsf{gap}_{\mathsf{CCE}}$ exhibit similar properties, differing only in the feasible set of policy $\widehat{\pi}$. So we consider them together.

To continue, we introduce the following best-response policy of the $i$-th player given other players policy $\widehat{\pi}_{-i}$:
\begin{align}
	\widetilde{\pi}_i^\star = \{\widetilde{\pi}_{i,h}^\star \}_{1\leq h\leq H} = \textrm{argmax}_{\pi_i'\in \cS \times [H] \rightarrow \Delta(\cA_i)} V_{i,1}^{\pi_i'\times \widehat{\pi}_{-i},\ror_i},
\end{align}
which indicates that 
\begin{align}
V_{i,1}^{\widetilde{\pi}_i^\star \times \widehat{\pi}_{-i},\ror_i} = V_{i,1}^{\star,\widehat{\pi}_{-i},\ror_i}.
\end{align}
Armed with above notations and facts, the term of interest $ V_{i,1}^{\star,\widehat{\pi}_{-i}, \ror_i} - V_{i,1}^{\widehat{\pi}, \ror_i} $ for any $i\in [n]$ can be decomposed as
\begin{align}
	 V_{i,1}^{\star,\widehat{\pi}_{-i},\ror_i} - V_{i,1}^{\widehat{\pi},\ror_i} 
	& = \left(V_{i,1}^{\star,\widehat{\pi}_{-i},\ror_i} - \widehat{V}_{i,1}^{\widetilde{\pi}_i^\star \times\widehat{\pi}_{-i},\ror_i}\right) + \left(\widehat{V}_{i,1}^{\widetilde{\pi}_i^\star\times\widehat{\pi}_{-i},\ror_i} - \widehat{V}_{i,1}^{\widehat{\pi},\ror_i} \right) + \left(\widehat{V}_{i,1}^{\widehat{\pi},\ror_i} - V_{i,1}^{\widehat{\pi},\ror_i} \right) \notag \\
	&  \overset{\mathrm{(i)}}{\leq}  \left(V_{i,1}^{\star,\widehat{\pi}_{-i},\ror_i} - \widehat{V}_{i,1}^{\widetilde{\pi}_i^\star\times\widehat{\pi}_{-i},\ror_i}\right) + \left(\widehat{V}_{i,1}^{\widetilde{\pi}_i^\star\times\widehat{\pi}_{-i},\ror_i} - \widehat{V}_{i,1}^{\star, \widehat{\pi}_{-i},\ror_i} \right) + \left(\widehat{V}_{i,1}^{\widehat{\pi},\ror_i} - V_{i,1}^{\widehat{\pi},\ror_i} \right) \notag \\
	&\leq \left(V_{i,1}^{\star,\widehat{\pi}_{-i},\ror_i} - \widehat{V}_{i,1}^{\widetilde{\pi}_i^\star\times\widehat{\pi}_{-i},\ror_i}\right) + \left(\widehat{V}_{i,1}^{\widehat{\pi},\ror_i} - V_{i,1}^{\widehat{\pi},\ror_i} \right)  \label{eq:l1-decompose}
\end{align}
where (i) holds by $\widehat{V}_{i,1}^{\widehat{\pi},\ror_i} = \widehat{V}_{i,1}^{\star, \widehat{\pi}_{-i},\ror_i}$ (resp.~$\widehat{V}_{i,1}^{\widehat{\pi},\ror_i}  \geq \widehat{V}_{i,1}^{\star, \widehat{\pi}_{-i},\ror_i}$) when the subroutine in line~\ref{eq:nash-subroutine} is $\mathsf{Compute}-\mathsf{Nash}$ (resp.~$\mathsf{Compute}-\mathsf{CCE}$) implied by Lemma~\ref{lem:algo-output-equilibrium}, and the last inequality follows from $\widehat{V}_{i,1}^{\widetilde{\pi}_i^\star\times\widehat{\pi}_{-i},\ror_i} \leq \max_{\pi_i'\in \cS \times [H] \rightarrow \Delta(\cA_i)} \widehat{V}_{i,1}^{\pi_i'\times \widehat{\pi}_{-i},\ror_i} = \widehat{V}_{i,1}^{\star, \widehat{\pi}_{-i},\ror_i}$ by definition.

\paragraph{Step 2: developing the recursion. }
We consider a more general form for any time step $h\in[H]$ and any joint policy $\pi$. Towards this, one has 
\begin{align}
V_{i,h}^{\pi,\ror_i} - \widehat{V}_{i,h}^{\pi, \ror_i} & \overset{\mathrm{(i)}}{=}  r_{i,h}^{\pi} + \Pi^\pi_h \inf_{P\in \unb^{\ror_i}(P^{\no}_{h,s,\ba})} P V_{i,h+1}^{\pi,\ror_i} - \Big(r_{i,h}^{\pi} + \Pi^\pi_h \inf_{P\in \unb^{\ror_i}(\widehat{P}^{\no}_{h,s,\ba})} P \widehat{V}_{i,h+1}^{\pi, \ror_i} \Big) \notag \\
& \overset{\mathrm{(ii)}}{=}  \Pv_{i,h}^{\pi, V} V_{i,h+1}^{\pi,\ror_i} - \Phatv_{i,h}^{\pi, \widehat{V}}\widehat{V}_{i,h+1}^{\pi, \ror_i} \label{eq:recursive-dro0} \\
& = \left( \Pv_{i,h}^{\pi, V} V_{i,h+1}^{\pi,\ror_i} -  \Pv_{i,h}^{\pi, \widehat{V}} \widehat{V}_{i,h+1}^{\pi, \ror_i} \right) + \left(\Pv_{i,h}^{\pi, \widehat{V}} \widehat{V}_{i,h+1}^{\pi, \ror_i} - \Phatv_{i,h}^{\pi, \widehat{V}}\widehat{V}_{i,h+1}^{\pi, \ror_i } \right ) \notag \\
& \overset{\mathrm{(iii)}}{\leq} \Pv_{i,h}^{\pi, \widehat{V}} \left(V_{i,h+1}^{\pi,\ror_i} - \widehat{V}_{i,h+1}^{\pi, \ror_i} \right) + \underbrace{\left|\Pv_{i,h}^{\pi, \widehat{V}} \widehat{V}_{i,h+1}^{\pi, \ror_i} - \Phatv_{i,h}^{\pi, \widehat{V}}\widehat{V}_{i,h+1}^{\pi, \ror_i} \right|}_{=: a^{\pi}_{i,h}} \label{eq:recursive-dro}
\end{align}
where (i) and (ii) hold by the matrix version of robust Bellman consistency equations in \eqref{eq:Q-value-mg-matrix} and  \eqref{eq:r-bellman-matrix}, and (iii) follows from the observation
\begin{align*}
	  \Pv_{i,h}^{\pi, V} V_{i,h+1}^{\pi,\ror_i}
	&\leq \Pv_{i,h}^{\pi, \widehat{V}} V_{i,h+1}^{\pi,\ror_i}
\end{align*}
due to the definition of $  \Pv_{i,h}^{\pi, V}  = \Pi^\pi_h \arg\min_{P\in \unb^{\ror_i}(P^{\no}_{h,s,\ba})} P V_{i,h+1}^{\pi,\ror_i} \leq \Pi^\pi_h \arg\min_{P\in \unb^{\ror_i}(P^{\no}_{h,s,\ba})} P \widehat{V}_{i,h+1}^{\pi,\ror_i}$ (cf.~\eqref{eq:inf-p-special-marl} and \eqref{eqn:ppivq-marl}). 

Recursively applying \eqref{eq:recursive-dro} leads to
\begin{align}
&V_{i,h}^{\pi,\ror_i} - \widehat{V}_{i,h}^{\pi,\ror_i} \notag \\
&  \leq \Pv_{i,h}^{\pi, \widehat{V}} \Pv_{i,h+1}^{\pi, \widehat{V}}  \left(V_{i,h+2}^{\pi,\ror_i} - \widehat{V}_{i,h+2}^{\pi, \ror_i} \right)  + \Pv_{i,h}^{\pi, \widehat{V}} \left|\Pv_{i,h+1}^{\pi, \widehat{V}} \widehat{V}_{i,h+2}^{\pi, \ror_i} - \Phatv_{i,h+1}^{\pi, \widehat{V}}\widehat{V}_{i,h+2}^{\pi,\ror_i} \right| + \left|\Pv_{i,h}^{\pi, \widehat{V}} \widehat{V}_{i,h+1}^{\pi, \ror_i} - \Phatv_{i,h}^{\pi, \widehat{V}}\widehat{V}_{i,h+1}^{\pi, \ror_i} \right|\notag \\
& \leq \cdots \leq \sum_{j=h}^H \left(\prod_{k=h}^{j-1}\Pv_{i,k}^{\pi, \widehat{V}}\right) a^{\pi}_{i,j}, \label{eq:recursive-equation}
\end{align}
where the last inequality holds by adopting the following notations
\begin{align}
	\left(\prod_{k=h}^{h-1}\Pv_{i,k}^{\pi, \widehat{V}}\right) = I \quad \text{and} \quad \left(\prod_{k=h}^{j-1}\Pv_{i,k}^{\pi, \widehat{V}}\right) = \Pv_{i,h}^{\pi, \widehat{V}} \cdot \Pv_{i,h+1}^{\pi, \widehat{V}} \cdots \Pv_{i,j-1}^{\pi, \widehat{V}}.
\end{align}

Next, similar to \eqref{eq:recursive-dro}, we can achieve
\begin{align}
\widehat{V}_{i,h}^{\pi, \ror_i} - V_{i,h}^{\pi,\ror_i} & \overset{\mathrm{(i)}}{=}   \Phatv_{i,h}^{\pi, \widehat{V}}\widehat{V}_{i,h+1}^{\pi, \ror_i} - \Pv_{i,h}^{\pi, V} V_{i,h+1}^{\pi,\ror_i}\notag \\
& = \left(  \Phatv_{i,h}^{\pi, \widehat{V}}\widehat{V}_{i,h+1}^{\pi, \ror_i} -  \Pv_{i,h}^{\pi, \widehat{V}}\widehat{V}_{i,h+1}^{\pi, \ror_i}  \right) + \left( \Pv_{i,h}^{\pi, \widehat{V}}\widehat{V}_{i,h+1}^{\pi, \ror_i} - \Pv_{i,h}^{\pi, V} V_{i,h+1}^{\pi,\ror_i} \right ) \notag \\
& \leq \Pv_{i,h}^{\pi, V} \left(\widehat{V}_{i,h+1}^{\pi, \ror_i} - V_{i,h+1}^{\pi,\ror_i} \right) + \left|\Pv_{i,h}^{\pi, \widehat{V}} \widehat{V}_{i,h+1}^{\pi, \ror_i} - \Phatv_{i,h}^{\pi, \widehat{V}}\widehat{V}_{i,h+1}^{\pi, \ror_i} \right| \label{eq:recursive-dro2}
\end{align}
where (i) holds by \eqref{eq:recursive-dro0}, and the last inequality follows from the fact $\Pv_{i,h}^{\pi, \widehat{V}} \widehat{V}_{i,h+1}^{\pi}
	\leq \Pv_{i,h}^{\pi, V}\widehat{V}_{i,h+1}^{\pi}$ (see the definition of $\Pv_{i,h}^{\pi, \widehat{V}}$, i.e., \eqref{eq:inf-p-special-marl} and \eqref{eqn:ppivq-marl}).

Then following the routine of achieving \eqref{eq:recursive-equation}, we arrive at
\begin{align}
&\widehat{V}_{i,h}^{\pi, \ror_i} - V_{i,h}^{\pi,\ror_i} \leq \sum_{j=h}^H \left(\prod_{k=h}^{j-1}\Pv_{i,k}^{\pi, V}\right) a^{\pi}_{i,j}. \label{eq:recursive-equation2}
\end{align}
Summing up  \eqref{eq:recursive-equation} and \eqref{eq:recursive-equation2}, one has for any joint policy $\pi$,
\begin{align}
\left|\widehat{V}_{i,h}^{\pi, \ror_i} - V_{i,h}^{\pi,\ror_i} \right| &\leq \max\{V_{i,h}^{\pi,\ror_i} - \widehat{V}_{i,h}^{\pi, \ror_i} , \widehat{V}_{i,h}^{\pi, \ror_i} - V_{i,h}^{\pi,\ror_i} \} \notag \\
&\leq  \max \left\{ \sum_{j=h}^H \left(\prod_{k=h}^{j-1}\Pv_{i,k}^{\pi, \widehat{V}}\right) a^{\pi}_{i,j} , \sum_{j=h}^H \left(\prod_{k=h}^{j-1}\Pv_{i,k}^{\pi, V}\right) a^{\pi}_{i,j}\right\}, \label{eq:summary-control-term}
\end{align}
where the $\max$ operator is taken entry-wise for the vectors.

To continue, we introduce an important concentration result about the value estimation error as follows:
\begin{lemma}\label{lemma:tv-dro-b-bound-star-marl}
Consider any $\delta \in (0,1)$. With probability at least $1- \delta$, one has for any joint policy $\pi$,
\begin{align}\label{eq:tv-dro-b-bound-star}
	\forall (h,i)\in [H] \times [n]: \quad a_{i,h}^\pi &=\left|\Pv_{i,h}^{\pi, \widehat{V}} \widehat{V}_{i,h+1}^{\pi, \ror_i} - \Phatv_{i,h}^{\pi, \widehat{V}}\widehat{V}_{i,h+1}^{\pi, \ror_i} \right| \notag \\
  &\leq  2\sqrt{\frac{\log(\frac{18S \allA nHN}{\delta})}{N}}  \sqrt{\mathsf{Var}_{\Pv_{h}^{\pi}}(\widehat{V}_{i,h+1}^{\pi})} +  \frac{\log(\frac{18S\allA nHN}{\delta}) H}{N} 1  \notag \\
  &\leq 3\sqrt{\frac{H^2\log(\frac{18S\allA nHN}{\delta})}{N}}1
\end{align}
where $\mathsf{Var}_{\Pv_{h}^{\pi}}(\cdot)$ is defined in \eqref{eq:defn-variance-vector-marl}.
\end{lemma}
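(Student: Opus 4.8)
The plan is to reduce the quantity $a_{i,h}^\pi$ to a concentration statement about the empirical kernel applied to a clipped value vector, and then control it by a Bernstein-type argument that exploits the layered independence structure of the samples.

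First I would invoke the strong-duality form of the robust Bellman operator (Lemma~\ref{lemma:tv-dual-form}). Writing out the $(s,\ba)$-th rows of $\Pv_{i,h}^{\pi, \widehat{V}}$ and $\Phatv_{i,h}^{\pi, \widehat{V}}$, each is an infimum over a TV-ball that, by the lemma, equals $\max_{\alpha}\{ P^{\no}_{h,s,\ba}[\widehat{V}_{i,h+1}^{\pi,\ror_i}]_\alpha - \ror_i(\alpha - \min_{s'}[\widehat{V}_{i,h+1}^{\pi,\ror_i}]_\alpha(s'))\}$ with the true kernel $P^{\no}$, and the analogous expression with $\widehat{P}^{\no}$. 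Since the penalty $\ror_i(\alpha - \min_{s'}[\widehat{V}_{i,h+1}^{\pi,\ror_i}]_\alpha(s'))$ is identical in both, the elementary bound $|\max_\alpha f - \max_\alpha g| \le \max_\alpha|f-g|$ collapses the difference to $\max_{\alpha}|(P^{\no}_{h,s,\ba} - \widehat{P}^{\no}_{h,s,\ba})[\widehat{V}_{i,h+1}^{\pi,\ror_i}]_\alpha|$, where the relevant $\alpha$ lies in $[0,H]$.

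Next I would establish the concentration of $(P^{\no}_{h,s,\ba} - \widehat{P}^{\no}_{h,s,\ba})[\widehat{V}_{i,h+1}^{\pi,\ror_i}]_\alpha$. The central structural fact is that, under a generative model with samples drawn independently per layer, $\widehat{P}^{\no}_h$ depends only on the level-$h$ samples, whereas $\widehat{V}_{i,h+1}^{\pi,\ror_i}$ is produced by the backward recursion \eqref{eq:r-bellman-matrix} from the estimated kernels at levels $h+1,\dots,H$, hence is independent of $\widehat{P}^{\no}_h$. Conditioning on the higher-level samples freezes $\widehat{V}_{i,h+1}^{\pi,\ror_i}$ into a deterministic vector in $[0,H]^S$, so for each fixed $\alpha$ I can apply Bernstein's inequality to the $N$ fresh i.i.d.\ draws defining $\widehat{P}^{\no}_{h,s,\ba}$, producing the variance term $\sqrt{\Var_{P^{\no}_{h,s,\ba}}([\widehat{V}_{i,h+1}^{\pi,\ror_i}]_\alpha)/N}$ and an $H\log/N$ remainder. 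Discretizing $\alpha$ over $[0,H]$ at resolution $1/N$ (the map $\alpha\mapsto[\,\cdot\,]_\alpha$ is $1$-Lipschitz and $P^{\no},\widehat{P}^{\no}$ are probability vectors, so the grid error is $O(1/N)$) and union-bounding over $(s,\ba)\in\cS\times\cA$, $i\in[n]$, $h\in[H]$, and the $\alpha$-grid yields the logarithmic factor $\log(18S\allA nHN/\delta)$.

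To finish, I would pass from the clipped variance to the stated one: clipping is a $1$-Lipschitz contraction, so $\Var_{P^{\no}_{h,s,\ba}}([\widehat{V}_{i,h+1}^{\pi,\ror_i}]_\alpha)\le \Var_{P^{\no}_{h,s,\ba}}(\widehat{V}_{i,h+1}^{\pi,\ror_i})$, and averaging the per-$(s,\ba)$ bounds against $\Pi^\pi_h$ (Jensen for the square root together with the law of total variance, $\Pi^\pi_h\Var_{P^{\no}}(\widehat{V})\le\Var_{\Pv_{h}^{\pi}}(\widehat{V})$) recovers $\sqrt{\Var_{\Pv_{h}^{\pi}}(\widehat{V}_{i,h+1}^{\pi})}$; the crude second inequality then follows from $\Var_{\Pv_{h}^{\pi}}(\widehat{V})\le H^2$. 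The main obstacle is the required uniformity over all joint policies $\pi$: the freezing argument handles a single fixed $\pi$, but the policies in the decomposition \eqref{eq:l1-decompose} (the output $\widehat{\pi}$ and the induced best responses) are data-dependent. I would either union-bound over a covering net of the reachable conditioned value vectors or, to avoid an extraneous $\sqrt{S}$ in the logarithm, deploy a leave-one-out / auxiliary-kernel decoupling in the spirit of the single-agent robust analysis, so that the concentration holds simultaneously for every $\pi$ with the clean logarithmic dependence claimed.
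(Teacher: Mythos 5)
Your proposal follows the paper's proof essentially step for step: the strong-duality reduction via Lemma~\ref{lemma:tv-dual-form} (the dual penalty is the same function of $\alpha$ under both kernels, so only $\max_{\alpha\in[0,H]}\big|(P^{\no}_{h,s,\ba}-\widehat{P}^{\no}_{h,s,\ba})[\widehat{V}_{i,h+1}^{\pi,\ror_i}]_{\alpha}\big|$ survives), Bernstein's inequality for each fixed $\alpha$ combined with $\mathrm{Var}_{P^{\no}_{h,s,\ba}}([V]_\alpha)\le \mathrm{Var}_{P^{\no}_{h,s,\ba}}(V)$, a union bound over a $1$-Lipschitz net in $\alpha$ and over $(h,i,s,\ba)$, and the passage to matrix form by Jensen's inequality together with the law of total variance, ending with the crude bound $\|\widehat{V}_{i,h+1}^{\pi,\ror_i}\|_\infty\le H$. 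All of this is exactly the paper's two-step argument.

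Where you diverge is the closing paragraph, and there you make the problem harder than it is. The uniformity issue you flag is real in the sense that no union bound can range over all (uncountably many) policies, but it is resolved by the very freezing argument you already set up --- no covering net and no leave-one-out decoupling are needed. The reason is structural: every policy that the upper-bound analysis feeds into this lemma, namely the algorithm output $\widehat{\pi}$ and the best responses $\widetilde{\pi}_i^\star\times\widehat{\pi}_{-i}$ appearing in \eqref{eq:l1-decompose}, is produced by backward induction, so its components at steps $h+1,\dots,H$ are measurable with respect to the step-$(h+1{:}H)$ samples alone. Hence $\widehat{V}_{i,h+1}^{\pi,\ror_i}$ is independent of $\widehat{P}^{\no}_h$ for these data-dependent policies exactly as for a fixed one, and conditioning on the future-step samples makes the Bernstein step go through verbatim; this is the precise sense in which the paper applies its fixed-$V$ bound with $V=\widehat{V}_{i,h+1}^{\pi,\ror_i}$, and the sense in which ``for any joint policy $\pi$'' should be read. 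Of your two proposed fixes, the covering net over conditioned value vectors in $[0,H]^S$ carries metric entropy of order $S\log(\cdot)$ and would inflate the bound by roughly a $\sqrt{S}$ factor, breaking the claimed rate (as you yourself suspected); leave-one-out decoupling would work but imports machinery from settings (e.g., stationary infinite-horizon MDPs, where the value estimate depends on all samples) whose difficulty the per-step fresh-sample structure here removes.
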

\begin{proof}
See Appendix~\ref{proof:lemma:tv-dro-b-bound-star-marl}.
\end{proof}

\paragraph{Step 3: controlling the first term in \eqref{eq:summary-control-term}.}
Let us introduce some additional notations for convenience. Recall $e_s$ denote a $S$-dimensional standard basis supported on the $s$-th element. We denote
\begin{align}
	d_{h}^h = e_s \quad \text{and} \quad d_h^j = e_s^\top \left(\prod_{k=h}^{j-1}\Pv_{i,k}^{\pi, \widehat{V}}\right) \quad \forall j =h+1, \cdots, H. \label{eq:defn-of-d}
\end{align}

Armed with above notations and facts, for any $s\in\cS$, we have
\begin{align}
	V_{i,h}^{\pi,\ror_i}(s) - \widehat{V}_{i,h}^{\pi,\ror_i}(s) &= \left< e_s, V_{i,h}^{\pi,\ror_i} - \widehat{V}_{i,h}^{\pi, \ror_i} \right> = \sum_{j=h}^H \left< d_h^j, a^{\pi}_{i,j} \right> \notag \\
	& \leq \sum_{j=h}^H \left< d_h^j, \left(2\sqrt{\frac{\log(\frac{18S \allA nHN}{\delta})}{N}} \sqrt{\mathsf{Var}_{\Pv_{j}^{\pi}}(\widehat{V}_{i,j+1}^{\pi, \ror_i})} +  \frac{\log(\frac{18S\allA nHN}{\delta}) H}{N} 1\right) \right> \notag \\
	&\leq \frac{\log(\frac{18S\allA nHN}{\delta}) H^2}{N} + 2\sqrt{\frac{\log(\frac{18S \allA nHN}{\delta})}{N}} \sum_{j=h}^H \left< d_h^j, \sqrt{\mathsf{Var}_{\Pv_{j}^{\pi}}(\widehat{V}_{i,j+1}^{\pi, \ror_i})} \right> \notag \\
	&\overset{\mathsf{(i)}}{\leq}  \frac{\log(\frac{18S\allA nHN}{\delta}) H^2}{N} + 2\sqrt{\frac{\log(\frac{18S \allA nHN}{\delta})}{N}}  \sqrt{H\sum_{j=h}^H \left< d_h^j, \mathsf{Var}_{\Pv_{j}^{\pi}}(\widehat{V}_{i,j+1}^{\pi, \ror_i}) \right> } \notag \\
	& \leq \frac{\log(\frac{18S\allA nHN}{\delta}) H^2}{N} + \underbrace{2\sqrt{\frac{H\log(\frac{18S \allA nHN}{\delta})}{N}}  \sqrt{\sum_{j=h}^H \left< d_h^j, \mathsf{Var}_{\Pv_{i,j}^{\pi, \widehat{V}}}(\widehat{V}_{i,j+1}^{\pi, \ror_i}) \right> } }_{=: \cB_1}  \notag \\
	& + \underbrace{2\sqrt{\frac{\log(\frac{18S \allA nHN}{\delta})}{N}} \sqrt{H\sum_{j=h}^H \left< d_h^j, \left| \mathsf{Var}_{\Pv_{j}^{\pi}}(\widehat{V}_{i,j+1}^{\pi, \ror_i}) - \mathsf{Var}_{\Pv_{i,j}^{\pi, \widehat{V}}}(\widehat{V}_{i,j+1}^{\pi, \ror_i})\right| \right>} }_{=: \cB_2} \label{eq:key-concentration-bound2}
\end{align}
where (i) holds by the Cauchy-Schwarz inequality.

Then we control the two main terms in \eqref{eq:key-concentration-bound2} separately.
\begin{itemize}
	\item {\bf Controlling $\cB_1$.}
To begin with, we introduce the following lemma about $\sum_{j=h}^H \left< d_h^j, \mathsf{Var}_{\Pv_{i,j}^{\pi, \widehat{V}}}(\widehat{V}_{i,j+1}^{\pi, \ror_i})  \right>$ whose proof is postponed to Appendix~\ref{proof:lem:key-lemma-reduce-H}.

\begin{lemma}\label{lem:key-lemma-reduce-H}
Consider any $\delta\in(0,1)$. With probability at least $1-\delta$, one has for any joint policy $\pi$,
\begin{align}
 \forall (h,i)\in [H] \times [n]:\quad  &\sum_{j=h}^H \left< d_h^j, \mathsf{Var}_{\Pv_{i,j}^{\pi, \widehat{V}}}(\widehat{V}_{i,j+1}^{\pi, \ror_i}) \right>  \notag\\
 & \leq 3H \left(\max_{s\in\cS}\widehat{V}_{i,j+1}^{\pi, \ror_i}(s) - \min_{s\in\cS}\widehat{V}_{i,j+1}^{\pi, \ror_i}(s)\right) \left(1 + 2H\sqrt{\frac{\log(\frac{18S\allA nHN}{\delta})}{N}} \right).
\end{align}
\end{lemma}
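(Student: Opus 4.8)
The plan is to run a law-of-total-variance (LTV) telescoping argument, adapted to the robust setting where the matrix $\Pv_{i,j}^{\pi,\widehat{V}}$ that builds the occupancy vectors $d_h^j$ (recall $d_h^{j+1} = d_h^j\,\Pv_{i,j}^{\pi,\widehat{V}}$) and the variance is \emph{not} the matrix $\Phatv_{i,j}^{\pi,\widehat{V}}$ governing the empirical robust Bellman recursion \eqref{eq:r-bellman-matrix}. Throughout I condition on the high-probability event of Lemma~\ref{lemma:tv-dro-b-bound-star-marl}, which holds for \emph{all} joint policies simultaneously with probability at least $1-\delta$, so that $a_{i,j}^\pi = \big|\Pv_{i,j}^{\pi,\widehat{V}}\widehat{V}_{i,j+1}^{\pi,\ror_i} - \Phatv_{i,j}^{\pi,\widehat{V}}\widehat{V}_{i,j+1}^{\pi,\ror_i}\big| \le 3H\sqrt{\log(\tfrac{18S\allA nHN}{\delta})/N}\,\mathbf{1}$. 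Write $R$ for the range on the right-hand side, uniformly bounded over $j$ by $\min\{1/\ror_i, H\}$ via Lemma~\ref{lemma:pnorm-key-value-range}.

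First I exploit the shift-invariance of the variance to trade the large \emph{magnitude} ($\sim H$) of $\widehat{V}_{i,j}^{\pi,\ror_i}$ for its small \emph{range} $R$. Setting $\mu_j \defn \min_s \widehat{V}_{i,j}^{\pi,\ror_i}(s)$ and $\widetilde{W}_j \defn \widehat{V}_{i,j}^{\pi,\ror_i} - \mu_j\mathbf{1}$, I have $\mathsf{Var}_{\Pv_{i,j}^{\pi,\widehat{V}}}(\widehat{V}_{i,j+1}^{\pi,\ror_i}) = \mathsf{Var}_{\Pv_{i,j}^{\pi,\widehat{V}}}(\widetilde{W}_{j+1})$ with $\widetilde{W}_{j+1}\in[0,R]$. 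Since $\Phatv_{i,j}^{\pi,\widehat{V}}$ is row-stochastic one checks $\mu_j\ge\mu_{j+1}$ (so $\{\mu_j\}$ is nonincreasing in $j$ with $\mu_{H+1}=0$), and the shifted values obey $\widetilde{W}_j = \widetilde{r}_j + \Phatv_{i,j}^{\pi,\widehat{V}}\widetilde{W}_{j+1}$ with modified reward $\widetilde{r}_j \defn r_{i,j}^{\pi} + (\mu_{j+1}-\mu_j)\mathbf{1}$, whose positive part satisfies $(\widetilde{r}_j)_+ \le r_{i,j}^{\pi}$ because $\mu_{j+1}-\mu_j\le 0$.

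Next I telescope. Expanding $\mathsf{Var}_{\Pv_{i,j}^{\pi,\widehat{V}}}(\widetilde{W}_{j+1}) = \Pv_{i,j}^{\pi,\widehat{V}}(\widetilde{W}_{j+1}^{\circ 2}) - (\Pv_{i,j}^{\pi,\widehat{V}}\widetilde{W}_{j+1})^{\circ 2}$ and using $d_h^{j+1}=d_h^j\,\Pv_{i,j}^{\pi,\widehat{V}}$ turns the first piece into $\langle d_h^{j+1}, \widetilde{W}_{j+1}^{\circ 2}\rangle$. The kernel mismatch enters the second piece through $\Pv_{i,j}^{\pi,\widehat{V}}\widetilde{W}_{j+1} = (\widetilde{W}_j-\widetilde{r}_j) - \epsilon_j$ with $\epsilon_j\defn (\Phatv_{i,j}^{\pi,\widehat{V}}-\Pv_{i,j}^{\pi,\widehat{V}})\widetilde{W}_{j+1}$ and $|\epsilon_j|=a_{i,j}^\pi$. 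Using $(\widetilde{W}_j-\widetilde{r}_j-\epsilon_j)^{\circ 2}\ge (\widetilde{W}_j-\widetilde{r}_j)^{\circ 2} - 2|\widetilde{W}_j-\widetilde{r}_j|\circ|\epsilon_j|$ and telescoping the index shift $\langle d_h^{j+1},\widetilde{W}_{j+1}^{\circ 2}\rangle$ against $\langle d_h^{j},\widetilde{W}_{j}^{\circ 2}\rangle$ down to $-\widetilde{W}_h(s)^2\le 0$ gives
\begin{align*}
\sum_{j=h}^H \left\langle d_h^j, \mathsf{Var}_{\Pv_{i,j}^{\pi,\widehat{V}}}(\widetilde{W}_{j+1}) \right\rangle \;\le\; 2\sum_{j=h}^H \left\langle d_h^j, \widetilde{W}_j \circ \widetilde{r}_j \right\rangle \;+\; 2\sum_{j=h}^H \left\langle d_h^j, \big|\widetilde{W}_j - \widetilde{r}_j\big| \circ |\epsilon_j| \right\rangle .
\end{align*}

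Finally I bound the two sums. For the first, $\widetilde{W}_j\le R\mathbf{1}$ and $(\widetilde{r}_j)_+\le r_{i,j}^{\pi}$ give $\langle d_h^j, \widetilde{W}_j\circ\widetilde{r}_j\rangle \le R\langle d_h^j, r_{i,j}^{\pi}\rangle \le R$ (each $d_h^j$ is a probability vector and $r_{i,j}^{\pi}\le\mathbf{1}$), so the first sum is $\le 2HR$. For the second, $|\widetilde{W}_j-\widetilde{r}_j| = \Phatv_{i,j}^{\pi,\widehat{V}}\widetilde{W}_{j+1}\le R\mathbf{1}$ and the conditioning event bounds $|\epsilon_j|$, yielding $\le 6RH^2\sqrt{\log(\tfrac{18S\allA nHN}{\delta})/N}$. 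Adding the two and using $2HR\le 3HR$ produces $3HR\big(1 + 2H\sqrt{\log(\tfrac{18S\allA nHN}{\delta})/N}\big)$, as claimed. I expect the kernel mismatch between $\Pv$ and $\Phatv$ to be the main obstacle: it destroys the exact LTV identity, so one must \emph{simultaneously} (i) shift by the per-step minimum $\mu_j$ to convert the $H$-scale magnitude of $\widehat{V}$ into its $R$-scale range while keeping the recursion intact (handling the possibly negative modified reward via its positive part), and (ii) absorb the residual $\epsilon_j$ using the sharp concentration control of $a_{i,j}^\pi$ from Lemma~\ref{lemma:tv-dro-b-bound-star-marl}.
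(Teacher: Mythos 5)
Your proof is correct and follows essentially the same route as the paper's: shift by the per-step minimum to trade the magnitude of $\widehat{V}_{i,\cdot}^{\pi,\ror_i}$ for its range, rewrite the empirical robust Bellman recursion with a modified reward, expand the variance and telescope against the occupancy vectors $d_h^j$, and absorb the $\Pv_{i,j}^{\pi,\widehat{V}}$-versus-$\Phatv_{i,j}^{\pi,\widehat{V}}$ mismatch via the concentration bound of Lemma~\ref{lemma:tv-dro-b-bound-star-marl}. The only (cosmetic) difference is that you control all per-step ranges by a single uniform bound $R$, whereas the paper invokes the monotonicity claim $\|\widehat{V}_h'\|_\infty \geq \|\widehat{V}_{h+1}'\|_\infty \geq \cdots$ in its final telescoping step; your handling of this point is, if anything, slightly more careful.
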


Applying Lemma~\ref{lem:key-lemma-reduce-H} to $\cB_1$ in \eqref{eq:key-concentration-bound2}, we arrive at
\begin{align}
\cB_1 & =2\sqrt{\frac{H\log \left(\frac{18S \allA nHN}{\delta} \right)}{N}}  \sqrt{\sum_{j=h}^H \left< d_h^j, \mathsf{Var}_{\Pv_{i,j}^{\pi, \widehat{V}}}(\widehat{V}_{i,j+1}^{\pi, \ror_i}) \right> } \notag \\
& \leq 2\sqrt{\frac{H\log \left(\frac{18S \allA nHN}{\delta} \right)}{N}} \sqrt{3H \left(\max_{s\in\cS}\widehat{V}_{i,j+1}^{\pi, \ror_i}(s) - \min_{s\in\cS}\widehat{V}_{i,j+1}^{\pi, \ror_i}(s)\right) \Bigg(1 + 2H\sqrt{\frac{\log \left(\frac{18S\allA nHN}{\delta} \right)}{N}} \Bigg)} \notag \\
&  \overset{\mathrm{(i)}}{\leq} 2 \sqrt{  \frac{3H^2\log \left(\frac{18S \allA nHN}{\delta} \right)}{N} \min \left\{\frac{1}{\ror_i}, H-h+1 \right\} \Bigg(1 + 2H\sqrt{\frac{\log \left(\frac{18S\allA nHN}{\delta} \right)}{N}} \Bigg) } \notag \\
& \leq 6\sqrt{  \frac{H^2\min \left\{1/\ror_i,H\right\}  \log \left(\frac{18S \allA nHN}{\delta} \right)}{N}  }, \label{eq:key-concentration-bound2-solve1}
\end{align}
where (i) holds by applying Lemma~\ref{proof:lemma:pnorm-key-value-range}, and the last inequality follows by taking $N \geq 4H^2\log \big(\frac{18S \allA nHN}{\delta} \big)$.

\item {\bf Controlling  $\cB_2$.}
We introduce another lemma; refer to the proof in Appendix~\ref{proof:eq:extra-lemma1}.
\begin{lemma}\label{eq:extra-lemma1}
Consider the standard \rmg $\mathcal{MG} = \big\{ \cS, \{\cA_i\}_{1 \le i \le n},\{\cU^{\ror_i}(P^\no)\}_{1 \le i \leq n}, \rew,  H \big\}$ and empirical \rmg $\mathcal{MG}_{\mathsf{rob}} = \big\{ \cS, \{\cA_i\}_{1 \le i \le n},\{\cU^{\ror_i}(\widehat{P}^\no)\}_{1 \le i \leq n}, \rew,  H \big\}$. Considering any joint policy $\pi$,  any transition kernel $P'\in\mathbb{R}^S$ and any $\widetilde{P} \in\mathbb{R}^S$ obeying $\widetilde{P}\in \cU^{\ror_i}(P)$, one has
\begin{subequations}
\begin{align}
\forall (i,j)\in[n]\times [H]: \quad &\left| \mathsf{Var}_{P'}(\widehat{V}_{i,j+1}^{\pi, \ror_i}) - \mathsf{Var}_{\widetilde{P}}(\widehat{V}_{i,j+1}^{\pi, \ror_i})\right| \leq    \min \left\{\frac{1}{\ror_i}, H-h+1 \right\}, \label{eq:lemma-6-line1} \\
& \left| \mathsf{Var}_{P'}(V_{i,j+1}^{\pi, \ror_i}) - \mathsf{Var}_{\widetilde{P}}(V_{i,j+1}^{\pi, \ror_i})\right| \leq    \min \left\{\frac{1}{\ror_i}, H-h+1 \right\}. \label{eq:lemma-6-line2}
\end{align}
\end{subequations}
\end{lemma}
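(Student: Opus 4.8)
The plan is to reduce both claims, \eqref{eq:lemma-6-line1} and \eqref{eq:lemma-6-line2}, to a single statement: for an arbitrary value vector $W\in\mathbb{R}^S$ (to be instantiated as $\widehat{V}_{i,j+1}^{\pi,\ror_i}$ or $V_{i,j+1}^{\pi,\ror_i}$) and two probability vectors $P',\widetilde{P}\in\Delta(\cS)$ with $\frac{1}{2}\|P'-\widetilde{P}\|_1\leq\ror_i$ (which is exactly the constraint $\widetilde{P}\in\cU^{\ror_i}(P')$ that holds in every instantiation), one has $\big|\mathsf{Var}_{P'}(W)-\mathsf{Var}_{\widetilde{P}}(W)\big|\leq\ror_i R^2$, where $R:=\max_s W(s)-\min_s W(s)$ is the range of $W$. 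Granting this, I would finish by invoking Lemma~\ref{lemma:pnorm-key-value-range} (applied to $\mathcal{MG}_{\mathsf{rob}}$ for $V$, and to $\widehat{\mathcal{MG}}_{\mathsf{rob}}$ for $\widehat{V}$, which are legitimate RMGs) to obtain $R\leq\min\{1/\ror_i,H-h+1\}$, and then using $\ror_i R\leq 1$ to conclude $\ror_i R^2\leq R\leq\min\{1/\ror_i,H-h+1\}$. The argument is verbatim identical for the hatted and unhatted value functions, so only this reduced statement needs work.

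The heart of the reduced statement is the variational characterization of the variance, namely $\mathsf{Var}_{P}(W)=\min_{c\in\mathbb{R}}P\,(W-c\one)^{\circ 2}$, whose minimizer is the mean $c=PW$. Writing $\mu':=P'W$ and $\widetilde{\mu}:=\widetilde{P}W$, I would plug the sub-optimal center of one distribution into the other to sandwich the difference:
\begin{align*}
(P'-\widetilde{P})(W-\mu'\one)^{\circ 2}\;\leq\;\mathsf{Var}_{P'}(W)-\mathsf{Var}_{\widetilde{P}}(W)\;\leq\;(P'-\widetilde{P})(W-\widetilde{\mu}\one)^{\circ 2},
\end{align*}
where the left inequality uses $\mathsf{Var}_{\widetilde{P}}(W)\leq\widetilde{P}(W-\mu'\one)^{\circ 2}$ together with $\mathsf{Var}_{P'}(W)=P'(W-\mu'\one)^{\circ 2}$, and the right inequality uses $\mathsf{Var}_{P'}(W)\leq P'(W-\widetilde{\mu}\one)^{\circ 2}$ together with $\mathsf{Var}_{\widetilde{P}}(W)=\widetilde{P}(W-\widetilde{\mu}\one)^{\circ 2}$.

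Both envelopes have the form $(P'-\widetilde{P})g$ with $g=(W-c\one)^{\circ 2}$ for a center $c\in\{\mu',\widetilde{\mu}\}$. Since $c$ is a convex combination of the entries of $W$, it lies in $[\min_s W(s),\max_s W(s)]$, so every entry obeys $0\leq g(s)\leq R^2$. The crucial simplification is that $P'-\widetilde{P}$ sums to zero, hence $(P'-\widetilde{P})g=(P'-\widetilde{P})\big(g-\tfrac{R^2}{2}\one\big)$ for free; the recentered vector $g-\tfrac{R^2}{2}\one$ has sup-norm at most $R^2/2$. Hölder's inequality then yields $\big|(P'-\widetilde{P})g\big|\leq\|P'-\widetilde{P}\|_1\cdot\frac{R^2}{2}\leq 2\ror_i\cdot\frac{R^2}{2}=\ror_i R^2$, which (through the sandwich) establishes the reduced statement.

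The main obstacle is obtaining the clean constant, i.e.\ the bound $\ror_i R^2$ (hence exactly $\min\{1/\ror_i,H-h+1\}$), rather than a constant multiple of it. A naive route that expands $\mathsf{Var}_{P}(W)=P(W^{\circ 2})-(PW)^{\circ 2}$ and bounds the two pieces separately loses a constant factor; the two devices that buy tightness are (i) the variational characterization, which collapses everything into a single difference-of-measures applied to the squared-deviation vector $g$, and (ii) the zero-sum property of $P'-\widetilde{P}$, which lets me halve $\|g\|_\infty$ by recentering. The one bookkeeping point I would verify explicitly is that the TV-closeness $\frac{1}{2}\|P'-\widetilde{P}\|_1\leq\ror_i$ is genuinely available in the downstream use: there $P'$ is the (true or empirical) nominal kernel and $\widetilde{P}$ is a worst-case member of the TV ball $\cU^{\ror_i}(P')$ centered at it, so closeness is precisely the defining constraint of the uncertainty set.
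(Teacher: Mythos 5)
Your proposal is correct, and it reaches the same final bound through the same outer skeleton as the paper: bound the range $R$ of the (hatted or unhatted) robust value function by $\min\{1/\ror_i,\,H-h+1\}$ via Lemma~\ref{lemma:pnorm-key-value-range} (applied to the empirical RMG for $\widehat{V}$), exploit the TV-closeness $\frac12\|P'-\widetilde{P}\|_1\le\ror_i$, and finish with $\ror_i R^2\le R$. Where you differ is the middle step. The paper recenters the value vector at its minimum, $V^{\mathsf{span}}\defn \widehat{V}^{\pi,\ror_i}_{i,h}-\min_{s'}\widehat{V}^{\pi,\ror_i}_{i,h}(s')\one$, invokes translation invariance of the variance, and then asserts in a single line a bound of the form $\|\widetilde{P}-P'\|_1\|V^{\mathsf{span}}\|_\infty$ — which, as printed, is missing a square on the sup-norm (read literally it gives $2\ror_i R$, which does not imply the next displayed quantity $\ror_i R^2$); the intended inequality is clearly $\lesssim\|\widetilde{P}-P'\|_1\|V^{\mathsf{span}}\|_\infty^2$, but no justification is given, and a naive expansion of $\mathsf{Var}_P(W)=P(W\circ W)-(PW)^{\circ 2}$ only yields this up to a constant factor. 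Your route supplies exactly the missing justification in a constant-free way: the variational characterization $\mathsf{Var}_P(W)=\min_{c}P(W-c\one)^{\circ 2}$ gives the two-sided sandwich by cross-plugging the suboptimal centers $\mu'=P'W$ and $\widetilde{\mu}=\widetilde{P}W$, and then recentering the squared-deviation vector $g=(W-c\one)^{\circ 2}$ by $\tfrac{R^2}{2}\one$ (legitimate since $P'-\widetilde{P}$ sums to zero) halves its sup-norm, so H\"older yields precisely $|\mathsf{Var}_{P'}(W)-\mathsf{Var}_{\widetilde{P}}(W)|\le\|P'-\widetilde{P}\|_1\cdot\tfrac{R^2}{2}\le\ror_i R^2$. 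In short, your argument is a self-contained and tighter derivation of the one inequality the paper states without proof (and with a typo), so it is a valid — indeed preferable — proof of the lemma.
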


Armed with above lemma, we observe that
\begin{align}
	\left| \mathsf{Var}_{\Pv_{j}^{\pi}}(\widehat{V}_{i,j+1}^{\pi, \ror_i}) - \mathsf{Var}_{\Pv_{i,j}^{\pi, \widehat{V}}}(\widehat{V}_{i,j+1}^{\pi, \ror_i})\right| & \overset{\mathrm{(i)}}{=} \left| \Pi_j^{\pi} \left(\mathsf{Var}_{P^\no_j}(\widehat{V}_{i,j+1}^{\pi, \ror_i}) - \mathsf{Var}_{P_{i,j}^{\pi, \widehat{V}}}(\widehat{V}_{i,j+1}^{\pi, \ror_i}) \right)\right|    \notag \\
	& \overset{\mathrm{(ii)}}{\leq} \left\| \mathsf{Var}_{P^\no_j}(\widehat{V}_{i,j+1}^{\pi, \ror_i}) - \mathsf{Var}_{P_{i,j}^{\pi, \widehat{V}}}(\widehat{V}_{i,j+1}^{\pi, \ror_i}) \right\|_\infty 1    \notag \\
	& \leq \min \left\{\frac{1}{\ror_i}, H-h+1 \right\} 1, \label{eq:tv-first-C2}
\end{align}
where (i) and (ii) follows from the matrix notations $\Pi_j^\pi$ (cf~\eqref{eqn:bigpi}) and $\Pv_{j}^{\pi}, \Pv_{i,j}^{\pi, \widehat{V}}$ (cf~\eqref{eqn:ppivq-marl}), and the last inequality holds by applying Lemma~\ref{eq:extra-lemma1} with $P' = P^\no_{j,s,\ba}, \widetilde{P} = P_{i,j,s,\ba}^{\pi, \widehat{V}}$ for all $(s,\ba)\in \cS \times \cA$.

Plugging back \eqref{eq:tv-first-C2} to \eqref{eq:key-concentration-bound2}, it can be verified that
\begin{align}
\cB_2 &= 2\sqrt{\frac{\log(\frac{18S \allA nHN}{\delta})}{N}} \sqrt{H\sum_{j=h}^H \left< d_h^j, \left| \mathsf{Var}_{\Pv_{j}^{\pi}}(\widehat{V}_{i,j+1}^{\pi, \ror_i}) - \mathsf{Var}_{\Pv_{i,j}^{\pi, \widehat{V}}}(\widehat{V}_{i,j+1}^{\pi, \ror_i})\right| \right>} \notag \\
& \leq 2\sqrt{\frac{H\log(\frac{18S \allA nHN}{\delta})}{N}} \sqrt{ \sum_{j=h}^H \left< d_h^j, \min \left\{\frac{1}{\ror_i}, H-h+1 \right\} 1 \right> } \notag \\
& \leq 2\sqrt{\frac{H^2\min \left\{1/\ror_i, H \right\}\log(\frac{18S \allA nHN}{\delta})}{N}}. \label{eq:key-concentration-bound2-solve2}
\end{align}

\end{itemize}

Consequently, combining \eqref{eq:key-concentration-bound2-solve1} and \eqref{eq:key-concentration-bound2-solve2}, \eqref{eq:key-concentration-bound2} can be bounded by
\begin{align}
V_{i,h}^{\pi,\ror_i}(s) - \widehat{V}_{i,h}^{\pi, \ror_i}(s) &\leq \frac{\log(\frac{18S\allA nHN}{\delta}) H^2}{N} + 6\sqrt{  \frac{H^2\min \left\{1/\ror_i,H\right\}  \log \left(\frac{18S \allA nHN}{\delta} \right)}{N}  } \notag \\
	& + 2\sqrt{\frac{H^2\min \left\{1/\ror_i, H \right\}\log(\frac{18S \allA nHN}{\delta})}{N}} \notag \\
	&\leq 9\sqrt{\frac{H^2\min \left\{1/\ror_i, H \right\}\log(\frac{18S \allA nHN}{\delta})}{N}}, \label{eq:upper-final1}
\end{align}
where the last inequality holds by taking $N \geq 4H^2\log(\frac{18S \allA nHN}{\delta})$.

\paragraph{Step 4: controlling the second term in \eqref{eq:summary-control-term}.}

To do so, similar to \eqref{eq:defn-of-d}, we define 
\begin{align}
	w_{h}^h = e_s \quad \text{and} \quad w_h^j = e_s^\top \left(\prod_{k=h}^{j-1}\Pv_{i,k}^{\pi, V}\right) \quad \forall j =h+1, \cdots, H. \label{eq:defn-of-d-2}
\end{align}

With the above notations in mind, following the routine of \eqref{eq:key-concentration-bound2} gives: for any $s\in\cS$,
\begin{align}
  &\widehat{V}_{i,h}^{\pi, \ror_i}(s) - V_{i,h}^{\pi,\ror_i}(s)  \notag \\
  &\leq  \frac{\log(\frac{18S\allA nHN}{\delta}) H^2}{N} + 2\sqrt{\frac{\log(\frac{18S \allA nHN}{\delta})}{N}} \sum_{j=h}^H \left< w_h^j, \sqrt{\mathsf{Var}_{\Pv_{j}^{\pi}}(\widehat{V}_{i,j+1}^{\pi, \ror_i})} \right> \notag \\
  &  \overset{\mathrm{(i)}}{\leq}   \frac{\log(\frac{18S\allA nHN}{\delta}) H^2}{N} + 2\sqrt{\frac{\log(\frac{18S \allA nHN}{\delta})}{N}} \sum_{j=h}^H \Big< w_h^j, \notag \\
  & \quad \Big( \sqrt{ \big| \mathsf{Var}_{\Pv_{j}^{\pi}}(\widehat{V}_{i,j+1}^{\pi, \ror_i} - V_{i,j+1}^{\pi, \ror_i})\big|} + \sqrt{\big| \mathsf{Var}_{\Pv_{j}^{\pi}}(V_{i,j+1}^{\pi, \ror_i}) - \mathsf{Var}_{\Pv_{i,j}^{\pi, V}}(V_{i,j+1}^{\pi, \ror_i})\big|} + \sqrt{\mathsf{Var}_{\Pv_{i,j}^{\pi, V}}(V_{i,j+1}^{\pi, \ror_i})} \Big) \Big> \notag \\
  &\leq  \frac{ H^2\log \left(\frac{18S\allA nHN}{\delta} \right)}{N} + \underbrace{2\sqrt{\frac{H\log(\frac{18S \allA nHN}{\delta})}{N}}  \sqrt{\sum_{j=h}^H \left< w_h^j, \mathsf{Var}_{\Pv_{i,j}^{\pi, V}}(V_{i,j+1}^{\pi, \ror_i}) \right> } }_{=: \cB_3}  \notag \\
  & \quad + \underbrace{2\sqrt{\frac{H \log(\frac{18S \allA nHN}{\delta})}{N}} \sqrt{\sum_{j=h}^H \left< w_h^j, \left| \mathsf{Var}_{\Pv_{j}^{\pi}}(V_{i,j+1}^{\pi, \ror_i}) - \mathsf{Var}_{\Pv_{i,j}^{\pi, V}}(V_{i,j+1}^{\pi, \ror_i})\right| \right>} }_{=: \cB_4} \notag \\
  & \quad + \underbrace{2\sqrt{\frac{H\log(\frac{18S \allA nHN}{\delta})}{N}} \sqrt{\sum_{j=h}^H \left< w_h^j, \left| \mathsf{Var}_{\Pv_{j}^{\pi}}(\widehat{V}_{i,j+1}^{\pi, \ror_i} - V_{i,j+1}^{\pi, \ror_i})\right| \right>} }_{=: \cB_5}, \label{eq:key-concentration-bound3}
\end{align}
where (i) holds by the triangle inequality and the elementary inequality $\sqrt{\mathsf{Var}_P(V+V')} \leq \sqrt{\mathsf{Var}_P(V)} + \sqrt{\mathsf{Var}_P(V')}$ for any transition kernel $P\in\mathbb{R}^S$ and vectors $V,V' \in \mathbb{R}^S$, and the last inequality follows from applying the Cauchy-Schwarz inequality to those terms.

We can control the three main terms in \eqref{eq:key-concentration-bound3} separately as below:

\begin{itemize}
	\item {\bf Controlling $\cB_3$.} First, we introduce the following lemma for $\sum_{j=h}^H \left< w_h^j, \mathsf{Var}_{\Pv_{i,j}^{\pi, V}}(V_{i,j+1}^{\pi, \ror_i})  \right>$.

\begin{lemma}\label{lem:key-lemma-reduce-H-2}
Consider any $\delta\in(0,1)$. For any joint policy $\pi$, with probability at least $1-\delta$,
\begin{align}
 \forall (h,i)\in [H] \times [n]:\quad  &\sum_{j=h}^H \left< w_h^j, \mathsf{Var}_{\Pv_{i,j}^{\pi, \widehat{V}}}(V_{i,j+1}^{\pi, \ror_i}) \right>  \leq 3H \left(\max_{s\in\cS}V_{i,h}^{\pi, \ror_i}(s) - \min_{s\in\cS}V_{i,h}^{\pi, \ror_i}(s)\right).
\end{align}
\end{lemma}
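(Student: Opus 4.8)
The plan is to establish Lemma~\ref{lem:key-lemma-reduce-H-2} as the ``true-kernel'' counterpart of Lemma~\ref{lem:key-lemma-reduce-H}, following the same total-variance strategy but \emph{without} the extra sampling-error factor, since here the relevant kernels $\Pv_{i,j}^{\pi,V}$ are worst-case directions taken around the true nominal kernel $P^\no$ rather than its empirical estimate $\widehat{P}^\no$ (consistent with how the quantity $\sum_{j=h}^H \langle w_h^j, \mathsf{Var}_{\Pv_{i,j}^{\pi, V}}(V_{i,j+1}^{\pi,\ror_i})\rangle$ enters the control of $\cB_3$). The starting observation is that, along the sequence of worst-case kernels $\{\Pv_{i,j}^{\pi,V}\}_{j\ge h}$, the robust value functions $\{V_{i,j}^{\pi,\ror_i}\}_j$ are exactly the ordinary (non-robust) value functions of the induced time-inhomogeneous reward process: the matrix robust Bellman equation~\eqref{eq:Q-value-mg-matrix} gives $V_{i,j}^{\pi,\ror_i} = r_{i,j}^{\pi} + \Pv_{i,j}^{\pi,V} V_{i,j+1}^{\pi,\ror_i}$, hence $\Pv_{i,j}^{\pi,V}V_{i,j+1}^{\pi,\ror_i} = V_{i,j}^{\pi,\ror_i} - r_{i,j}^{\pi}$. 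Since $w_h^{j+1} = w_h^{j}\,\Pv_{i,j}^{\pi,V}$ by the definition~\eqref{eq:defn-of-d-2}, the weighted sum of one-step variances collapses into a law-of-total-variance identity for this chain, which is the backbone of the argument.

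First I would expand, using~\eqref{eq:defn-variance}, the one-step variance $\mathsf{Var}_{\Pv_{i,j}^{\pi,V}}(V_{i,j+1}^{\pi,\ror_i}) = \Pv_{i,j}^{\pi,V}\big((V_{i,j+1}^{\pi,\ror_i})^{\circ 2}\big) - (\Pv_{i,j}^{\pi,V}V_{i,j+1}^{\pi,\ror_i})^{\circ 2}$, substitute the Bellman identity above into the second term, and use $\langle w_h^j, \Pv_{i,j}^{\pi,V}(V_{i,j+1}^{\pi,\ror_i})^{\circ 2}\rangle = \langle w_h^{j+1}, (V_{i,j+1}^{\pi,\ror_i})^{\circ 2}\rangle$ to telescope the sum across $j=h,\dots,H$, exactly mirroring the proof of Lemma~\ref{lem:key-lemma-reduce-H}.

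The key mechanism producing the $\min\{1/\ror_i,H\}$-type saving is to exploit the shrinkage of the value range. Writing $m_{j+1} := \min_{s} V_{i,j+1}^{\pi,\ror_i}(s)$ and using translation-invariance of the variance together with the elementary inequality $\mathsf{Var}_P(V)\le \big(\max_s V(s)-\min_s V(s)\big)\cdot P\,(V - m\one)$, I would bound each one-step variance by the value range $\max_s V_{i,j+1}^{\pi,\ror_i}(s) - \min_s V_{i,j+1}^{\pi,\ror_i}(s)$ times an expected centered value; Lemma~\ref{lemma:pnorm-key-value-range} then controls this range by $\min\{1/\ror_i, H-j\}$. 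Summing the telescoped contributions against the probability flows $\{w_h^j\}$ (each a distribution, so $\langle w_h^j,\one\rangle=1$) is what yields the stated bound in terms of $\max_{s}V_{i,h}^{\pi,\ror_i}(s)-\min_{s}V_{i,h}^{\pi,\ror_i}(s)$.

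Two remarks on execution. The probability-$(1-\delta)$ qualifier is inherited from the high-probability event of the earlier concentration lemmas, under which the whole analysis runs uniformly over all joint policies $\pi$; the one-step variances appearing here are deterministic functionals of $P^\no$, so no new sampling randomness enters and—crucially—the multiplicative factor $\big(1+2H\sqrt{\log(\cdot)/N}\big)$ present in Lemma~\ref{lem:key-lemma-reduce-H} does not appear. I expect the main obstacle to be the horizon/uncertainty-level bookkeeping in the range step: a naive per-step bound of the variance by the squared range, summed over $H$ steps, is too lossy, and one must interleave the telescoping structure with the range-shrinkage lemma so that the horizon factor $H$ and the uncertainty radius $\ror_i$ land in the correct positions. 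This is precisely the step where distributional robustness buys the improvement over the standard Markov-game analysis, and the accounting must be carried out carefully to match the claimed bound rather than a looser one.
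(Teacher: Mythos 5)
Your skeleton matches the paper's proof: Lemma~\ref{lem:key-lemma-reduce-H-2} is indeed proved as the deterministic analogue of Lemma~\ref{lem:key-lemma-reduce-H} — no concentration event is actually needed since every object is a functional of $P^\no$, the $\big(1 + 2H\sqrt{\log(\cdot)/N}\big)$ factor disappears precisely because the exact recursion $V_{i,j}^{\pi,\ror_i}=r_{i,j}^{\pi}+\Pv_{i,j}^{\pi,V}V_{i,j+1}^{\pi,\ror_i}$ holds with no $\big(\Phatv_{i,j}^{\pi,\widehat{V}}-\Pv_{i,j}^{\pi,\widehat{V}}\big)\widehat{V}$ mismatch term — and the argument proceeds by centering at the running minimum, substituting the Bellman identity, and telescoping second moments against $w_h^{j+1}=w_h^j\,\Pv_{i,j}^{\pi,V}$. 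All of that you have correctly.

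The gap is in the step you designate as the ``key mechanism.'' The inequality $\mathsf{Var}_P(V)\le\big(\max_s V(s)-\min_s V(s)\big)\cdot P(V-m\one)$, paired with the flows, produces terms $\big(\max_s V_{i,j+1}^{\pi,\ror_i}(s)-\min_s V_{i,j+1}^{\pi,\ror_i}(s)\big)\cdot\langle w_h^{j+1},\,V_{i,j+1}^{\pi,\ror_i}-m_{j+1}\one\rangle$, and since $w_h^{j+1}$ is a probability vector the inner product is itself bounded only by the same span; this route therefore yields a bound of order $H\cdot(\text{span})^2$, quadratic in the span rather than the claimed linear $3H\cdot(\text{span})$. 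That is off by a factor $\min\{1/\ror_i,H\}$, and propagated through $\cB_3$ it inflates the sample complexity by exactly that factor — erasing the robustness gain this lemma exists to deliver. You flag that ``a naive per-step bound of the variance by the squared range is too lossy'' and defer to an unspecified ``interleaving,'' but that interleaving \emph{is} the proof, and it never uses the variance-range inequality. The paper's actual mechanism is the shifted-reward observation: with $V_j'\defn V_{i,j}^{\pi,\ror_i}-V_j^{\min}\one$, the recursion reads $V_j'=r_{i,j}^{\min}+\Pv_{i,j}^{\pi,V}V_{j+1}'$ where $r_{i,j}^{\min}\defn r_{i,j}^{\pi}+\big(V_{j+1}^{\min}-V_j^{\min}\big)\one\le\one$ entrywise, because the running minimum is monotone, $V_{j+1}^{\min}\le V_j^{\min}$ (a one-line consequence of the robust Bellman equation and $r\ge 0$, cf.~\eqref{eq:shrink-min}). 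Expanding $\big(\Pv_{i,j}^{\pi,V}V_{j+1}'\big)^{\circ 2}=\big(V_j'-r_{i,j}^{\min}\big)^{\circ 2}$, the pure quadratic terms telescope to a nonpositive boundary contribution, and the leftover cross terms obey $2V_j'\circ r_{i,j}^{\min}-\big(r_{i,j}^{\min}\big)^{\circ 2}\le 2\|V_j'\|_\infty\one$ — linear in the span — so the sum is at most $3H\max_{j\ge h}\|V_j'\|_\infty$. (The paper then pins this to the step-$h$ span via a claimed monotonicity $\|V_h'\|_\infty\ge\|V_j'\|_\infty$; for the downstream use either form suffices once Lemma~\ref{lemma:pnorm-key-value-range} is invoked.) Without the observation that the shifted one-step rewards stay bounded by $\one$, your plan cannot reach the stated bound.
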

\begin{proof}
See Appendix~\ref{proof:lem:key-lemma-reduce-H-2}.
\end{proof}

Then applying Lemma~\ref{lem:key-lemma-reduce-H-2} yields
\begin{align}
\cB_3 & =2\sqrt{\frac{H\log \left(\frac{18S \allA nHN}{\delta} \right)}{N}}  \sqrt{\sum_{j=h}^H \left< w_h^j, \mathsf{Var}_{\Pv_{i,j}^{\pi, V}}(V_{i,j+1}^{\pi, \ror_i}) \right> } \notag \\
& \leq 2\sqrt{\frac{H\log \left(\frac{18S \allA nHN}{\delta} \right)}{N}} \sqrt{3H \left(\max_{s\in\cS}\widehat{V}_{i,h}^{\pi, \ror_i}(s) - \min_{s\in\cS}\widehat{V}_{i,h}^{\pi, \ror_i}(s)\right) } \notag \\
& \leq 4\sqrt{  \frac{H^2\min \left\{1/\ror_i,H\right\}  \log \left(\frac{18S \allA nHN}{\delta} \right)}{N}  }, \label{eq:key-concentration-bound2-solve2-B3}
\end{align}
where the last inequality follows from Lemma~\ref{lemma:pnorm-key-value-range}.

\item {\bf Controlling $\cB_4$ and $\cB_5$ .}
First, it is easily verified that $\cB_4$ can be controlled as the same as that for $\cB_2$ (see \eqref{eq:key-concentration-bound2-solve2}) by applying Lemma~\eqref{eq:lemma-6-line2}, namely
\begin{align}
	\cB_4 \leq 2\sqrt{\frac{H^2\min \left\{1/\ror_i, H \right\}\log(\frac{18S \allA nHN}{\delta})}{N}} \label{eq:key-concentration-bound2-solve2-B4}.
\end{align}

Then the remainder of the proof shall focus on $\cB_5$. Recalling the definition in \eqref{eq:key-concentration-bound3}, one has
\begin{align}
\cB_5 &= 2\sqrt{\frac{H\log(\frac{18S \allA nHN}{\delta})}{N}} \sqrt{ \sum_{j=h}^H \left< w_h^j, \left| \mathsf{Var}_{\Pv_{j}^{\pi}}(\widehat{V}_{i,j+1}^{\pi, \ror_i} - V_{i,j+1}^{\pi,\ror_i})\right| \right>} \notag \\
& \leq 2\sqrt{\frac{H^2\log(\frac{18S \allA nHN}{\delta})}{N}} \sqrt{ \max_{h\leq j\leq H} \left\| \mathsf{Var}_{\Pv_{j}^{\pi}}(\widehat{V}_{i,j+1}^{\pi, \ror_i} - V_{i,j+1}^{\pi,\ror_i})\right\|_\infty } \notag \\
& \leq 2\sqrt{\frac{H^2\log(\frac{18S \allA nHN}{\delta})}{N}}  \max_{h\leq j\leq H} \left\|\widehat{V}_{i,j+1}^{\pi, \ror_i} - V_{i,j+1}^{\pi,\ror_i} \right\|_\infty \label{eq:key-concentration-bound2-solve2-B5}.
\end{align}
\end{itemize}

Summing up \eqref{eq:key-concentration-bound2-solve2-B3}, \eqref{eq:key-concentration-bound2-solve2-B4}, and \eqref{eq:key-concentration-bound2-solve2-B5} and inserting back to \eqref{eq:key-concentration-bound3}, we conclude
\begin{align}
	&\widehat{V}_{i,h}^{\pi, \ror_i}(s) - V_{i,h}^{\pi,\ror_i}(s)  \notag \\
	& \leq \frac{\log(\frac{18S\allA nHN}{\delta}) H^2}{N} + 4\sqrt{  \frac{H^2\min \left\{1/\ror_i,H\right\}  \log(\frac{18S \allA nHN}{\delta})}{N}  } \notag \\
	& \quad + 2\sqrt{\frac{H^2\min \left\{1/\ror_i, H \right\}\log(\frac{18S \allA nHN}{\delta})}{N}} + 2\sqrt{\frac{H^2\log(\frac{18S \allA nHN}{\delta})}{N}} \max_{h\leq j\leq H}  \left\|\widehat{V}_{i,j+1}^{\pi, \ror_i} - V_{i,j+1}^{\pi,\ror_i} \right\|_\infty \notag \\
	& \leq 7\sqrt{\frac{H^2\min \left\{1/\ror_i, H \right\}\log(\frac{18S \allA nHN}{\delta})}{N}} 1 \notag \\
  & \quad + 2\sqrt{\frac{H^2\log(\frac{18S \allA nHN}{\delta})}{N}} \max_{h\leq j\leq H}   \left\|\widehat{V}_{i,j+1}^{\pi, \ror_i} - V_{i,j+1}^{\pi,\ror_i} \right\|_\infty 1, \label{eq:upper-final2}
\end{align} 
as long as $N \geq H^2\log \big(\frac{18S \allA nHN}{\delta} \big)$.

\paragraph{Step 5: summing up the results.}

Inserting \eqref{eq:upper-final1} and \eqref{eq:upper-final2} back into \eqref{eq:summary-control-term}, we observe that
\begin{align}
\left|\widehat{V}_{i,h}^{\pi,\ror_i} - V_{i,h}^{\pi,\ror_i} \right| &\leq \max \left\{V_{i,h}^{\pi,\ror_i} - \widehat{V}_{i,h}^{\pi, \ror_i} , \widehat{V}_{i,h}^{\pi, \ror_i} - V_{i,h}^{\pi,\ror_i} \right\} \notag \\
& \leq \max \Big\{ 9\sqrt{\frac{H^2\min \left\{1/\ror_i, H \right\}\log(\frac{18S \allA nHN}{\delta})}{N}} 1, \notag \\
& \quad 7\sqrt{\frac{H^2\min \left\{1/\ror_i, H \right\}\log(\frac{18S \allA nHN}{\delta})}{N}}  1 + 2\sqrt{\frac{H^2\log(\frac{18S \allA nHN}{\delta})}{N}} \max_{h\leq j\leq H}    \left\|\widehat{V}_{i,j+1}^{\pi, \ror_i} - V_{i,j+1}^{\pi,\ror_i} \right\|_\infty  1 \Big\},
\end{align}
which indicates
\begin{align}
	& \max_{h\in[H]}\left\|\widehat{V}_{i,h}^{\pi, \ror_i} - V_{i,h}^{\pi,\ror_i} \right\|_\infty \notag \\
	& \leq 9\sqrt{\frac{H^2\min \left\{1/\ror_i, H \right\}\log(\frac{18S \allA nHN}{\delta})}{N}} 1 + 2\sqrt{\frac{H^2\log(\frac{18S \allA nHN}{\delta})}{N}} \max_{h\in[H]} \left\|\widehat{V}_{i,h+1}^{\pi, \ror_i} - V_{i,h+1}^{\pi,\ror_i} \right\|_\infty \notag \\
	& \overset{\mathrm{(i)}}{\leq} 9\sqrt{\frac{H^2\min \left\{1/\ror_i, H \right\}\log(\frac{18S \allA nHN}{\delta})}{N}} 1 + \frac{1}{2} \max_{h\in[H]} \left\|\widehat{V}_{i,h}^{\pi, \ror_i} - V_{i,h}^{\pi,\ror_i} \right\|_\infty \notag \\
	& \leq 18\sqrt{\frac{H^2\min \left\{1/\ror_i, H \right\}\log(\frac{18S \allA nHN}{\delta})}{N}}, \label{eq:upper-final3}
\end{align}
where (i) holds by taking  $N \geq 16H^2\log(\frac{18S \allA nHN}{\delta})$ and invoking the basic fact that $\widehat{V}_{i,H+1}^{\pi, \ror_i} = V_{i,H+1}^{\pi,\ror_i} =0$.

Finally, we complete the proof by showing that the performance gap in \eqref{eq:l1-decompose} is bounded by
\begin{align}
	 V_{i,1}^{\star,\widehat{\pi}_{-i}} - V_{i,1}^{\widehat{\pi}} 
	&\leq \left(V_{i,1}^{\star,\widehat{\pi}_{-i}} - \widehat{V}_{i,1}^{\widetilde{\pi}_i^\star\times\widehat{\pi}_{-i}}\right) + \left(\widehat{V}_{i,1}^{\widehat{\pi}} - V_{i,1}^{\widehat{\pi}} \right)  \notag \\
	& \leq \left\|V_{i,1}^{\star,\widehat{\pi}_{-i}} - \widehat{V}_{i,1}^{\widetilde{\pi}_i^\star\times\widehat{\pi}_{-i}}\right\|_\infty 1 +  \left\|\widehat{V}_{i,1}^{\widehat{\pi}} - V_{i,1}^{\widehat{\pi}} \right\|_\infty 1 \notag \\
  & \leq  \max_{h\in[H]} \left\|V_{i,h}^{\star,\widehat{\pi}_{-i}} - \widehat{V}_{i,h}^{\widetilde{\pi}_i^\star\times\widehat{\pi}_{-i}}\right\|_\infty 1 +   \max_{h\in[H]} \left\|\widehat{V}_{i,h}^{\widehat{\pi}} - V_{i,h}^{\widehat{\pi}} \right\|_\infty 1 \notag \\
	& \leq   36\sqrt{\frac{H^2\min \left\{1/\ror_i, H \right\}\log(\frac{18S \allA nHN}{\delta})}{N}} 1,
\end{align}
where the last inequality holds by applying \eqref{eq:upper-final3} to two different cases when $\pi = \widetilde{\pi}_i^\star \times \widehat{\pi}_{-i}$ or $\pi = \widehat{\pi}$, respectively.

As a result, to achieve $\max_{s\in\cS, i\in[n]} \left\{ V_{i,1}^{\star,\widehat{\pi}_{-i},\ror_i}(s) - V_{i,1}^{\widehat{\pi},\ror_i}(s)  \right\} \leq \varepsilon$ with probability at least $1-\delta$, we require the total number of samples 
\begin{align}
N_{\mathsf{all}} = HS \prod_{i\in[n]} A_i N &\geq  \frac{ C_1 S H^3\prod_{1\leq i\leq n} A_i  \log\left(\frac{18S \allA nHN}{\delta} \right) }{  \varepsilon^2}\min \Big\{H,  \frac{1}{\min_{1\leq i\leq n} \ror_i} \Big\} \notag \\
& \geq \frac{ C_0 S H^3\prod_{1\leq i\leq n} A_i  \log\left(\frac{18S \allA nHN}{\delta} \right) }{  \varepsilon^2}\min \Big\{H,  \frac{1}{\min_{1\leq i\leq n} \ror_i} \Big\} \notag \\
& \quad + 16H^3 S \prod_{i\in[n]} A_i \log(\frac{18S \allA nHN}{\delta}), \label{eq:proof-final-samples}
\end{align}
providing $C_1 > C_0$ are larger enough universal constant, and $\varepsilon \leq \sqrt{\min \Big\{H,  \frac{1}{\min_{1\leq i\leq n} \ror_i} \Big\}}$.

\subsection{Proof of learning robust CE}

This section is analogous to the proof for learning robust NE/CCE in Appendix~\ref{sec:proof-nash-cce}.

The goal is to prove that the policy $\widehat{\pi}$ output from Algorithm~\ref{alg:nash-dro-vi-finite} is an $\varepsilon$-robust CE when executing subroutine $\mathsf{Compute}-\mathsf{CE}(\cdot)$ for line~\ref{eq:nash-subroutine}, i.e.,
\begin{align}
	\mathsf{gap}_{\mathsf{CE}}(\widehat{\pi}) =  \max_{s\in \cS, 1 \leq i \leq n} \left\{  \max_{f_i \in \cF_i} V_{i,1}^{f_i \diamond \widehat{\pi}, \ror_i }(s) - V_{i,1}^{\widehat{\pi}, \ror_i}(s)  \right\}  \leq \varepsilon.
\end{align}
So we define the following best perturbation policy of the $i$-th player as
\begin{align}
	\overline{\pi}_i^\star = \{\overline{\pi}_{i,h}^\star \}_{1\leq h\leq H} = \left(\textrm{argmax}_{f_i\in \cF_i} V_{i,1}^{f_i \diamond \widehat{\pi}, \ror_i }\right) \diamond \widehat{\pi} 
\end{align}
which leads to 
\begin{align}
V_{i,1}^{\overline{\pi}_i^\star,\ror_i} = \max_{f_i\in \cF_i} V_{i,1}^{f_i \diamond \widehat{\pi}, \ror_i}.
\end{align}
With above notations in mind, for any $1\leq i\leq n$, the term of interest can be decomposed as
\begin{align}
	  \max_{f_i \in \cF_i} V_{i,1}^{f_i \diamond \widehat{\pi}, \ror_i } - V_{i,1}^{\widehat{\pi}, \ror_i}
	& = \left(V_{i,1}^{\overline{\pi}_i^\star,\ror_i} - \widehat{V}_{i,1}^{\overline{\pi}_i^\star,\ror_i}\right) + \left(\widehat{V}_{i,1}^{\overline{\pi}_i^\star,\ror_i} - \widehat{V}_{i,1}^{\widehat{\pi},\ror_i} \right) + \left(\widehat{V}_{i,1}^{\widehat{\pi},\ror_i} - V_{i,1}^{\widehat{\pi},\ror_i} \right) \notag \\
	&  \overset{\mathrm{(i)}}{\leq}  \left(V_{i,1}^{\overline{\pi}_i^\star,\ror_i} - \widehat{V}_{i,1}^{\overline{\pi}_i^\star,\ror_i}\right) + \left(\widehat{V}_{i,1}^{\overline{\pi}_i^\star,\ror_i} - \max_{f_i\in \cF_i} \widehat{V}_{i,1}^{f_i \diamond \widehat{\pi}, \ror_i}\right) + \left(\widehat{V}_{i,1}^{\widehat{\pi},\ror_i} - V_{i,1}^{\widehat{\pi},\ror_i} \right) \notag \\
	&\leq \left(V_{i,1}^{\overline{\pi}_i^\star,\ror_i} - \widehat{V}_{i,1}^{\overline{\pi}_i^\star,\ror_i}\right) + \left(\widehat{V}_{i,1}^{\widehat{\pi},\ror_i} - V_{i,1}^{\widehat{\pi},\ror_i} \right)  \label{eq:l1-decompose-CE}
\end{align}
where (i) holds by $\widehat{V}_{i,1}^{\widehat{\pi},\ror_i} \geq \max_{f_i\in \cF_i} V_{i,1}^{f_i \diamond \widehat{\pi}, \ror_i}$ when the subroutine in line~\ref{eq:nash-subroutine} is $\mathsf{Compute}-\mathsf{CE}(\cdot)$ implied by Lemma~\ref{lem:algo-output-equilibrium}, and the last inequality follows from $\widehat{V}_{i,1}^{\overline{\pi}_i^\star,\ror_i}  = \widehat{V}_{i,1}^{\overline{f}_i \diamond \widehat{\pi} ,\ror_i} \leq \max_{f_i\in \cF_i} \widehat{V}_{i,1}^{f_i \diamond \widehat{\pi}, \ror_i}$ for some $\overline{f}_i \in\cF_i$.

Observing that \eqref{eq:l1-decompose-CE} is similar to \eqref{eq:l1-decompose}, it can be  verified that following the same pipeline routine and the same facts developed from Step 2 to Step 5 in Appendix~\ref{sec:proof-nash-cce}, we can achieve similar results as below:
\begin{align}
	\forall i\in[n]:\quad \max_{f_i \in \cF_i} V_{i,1}^{f_i \diamond \widehat{\pi}, \ror_i } - V_{i,1}^{\widehat{\pi}, \ror_i}
	& \leq   36\sqrt{\frac{H^2\min \left\{1/\ror_i, H \right\}\log(\frac{18S \allA nHN}{\delta})}{N}} 1,
\end{align}
which yields \eqref{eq:proof-final-samples} and complete the proof. We omit the details here for conciseness.

\subsection{Proof of the auxiliary lemmas}

\subsubsection{Proof of Lemma~\ref{lem:algo-output-equilibrium}}\label{proof:lem:algo-output-equilibrium}

We will prove each line of \eqref{eq:iterative-solution-confirm} separately with an induction argument. Note that \citet{blanchet2023double} provides the proof of the first line of \eqref{eq:iterative-solution-confirm} for robust NE. For completeness, we offer the whole proof for all of the three robust solution concepts (including robust-NE).

\paragraph{Proof for robust NE.} First, we focus on the first line of \eqref{eq:iterative-solution-confirm} and provide the following induction argument:
\begin{itemize}
    \item {\em Base case when $h=H$.} Note that $\widehat{V}_{i,H+1}^{\pi,\ror_i} =0$ for all $i\in[n]$ are satisfied by definition. As a result, the robust Q-function for any joint policy $\pi$ and the estimate from Algorithm~\ref{alg:nash-dro-vi-finite} satisfy
    \begin{align}
      \forall (i,s,a) \in[n] \times \cS \times \cA: \quad  \widehat{Q}_{i,H}^{\pi,\ror_i}(s,\ba) = r_{i,H}(s,\ba) \quad \text{and} \quad  \widehat{Q}_{i,H}(s,\ba) = r_{i,H}(s,\ba) \label{eq:q-step-H}
    \end{align}
    which directly leads to 
    \begin{align}
        \widehat{V}_{i,H}^{\pi,\ror_i}=\widehat{V}_{i,H}
    \end{align}
    and the output $\pi_H$ obeying
    \begin{align}
  \forall s\in \cS: \quad  \widehat{\pi}_H(\cdot\mymid s) \leftarrow \mathsf{Compute}-\mathsf{Nash}\left(r_{1,H}(s,\ba), r_{2,H}(s,\ba), \cdots, r_{n,H}(s,\ba) \right). \label{eq:nash-compute-H}
    \end{align}

 Consequently, invoking line~\ref{eq:nash-subroutine} of Algorithm~\ref{alg:nash-dro-vi-finite} gives that for all $s\in\cS$,
 \begin{align}
\widehat{V}_{i,H}(s)  &= \mathbb{E}_{\ba \sim \widehat{\pi}_H(s)} \left[\widehat{Q}_{i,H}(s,\ba) \right] \overset{\mathrm{(i)}}{=}   \mathbb{E}_{\ba \sim \widehat{\pi}_H(s)} \left[\widehat{Q}^{\widehat{\pi},\ror_i}_{i,H}(s,\ba) \right] \overset{\mathrm{(ii)}}{=} \mathbb{E}_{\ba \sim \widehat{\pi}_H(s)}[r_{i,H}(s,\ba)] \label{eq:equal-Qhat-Qhatpi-H} \\
 &\overset{\mathrm{(iii)}}{=} \max_{\widetilde{\pi}_{i,H}(s) \in\Delta(\cA_i)} \mathbb{E}_{\ba \sim \widetilde{\pi}_{i,H}(s) \times \widehat{\pi}_{-i,H}(s)}[r_{i,H}(s,\ba)] \label{eq:robust-NE-middle-1} \\ 
 & \overset{\mathrm{(iv)}}{=}  \max_{\widetilde{\pi}_{i,H}(s) \in\Delta(\cA_i)} \mathbb{E}_{\ba \sim \widetilde{\pi}_{i,H}(s) \times \widehat{\pi}_{-i,H}(s)}\left[\widehat{Q}_{i,H}^{\widetilde{\pi}_i \times \widehat{\pi}_{-i},\ror_i}(s,\ba)\right] \notag \\
 & = \max_{\widetilde{\pi}_i: \cS\times [H] \rightarrow \in\Delta(\cA_i)} \widehat{V}^{\widetilde{\pi}_i \times \widehat{\pi}_{-i},\ror_i}_{i,H}(s)  = \widehat{V}^{\star,\widehat{\pi}_{-i},\ror_i}_{i,H}, \label{eq:equal-Qhat-Qhatpi-H-2} 
 \end{align}
 where (i) and (ii) hold by \eqref{eq:q-step-H}, (iii) arises from 
 the definition of robust-NE (see \eqref{eq:nash-compute-H}) associated with $\{r_{i,H}\}_{i\in[n]}$, (iv) holds by applying \eqref{eq:q-step-H} for policy $\pi = \widetilde{\pi}_i \times \widehat{\pi}_{-i}$,
 and the penultimate equality follows from the fact that only the policy of the time step $H$ will influence $\widehat{V}^{\pi,\ror_i}_{i,H}(s)$ due to Markov property. Thus we complete the proof for the base case.

 \item {\em Induction.} To continue, suppose the first line in \eqref{eq:iterative-solution-confirm} holds for step $h+1$, we shall proof that it also holds for time step $h$.
To proceed, applying the robust Bellman equation in \eqref{eq:bellman-equ-star-infinite-estimate} for the TV uncertainty set $\unb^{\ror_i} (\cdot)$, we observe that
\begin{align}
    \forall (s,a)\in \cS \times \cA: \quad \widehat{Q}_{i,h}^{\widehat{\pi},\ror_i}(s,\ba) &= r_{i,h}(s, \ba) +  \inf_{P\in \unb^{\ror_i}(\widehat{P}^{\no}_{h,s,\ba})} P  \widehat{V}_{i,h+1}^{\widehat{\pi},\ror_i}. \label{eq:q-step-h}
\end{align}
In addition, line~\ref{eq:robust-q-estimate} of Algorithm~\ref{alg:nash-dro-vi-finite} gives that for all $(s,a)\in \cS \times \cA$,
 \begin{align}
    \widehat{Q}_{i,h}(s,\ba) &= r_{i,h}(s, \ba) +  \inf_{P\in \unb^{\ror_i}(\widehat{P}^{\no}_{h,s,\ba})} P  \widehat{V}_{i,h+1} \notag \\
    & = r_{i,h}(s, \ba) +  \inf_{P\in \unb^{\ror_i}(\widehat{P}^{\no}_{h,s,\ba})} P  \widehat{V}_{i,h+1}^{\widehat{\pi},\ror_i} = \widehat{Q}_{i,h}^{\widehat{\pi},\ror_i}(s,\ba), \label{eq:Q-value-equal-h}
\end{align}
where the penultimate equality holds by the induction assumption and the final equality follows from  \eqref{eq:q-step-h}. It indicates 
\begin{align}
   \forall s\in\cS: \quad \widehat{V}_{i,h}(s)  &= \mathbb{E}_{\ba \sim \widehat{\pi}_h(s)} \left[\widehat{Q}_{i,h}(s,\ba) \right]=  \mathbb{E}_{\ba \sim \widehat{\pi}_h(s)} \left[\widehat{Q}^{\widehat{\pi},\ror_i}_{i,h}(s,\ba)\right] =  \widehat{V}^{\widehat{\pi},\ror_i}_{i,h}(s) 
\end{align}
and that 
the output policy obeys
 \begin{align}
   \forall s\in\cS: \quad  \widehat{\pi}_h(\cdot\mymid s) \leftarrow \mathsf{Compute}-\mathsf{Nash}\left( \widehat{Q}_{1,h}^{\widehat{\pi},\ror_i}(s,\cdot), \widehat{Q}_{2,h}^{\widehat{\pi},\ror_i}(s,\cdot) , \widehat{Q}_{n,h}^{\widehat{\pi},\ror_i}(s,\cdot) \right). \label{eq:nash-compute-h}
    \end{align}
Then the term of interest satisfies that for any $s\in\cS$,
 \begin{align}
& \widehat{V}_{i,h}^{\star,\widehat{\pi}_{-i},\ror_i} (s)
 = \max_{\widetilde{\pi}_i: \cS\times [H] \rightarrow \Delta(\cA_i)} \mathbb{E}_{\ba \sim \widetilde{\pi}_{i,h}(s) \times \widehat{\pi}_{-i,h}(s)} \Big[\widehat{Q}^{\widetilde{\pi}_i \times \widehat{\pi}_{-i},\ror_i}_{i,h}(s,\ba) \Big] \notag \\
 & =  \max_{\widetilde{\pi}_i: \cS\times [H] \rightarrow \Delta(\cA_i)} \mathbb{E}_{\ba \sim \widetilde{\pi}_{i,h}(s) \times \widehat{\pi}_{-i,h}(s)} \Big[r_{i,h}(s,\ba) + \inf_{P\in \unb^{\ror_i}(\widehat{P}^{\no}_{h,s,\ba})} P  \widehat{V}_{i,h+1}^{\widetilde{\pi}_i \times \widehat{\pi}_{-i},\ror_i} \Big] \notag \\
 & \overset{\mathrm{(i)}}{=} \max_{\widetilde{\pi}_{i,h} \in \Delta(\cA_i)} \mathbb{E}_{\ba \sim \widetilde{\pi}_{i,h}(s) \times \widehat{\pi}_{-i,h}(s)} \Big[r_{i,h}(s,\ba) + \max_{\widetilde{\pi}_i: \cS\times [H] \rightarrow \Delta(\cA_i)} \inf_{P\in \unb^{\ror_i}(\widehat{P}^{\no}_{h,s,\ba})} P  \widehat{V}_{i,h+1}^{\widetilde{\pi}_i \times \widehat{\pi}_{-i},\ror_i} \Big] \notag \\
 & \overset{\mathrm{(ii)}}{=}   \max_{\widetilde{\pi}_{i,h}(s) \in\Delta(\cA_i)} \mathbb{E}_{\ba \sim \widetilde{\pi}_{i,h}(s) \times \widehat{\pi}_{-i,h}(s)} \Big[r_{i,h}(s,\ba) + \inf_{P\in \unb^{\ror_i}(\widehat{P}^{\no}_{h,s,\ba})} P  \widehat{V}_{i,h+1}^{\widehat{\pi},\ror_i} \Big]
 \notag \\
& =  \max_{\widetilde{\pi}_{i,h}(s) \in\Delta(\cA_i)} \mathbb{E}_{\ba \sim \widetilde{\pi}_{i,h}(s) \times \widehat{\pi}_{-i,h}(s)}\left[  \widehat{Q}_{i,h}^{\widehat{\pi},\ror_i}(s,\ba)\right],
 \label{eq:equal-Qhat-Qhatpi-h-1-half} 
 \end{align}
 where (i) holds by $r_{i,h}(s,a)$ is independent from all other time steps $h' \neq h$, (ii) is due to the exchangability of $\max_{\widetilde{\pi}_i: \cS\times [H] \rightarrow \Delta(\cA_i)}$ and $\inf_{P\in \unb^{\ror_i}(\widehat{P}^{\no}_{h,s,\ba})} $, along with the  induction assumption $\widehat{V}_{i,h+1}^{\widehat{\pi},\ror_i} = \widehat{V}_{i,h+1}^{\star, \widehat{\pi}_{-i},\ror_i} = \max_{\widetilde{\pi}_i: \cS\times [H] \rightarrow \Delta(\cA_i)} \widehat{V}_{i,h+1}^{\widetilde{\pi}_i \times \widehat{\pi}_{-i},\ror_i}$, and the last equality can be verified by \eqref{eq:q-step-h}. To continue, applying  \eqref{eq:nash-compute-h} with the definition of robust NE, one has
 \begin{align}
  \widehat{V}_{i,h}^{\star,\widehat{\pi}_{-i},\ror_i} (s)& =  \max_{\widetilde{\pi}_{i,h}(s) \in\Delta(\cA_i)} \mathbb{E}_{\ba \sim \widetilde{\pi}_{i,h}(s) \times \widehat{\pi}_{-i,h}(s)}\left[  \widehat{Q}_{i,h}^{\widehat{\pi},\ror_i}(s,\ba)\right] \notag \\
  & = \mathbb{E}_{\ba\in \widehat{\pi}_{h}(s)} \left[ \widehat{Q}_{i,h}^{\widehat{\pi},\ror_i}(s,\ba)\right] = \mathbb{E}_{\ba\in \widehat{\pi}_{h}(s)} \left[ \widehat{Q}_{i,h}(s,\ba)\right]= \widehat{V}_{i,h}(s), \label{eq:equal-Qhat-Qhatpi-h-2}
 \end{align} 
 where the penultimate equality follows from \eqref{eq:Q-value-equal-h}.
  Finally, it is easily observed that 
 \begin{align}
 \forall s\in\cS: \widehat{V}_{i,h}(s) = \mathbb{E}_{\ba\in \widehat{\pi}_{h}(s)} \left[ \widehat{Q}_{i,h}(s,\ba)\right] = \mathbb{E}_{\ba\in \widehat{\pi}_{h}(s)} \left[ \widehat{Q}_{i,h}^{\widehat{\pi},\ror_i}(s,\ba)\right] = \widehat{V}^{\widehat{\pi},\ror_i}_{i,h}(s).
 \end{align}
 Combined this fact with \eqref{eq:equal-Qhat-Qhatpi-h-2} shows that
 $\widehat{V}_{i,h} = \widehat{V}^{\widehat{\pi},\ror_i}_{i,h} = \widehat{V}_{i,h}^{\star,\widehat{\pi}_{-i},\ror_i} $, which complete the induction argument.
 
\end{itemize}

\paragraph{Proof for robust CCE.}
The proof is analogous to the above argument for robust NE. According to the different subroutine $\mathsf{Compute}-\mathsf{CCE}$ and the corresponding output policy $\widehat{\pi}$, the proof only differs in two steps. First, for the base case, following the same routine in \eqref{eq:equal-Qhat-Qhatpi-H-2} but replacing the robust NE property by the one of  robust CCE, one has
 \begin{align}
\widehat{V}_{i,H}(s)  &= \mathbb{E}_{\ba \sim \widehat{\pi}_H(s)}[\widehat{Q}_{i,H}(s,\ba)] =\mathbb{E}_{\ba \sim \widehat{\pi}_H(s)}[r_{i,H}(s,\ba)] \notag \\
 &\geq \max_{\widetilde{\pi}_{i,H}(s) \in\Delta(\cA_i)} \mathbb{E}_{\ba \sim \widetilde{\pi}_{i,H}(s) \times \widehat{\pi}_{-i,H}(s)}[r_{i,H}(s,\ba)] \notag \\
 &=\max_{\widetilde{\pi}_{i,H}(s) \in\Delta(\cA_i)} \mathbb{E}_{\ba \sim \widetilde{\pi}_{i,H}(s) \times \widehat{\pi}_{-i,H}(s)}\left[\widehat{Q}_{i,H}^{\widetilde{\pi}_i \times \widehat{\pi}_{-i},\ror_i}(s,\ba)\right] \notag \\
 & = \max_{\widetilde{\pi}_i: \cS\times [H] \rightarrow \in\Delta(\cA_i)} \widehat{V}^{\widetilde{\pi}_i \times \widehat{\pi}_{-i},\ror_i}_{i,H}(s)  = \widehat{V}^{\star,\widehat{\pi}_{-i},\ror_i}_{i,H}.  \label{eq:equal-Qhat-Qhatpi-H-2-CCE} 
 \end{align}
Secondly, following \eqref{eq:equal-Qhat-Qhatpi-h-2} in induction step, we can achieve 
  \begin{align}
 & \widehat{V}_{i,h}^{\star,\widehat{\pi}_{-i},\ror_i} 
 \leq \widehat{V}_{i,h} \label{eq:equal-Qhat-Qhatpi-h-2} 
 \end{align}
and  $\widehat{V}_{i,h} = \widehat{V}^{\widehat{\pi},\ror_i}_{i,h} \geq  \widehat{V}_{i,h}^{\star,\widehat{\pi}_{-i},\ror_i} $, which complete the proof.

\paragraph{Proof for robust CE.}
The proof is similar to the one of robust NE as well. According to the different subroutine $\mathsf{Compute}-\mathsf{CE}$ and the corresponding output policy $\widehat{\pi}$, the parallel claims to \eqref{eq:equal-Qhat-Qhatpi-H-2} and \eqref{eq:equal-Qhat-Qhatpi-h-2} are shown below, which we omit the process for brevity:
 \begin{align}
\widehat{V}_{i,H}(s)  &= \mathbb{E}_{\ba \sim \widehat{\pi}_H(s)}[\widehat{Q}_{i,H}(s,\ba)] =\mathbb{E}_{\ba \sim \widehat{\pi}_H(s)}[r_{i,H}(s,\ba)] \notag \\
 &\geq \max_{f_{i,H,s}:\cA_i \rightarrow  \cA_i } \mathbb{E}_{\ba \sim f_{i,H,s} \diamond \widehat{\pi}_{H}(s)}[r_{i,H}(s,\ba)] = \max_{f_i\in\cF_i} \widehat{V}^{f_i \diamond \widehat{\pi},\ror_i}_{i,H}, \label{eq:equal-Qhat-Qhatpi-H-2-CE} 
 \end{align}
 and
  \begin{align}
\max_{f_i\in\cF_i} \widehat{V}^{f_i \diamond \widehat{\pi},\ror_i}_{i,h} \leq \widehat{V}^{\widehat{\pi},\ror_i}_{i,h} = \widehat{V}_{i,h}. \label{eq:equal-Qhat-Qhatpi-h-2-CE} 
 \end{align}
Thus we complete the proof.

\subsubsection{Proof of Lemma~\ref{lemma:pnorm-key-value-range}}\label{proof:lemma:pnorm-key-value-range}

To begin with, we observe that
\begin{align}
	\min_{s\in\cS} V_{i,h}^{\pi,\ror_i}(s) &= \min_{s\in\cS}\mathbb{E}_{a\sim \pi_h(s)}[Q_{i,h}^{\pi,\ror_i}(s,\ba)] = \min_{s\in\cS}\mathbb{E}_{a\sim \pi_h(s)}[r_{i,h}(s, \ba) +  \inf_{P\in \unb^{\ror_i}(P_{h,s,\ba})} P V_{i,h+1}^{\pi,\ror_i}] \notag \\
& \geq 0 + \min_{s\in\cS} V_{i,h+1}^{\pi,\ror_i}(s), \label{eq:shrink-min}
\end{align}
where the second equality holds by the robust Bellman equation (cf.~\eqref{eq:bellman-consistency-sa}).
Similarly, one has
\begin{align}
\max_{s\in\cS} V_{i,h}^{\pi,\ror_i}(s) &= \max_{s\in\cS}\mathbb{E}_{a\sim \pi_h(s)}[Q_{i,h}^{\pi,\ror_i}(s,\ba)] = \max_{s\in\cS}\mathbb{E}_{a\sim \pi_h(s)}[r_{i,h}(s, \ba) +  \inf_{P\in \unb^{\ror_i}(P_{h,s,\ba})} P V_{i,h+1}^{\pi,\ror_i}] \notag \\
& \leq 1 + \max_{(s,\ba)\in\cS \times \cA} \inf_{P\in \unb^{\ror_i}(P_{h,s,\ba})} P V_{i,h+1}^{\pi,\ror_i}. \label{eq:shrink-max}
\end{align}
Armed with above results, we are ready to prove  Lemma~\ref{lemma:pnorm-key-value-range}. Towards this, we introduce some additional notations for convenience. Fixing any joint policy $\pi$, note that for any $(i,h)\in [n]\times [H]$, there exist at least one state $s_{i,h}^\star$ that satisfies $V_{i,h}^{\pi,\ror_i}(s_{i,h}^\star) = \min_{s\in\cS} V_{i,h}^{\pi,\ror_i}(s)$. 

Then, it is observed that for any $(s,\ba) \in\cS \times \cA$ and accessible uncertainty set $\ror_i>0$, we can construct an auxiliary vector $P'_{h,s,\ba} \in \mathbb{R}^{S}$ by strictly reducing the values of some elements of $P_{h,s,\ba}$ so that 
\begin{equation}
0 \leq P'_{h,s,\ba} \leq P_{h,s,\ba} \quad \text{and} \quad \sum_{s'\in\cS} P_{h,s,\ba}(s') - P'_{h,s,\ba}(s') = \left\|P'_{h,s,\ba} - P_{h,s,\ba} \right\|_1  = \ror_i. \label{eq:range-shrink-q-norm-0}
\end{equation}
Recalling $e_{s_{i,h}^\star}$ denote a $S$-dimensional standard basis supported on $s_{i,h}^\star$, the above fact directly indicates that
\begin{align}
 \frac{1}{2}\left\|P'_{h,s,\ba} + \ror_i \big[e_{s_{i,h}^\star}\big]^\top - P_{h,s,\ba} \right\|_1  &\leq  \frac{1}{2}\left\|P'_{h,s,\ba} - P_{h,s,\ba} \right\|_1 +  \frac{1}{2}\left\| \ror_i \big[e_{s_{i,h}^\star}\big]^\top \right\|_1 \leq \ror_i, \label{eq:range-shrink-q-norm}
\end{align}
where the first inequality holds by that TV distance enjoys the triangle inequality.

The above results in \eqref{eq:range-shrink-q-norm} imply that $P'_{h,s,\ba} + \ror_i \big[e_{s_{i,h}^\star}\big]^\top$ is a distribution vector and  $P'_{h,s,\ba} + \ror_i \big[e_{s_{i,h}^\star}\big]^\top \in \cU^{\ror_i}(P_{h,s,\ba})$, which leads to
\begin{align}
\inf_{P\in \unb^{\ror_i}(P_{h,s,\ba})} P V_{i,h+1}^{\pi,\ror_i} \leq \left( P'_{h,s,\ba} +  \ror_i \big[e_{s_{i,h}^\star}\big]^\top \right)V_{i,h+1}^{\pi,\ror_i}  &\leq \big\| P'_{h,s,\ba} \big\|_1 \big\| V_{i,h+1}^{\pi,\ror_i} \big\|_\infty +  \ror_i V_{i,h+1}^{\pi,\ror_i}  (s_{i,h+1}^\star) \notag \\
& \leq \left(1- \ror_i \right) \max_{s\in\cS} V_{i,h+1}^{\pi,\ror_i}(s) +  \ror_i \min_{s\in\cS} V_{i,h+1}^{\pi,\ror_i} (s), \label{eq:shrink-max2}
\end{align}
where the last inequality can be verified by (see \eqref{eq:range-shrink-q-norm-0})
\begin{align}
\big\| P'_{h,s,\ba}\big\|_1 =  \sum_{s'} P'_{h,s, \ba}(s') =  - \sum_{s'} \left( P_{h,s,\ba}(s') - P'_{h,s,\ba}(s') \right) + \sum_{s'} P_{h,s,\ba}(s')  = 1-\ror_i.
\end{align}

Inserting \eqref{eq:shrink-max2} back to \eqref{eq:shrink-max} yields
\begin{align}
\max_{s\in\cS} V_{i,h}^{\pi,\ror_i}(s) &\leq 1 + \max_{(s,\ba)\in\cS \times \cA} \inf_{P\in \unb^{\ror_i}(P_{h,s,\ba})} P V_{i,h+1}^{\pi,\ror_i} \notag \\
& \leq 1 + \left(1- \ror_i \right) \max_{s\in\cS} V_{i,h+1}^{\pi,\ror_i}(s) +  \ror_i \min_{s\in\cS} V_{i,h+1}^{\pi,\ror_i} (s).
\end{align}

Combined above fact with \eqref{eq:shrink-min} shows that
\begin{align}
\max_{s\in\cS} V_{i,h}^{\pi,\ror_i}(s)  - \min_{s\in\cS} V_{i,h}^{\pi,\ror_i}(s) & \leq  1 + \left(1- \ror_i \right) \max_{s\in\cS} V_{i,h+1}^{\pi,\ror_i}(s) +  \ror_i \min_{s\in\cS} V_{i,h+1}^{\pi,\ror_i} (s) - \min_{s\in\cS} V_{i,h+1}^{\pi,\ror_i}(s) \notag \\
& \leq 1 + (1-\ror_i) \left(\max_{s\in\cS} V_{i,h+1}^{\pi,\ror_i}(s)  - \min_{s\in\cS} V_{i,h+1}^{\pi,\ror_i}(s) \right) \notag \\
& \leq 1 + (1-\ror_i) \left[1 + (1-\ror_i) \left(\max_{s\in\cS} V_{i,h+2}^{\pi}(s)  - \min_{s\in\cS} V_{i,h+2}^{\pi}(s) \right)\right] \notag \\
& \leq \cdots \leq \frac{1 - (1-\ror_i)^{H-h}}{\ror_i} \leq \frac{1}{\ror_i}.
\end{align}

Combining above result with the basic fact $\max_{s\in\cS} V_{i,h}^{\pi,\ror_i}(s)  - \min_{s\in\cS} V_{i,h}^{\pi,\ror_i}(s) \leq H-h+1$, we complete the proof.
\subsubsection{Proof of Lemma~\ref{lemma:tv-dro-b-bound-star-marl}}\label{proof:lemma:tv-dro-b-bound-star-marl}

The proof is adapted from the routine for proving \citet[Lemma~9]{shi2023curious}.

\paragraph{Step 1: a point-wise bound.}
Consider any fixed (independent from $\widehat{P}^\no$) value vector $V$, combined with the definitions in \eqref{eq:inf-p-special-marl}, the $(s,\ba)$-th row of the term of interest can be written out as 
\begin{align}
 \left|\pmin_{i,h,s, \ba}^{V} V - \pmhat_{i,h,s,\ba}^{V} V \right| &= \left|\inf_{ \cP \in \unb^{\ror_i}(P^\no_{h,s,\ba})} \cP V  - \inf_{ \cP \in \unb^{\ror_i}(\widehat{P}^\no_{h,s,\ba})} \cP V \right| \notag \\
& \overset{\mathrm{(i)}}{=} \Big| \max_{\alpha\in [\min_s V(s), \max_s V(s)]} \left\{P^\no_{h,s,\ba} \left[V\right]_{\alpha} - \ror_i~ \left(\alpha - \min_{s'}\left[V\right]_{\alpha}(s') \right)\right\} \notag \\
& \quad  - \max_{\alpha\in [\min_s V(s), \max_s V(s)]} \left\{\widehat{P}^\no_{h,s,\ba} \left[V\right]_{\alpha} - \ror_i~ \left(\alpha - \min_{s'}\left[V\right]_{\alpha}(s') \right)\right\}  \Big| \notag \\
& \leq  \max_{\alpha\in [\min_s V(s), \max_s V(s)]} \left| P^\no_{h,s,\ba} \left[V\right]_{\alpha} - \widehat{P}^\no_{h,s,\ba} \left[V\right]_{\alpha}\right| \notag \\
& \leq \max_{\alpha\in [0, H]} \left| P^\no_{h,s,\ba} \left[V\right]_{\alpha} - \widehat{P}^\no_{h,s,\ba} \left[V\right]_{\alpha}\right|,  \label{eq:middle-key-lemma}
\end{align}
where (i) holds by applying Lemma~\ref{lemma:tv-dual-form}, and the last inequality can be verified by the fact that the maximum operator is $1$-Lipschitz.  

To continue, recalling the definition of variance in \eqref{eq:defn-variance} and using the Bernstein's inequality, one has for a fixed $\alpha \in [0,H]$ and $(s,\ba) \in \cS \times \cA$, with probability at least $1 - \delta$,
\begin{align} \label{eq:V-p-phat-gap-one-alpha-bernstein}
	\left| \left(P^{\no}_{h,s,\ba} - \widehat{P}^{\no}_{h,s,\ba} \right) [V]_\alpha\right|  & \leq \sqrt{\frac{2\log(\frac{2}{\delta})}{N}} \sqrt{\mathrm{Var}_{P^{\no}_{h,s,\ba}}([V]_\alpha)} +  \frac{2 H\log(\frac{2}{\delta})}{3N}  \nonumber \\
	&\leq \sqrt{\frac{2\log(\frac{2}{\delta})}{N}} \sqrt{\mathrm{Var}_{P^{\no}_{h,s,\ba}}(V)} +  \frac{2H \log(\frac{2}{\delta})}{3N},
\end{align}
where the first inequality holds by the fact that $\|V\|_\infty \leq H$, and the last inequality can be easily verified by noticing that $\mathrm{Var}_{P^{\no}_{h,s,\ba}}([V]_\alpha) \leq \mathrm{Var}_{P^{\no}_{h,s,\ba}}(V)$ for all $\alpha \in [0, \max_s V(s)]$.

\paragraph{Step 2: the union bound.}
Then to obtain the union bound, we first notice that the function $\left| \left(P^{\no}_{h,s,\ba} - \widehat{P}^{\no}_{h,s,\ba} \right) [V]_\alpha\right|$ is $1$-Lipschitz w.r.t. $\alpha$ for any V obeying $0\leq V(s) \leq H$. Therefore, we can construct an $\varepsilon_1$-net $N_{\varepsilon_1}$ for $\alpha$ over $[0, H]$ with the size up to $|N_{\varepsilon_1}| \leq \frac{3H}{\varepsilon_1}$ \citep{vershynin2018high}. So applying the uniform concentration argument combined with \eqref{eq:V-p-phat-gap-one-alpha-bernstein} yields that for all $(\alpha, s,\ba)\in N_{\varepsilon_1} \times \cS\times \cA$, with probability at least $1-\delta$,
\begin{align}
\left| \left(P^{\no}_{h,s,\ba} - \widehat{P}^{\no}_{h,s,\ba} \right) [V]_\alpha\right| & \leq \sqrt{\frac{2\log \Big( \frac{2S \allA |N_{\varepsilon_1}| }{\delta} \Big)}{N}} \sqrt{\mathrm{Var}_{P^{\no}_{h,s,\ba}}(V)} +  \frac{2H \log \Big(\frac{2S \allA |N_{\varepsilon_1}| }{\delta} \Big)}{3N}. \label{eq:donkey} 
\end{align}

Inserting the above fact back to \eqref{eq:middle-key-lemma}, we arrive at: for all $(s, \ba) \in \cS \times \cA$,
\begin{align}
  \left|\pmin_{i,h,s, \ba}^{V} V - \pmhat_{i,h,s,\ba}^{V} V \right| &\leq  \max_{\alpha\in [0,H]} \left| P^\no_{h,s,\ba} \left[V\right]_{\alpha} - \widehat{P}^\no_{h,s,\ba} \left[V\right]_{\alpha}\right| \nonumber \\
  & \overset{\mathrm{(i)}}{\leq}  \sup_{\alpha\in \cN_{\varepsilon_1}}  \left| P^\no_{h,s,\ba} \left[V\right]_{\alpha} - \widehat{P}^\no_{h,s,\ba} \left[V\right]_{\alpha}\right| + \varepsilon_1  \notag \\
  & \overset{\mathrm{(ii)}}{\leq}  \sqrt{\frac{2\log \left(\frac{2S\allA |\cN_{\varepsilon_1}|}{\delta} \right)}{N}} \sqrt{\mathrm{Var}_{P^{\no}_{h,s,\ba}}(V)} +  \frac{2\log \left(\frac{2S\allA |\cN_{\varepsilon_1}|}{\delta} \right)H}{3N} + \varepsilon_1  \label{eq:V-p-phat-gap-one-alpha-bernstein-union-N-net} \\
  &\overset{\mathrm{(iii)}}{\leq} \sqrt{\frac{2\log \left(\frac{2S\allA|\cN_{\varepsilon_1}|}{\delta} \right)}{N}} \sqrt{\mathrm{Var}_{P^{\no}_{h,s,\ba}}(V)} +  \frac{\log \left(\frac{2S\allA|\cN_{\varepsilon_1}| }{\delta} \right)H }{N} \notag \\
  & \overset{\mathrm{(iv)}}{\leq} 2\sqrt{\frac{\log \left(\frac{18S \allA N}{\delta} \right)}{N}} \sqrt{\mathrm{Var}_{P^{\no}_{h,s,\ba}}(V)} +  \frac{\log \left(\frac{18S\allA N}{\delta} \right) H}{N} \label{eq:V-p-phat-gap-one-alpha-bernstein-union} \\
  & \leq 2\sqrt{\frac{\log \left(\frac{18S\allA N}{\delta} \right)}{N}} \|V\|_\infty  +  \frac{\log \left(\frac{18S\allA N}{\delta} \right) H}{N}   \notag \\
	&\leq 3\sqrt{\frac{H^2 \log \left(\frac{18S\allA N}{\delta} \right)}{N}}  \label{eq:V-p-phat-gap-one-alpha-hoeffding-union}
\end{align}
where (i) arises from the fact that the solution $\alpha^\star = \arg\max_{\alpha\in [0, H]} \left| P^\no_{h,s,\ba} \left[V\right]_{\alpha} - \widehat{P}^\no_{h,s,\ba} \left[V\right]_{\alpha}\right|$ falls into the $\varepsilon_1$-ball centered around some point inside $N_{\varepsilon_1}$ and $\left| P^\no_{h,s,\ba} \left[V\right]_{\alpha} - \widehat{P}^\no_{h,s,\ba} \left[V\right]_{\alpha}\right|$ is $1$-Lipschitz w.r.t. $\alpha$, (ii) holds by \eqref{eq:donkey}, (iii) follows from taking $\varepsilon_1 = \frac{\log(\frac{2S\allA|\cN_{\varepsilon_1}|}{\delta})H}{3N}$, (iv) is verified by $|\cN_{\varepsilon_1}| \leq \frac{3H}{\varepsilon_1} \leq 9N$, and the last inequality is due to the fact $\|V\|_\infty \leq H$ and letting $N \geq \log(\frac{18S\allA N}{\delta})$.

Invoking the matrix form (see \eqref{eq:inf-p-special-marl} and \eqref{eqn:ppivq-marl}) and applying the above result with $V = \widehat{V}_{i,h+1}^{\pi, \ror_i}$ for a union bound over all $(h,i,s,\ba)\in [H] \times [n] \times \cS\times \cA$, we complete the proof: with probability at least $1-\delta$,
\begin{align}
\forall (h,i)\in [H] \times [n]: \quad a_{i,h}^\pi &=\left|\Pv_{i,h}^{\pi, \widehat{V}} \widehat{V}_{i,h+1}^{\pi, \ror_i} - \Phatv_{i,h}^{\pi, \widehat{V}}\widehat{V}_{i,h+1}^{\pi, \ror_i} \right| \notag \\
&= \left|\Pi_h^{\pi}\pmin_{i,h}^{\pi, \widehat{V}} \widehat{V}_{i,h+1}^{\pi, \ror_i} - \Pi_h^{\pi}\pmhat_{i,h}^{\pi, \widehat{V}}\widehat{V}_{i,h+1}^{\pi, \ror_i} \right| \notag \\
& \overset{\mathrm{(i)}}{\leq} \Pi_h^{\pi}  \left|\pmin_{i,h}^{\pi, \widehat{V}} \widehat{V}_{i,h+1}^{\pi, \ror_i} - \pmhat_{i,h}^{\pi, \widehat{V}}\widehat{V}_{i,h+1}^{\pi, \ror_i} \right| \\
  &\leq  2\sqrt{\frac{\log(\frac{18S \allA nHN}{\delta})}{N}} \Pi_h^{\pi} \sqrt{\mathsf{Var}_{P_h^\no}(\widehat{V}_{i,h+1}^{\pi})} +  \frac{\log(\frac{18S\allA nHN}{\delta}) H}{N} 1 \notag \\
  &\overset{\mathrm{(ii)}}{\leq}  2\sqrt{\frac{\log(\frac{18S \allA nHN}{\delta})}{N}}  \sqrt{\mathsf{Var}_{\Pv_{h}^{\pi}}(\widehat{V}_{i,h+1}^{\pi})} +  \frac{\log(\frac{18S\allA nHN}{\delta}) H}{N} 1  \notag \\
  &\leq 3\sqrt{\frac{H^2\log(\frac{18S\allA nHN}{\delta})}{N}}1, \label{eq:key-concentration-bound}
\end{align}
where (i) and (ii) hold by the Jensen's inequality,  $\mathsf{Var}(\cdot)$ is defined in \eqref{eq:defn-variance-vector-marl}, and $P_h^\no, \Pv_{h}^{\pi}$ are defined in \eqref{eqn:ppivq-marl}.

\subsubsection{Proof of Lemma~\ref{lem:key-lemma-reduce-H}}\label{proof:lem:key-lemma-reduce-H}

In this section, we want to take the accessible range of the robust value function $\widehat{V}_{i,j+1}^{\pi, \ror_i}$ into consideration when controlling $\sum_{j=h}^H \left< d_h^j, \mathsf{Var}_{\Pv_{i,j}^{\pi, \widehat{V}}}(\widehat{V}_{i,j+1}^{\pi, \ror_i})  \right>$. Towards this, we introduce some auxiliary values and reward functions as below. For any time step $h\in[H]$ and the $i$-th agent:
\begin{itemize}
	\item $\widehat{V}_h^{\min} \defn \min_{s\in\cS} \widehat{V}_{i,h}^{\pi, \ror_i} (s)$: $\widehat{V}_h^{\min}$ denote the minimum value of all the entries in vector $\widehat{V}_{i,h}^{\pi, \ror_i}$.
	\item $\widehat{V}_h'\defn   \widehat{V}_{i,h}^{\pi, \ror_i} - \widehat{V}_h^{\min} 1 $: truncated value function.
	\item $\widehat{r}_{i,h}^{\min} = r_{i,h}^{\pi}  + \left( \widehat{V}_{h+1}^{\min} - \widehat{V}_{h}^{\min}\right) 1 $: truncated reward function.
\end{itemize}
With above notations, we introduce the following fact of $V_h'$:
\begin{align}
\widehat{V}_h' = \widehat{V}_{i,h}^{\pi, \ror_i} - \widehat{V}_h^{\min} 1 &\overset{\mathrm{(i)}}{=} r_{i,h}^{\pi} +\Phatv_{i,h}^{\pi, \widehat{V}} \widehat{V}_{i,h+1}^{\pi, \ror_i} -  \widehat{V}_h^{\min} 1 \nonumber \\
&= r_{i,h}^{\pi} + \Pv_{i,h}^{\pi, \widehat{V}} \widehat{V}_{i,h+1}^{\pi, \ror_i} +  \Big( \Phatv_{i,h}^{\pi, \widehat{V}} - \Pv_{i,h}^{\pi, \widehat{V}}\Big) \widehat{V}_{i,h+1}^{\pi, \ror_i} - \widehat{V}_h^{\min} 1\nonumber \\
&= r_{i,h}^{\pi}  + \left( \widehat{V}_{h+1}^{\min} - \widehat{V}_{h}^{\min}\right) 1  + \Pv_{i,h}^{\pi, \widehat{V}}  \widehat{V}_{h+1}' + \Big( \Phatv_{i,h}^{\pi, \widehat{V}} - \Pv_{i,h}^{\pi, \widehat{V}}\Big) \widehat{V}_{i,h+1}^{\pi, \ror_i}  \nonumber \\
& = \widehat{r}_{i,h}^{\min} + \Pv_{i,h}^{\pi, \widehat{V}}  \widehat{V}_{h+1}' + \Big( \Phatv_{i,h}^{\pi, \widehat{V}} - \Pv_{i,h}^{\pi, \widehat{V}}\Big) \widehat{V}_{i,h+1}^{\pi, \ror_i} , \label{eq:bellman-minus-vmin-vstar}
\end{align}
where (i) holds by the robust Bellman's consistency equation in \eqref{eq:r-bellman-matrix}.

With the above fact in hand, we can verify that
\begin{align}
	\mathsf{Var}_{\Pv_{i,h}^{\pi, \widehat{V}}}(\widehat{V}_{i,h+1}^{\pi, \ror_i})  &\overset{\mathrm{(i)}}{=} \mathrm{Var}_{\Pv_{i,h}^{\pi, \widehat{V}}}(\widehat{V}_{h+1}')  = \Pv_{i,h}^{\pi, \widehat{V}} \left(\widehat{V}_{h+1}' \circ \widehat{V}_{h+1}'\right) - \big(\Pv_{i,h}^{\pi, \widehat{V}} \widehat{V}_{h+1}'\big) \circ  \big(\Pv_{i,h}^{\pi, \widehat{V}} \widehat{V}_{h+1}' \big) \nonumber \\
	& \overset{\mathrm{(ii)}}{=} \Pv_{i,h}^{\pi, \widehat{V}} \left( \widehat{V}_{h+1}' \circ  \widehat{V}_{h+1}'\right)  - \Big( \widehat{V}_h' - \widehat{r}_{i,h}^{\min} - \Big( \Phatv_{i,h}^{\pi, \widehat{V}} - \Pv_{i,h}^{\pi, \widehat{V}}\Big) \widehat{V}_{i,h+1}^{\pi, \ror_i} \Big)^{\circ 2} \nonumber \\
	& = \Pv_{i,h}^{\pi, \widehat{V}} \left(\widehat{V}_{h+1}' \circ \widehat{V}_{h+1}'\right) -  \widehat{V}_h' \circ \widehat{V}_h' + 2 \widehat{V}_h' \circ \Big(\widehat{r}_{i,h}^{\min} + \Big( \Phatv_{i,h}^{\pi, \widehat{V}} - \Pv_{i,h}^{\pi, \widehat{V}}\Big) \widehat{V}_{i,h+1}^{\pi, \ror_i}\Big) \nonumber \\
	& \quad -   \Big(\widehat{r}_{i,h}^{\min} + \Big( \Phatv_{i,h}^{\pi, \widehat{V}} - \Pv_{i,h}^{\pi, \widehat{V}}\Big) \widehat{V}_{i,h+1}^{\pi, \ror_i} \Big)^{\circ 2} \nonumber \\
	& \overset{\mathrm{(iii)}}{\leq} \Pv_{i,h}^{\pi, \widehat{V}} \left(\widehat{V}_{h+1}' \circ \widehat{V}_{h+1}'\right) -  \widehat{V}_h' \circ \widehat{V}_h' + 2 \big\|\widehat{V}_h'\big\|_\infty \Big( 1 + \Big|\Big( \Phatv_{i,h}^{\pi, \widehat{V}} - \Pv_{i,h}^{\pi, \widehat{V}}\Big) \widehat{V}_{i,h+1}^{\pi, \ror_i} \Big| \Big)  \label{eq:variance-tight-bound-vstar-repeat} \\
	& \leq \Pv_{i,h}^{\pi, \widehat{V}} \left(\widehat{V}_{h+1}' \circ \widehat{V}_{h+1}'\right) -  \widehat{V}_h' \circ \widehat{V}_h' + 2 \big\|\widehat{V}_h' \big\|_\infty 1 + 6 \|V_h'\|_\infty   \sqrt{\frac{H^2\log \left(\frac{18S\allA nHN}{\delta} \right)}{N}}1 \label{eq:variance-tight-bound-vstar}
\end{align}
holds with probability at least $1- \delta$, where (i) follows from the fact that $\mathrm{Var}_{\Pv_{i,h}^{\pi, \widehat{V}}}(V - b 1) = \mathrm{Var}_{\Pv_{i,h}^{\pi, \widehat{V}}}(V)$ for any value vector $V\in\mathbb{R}^S$ and scalar $b$, (ii) holds by \eqref{eq:bellman-minus-vmin-vstar}, (iii) arises from $\widehat{r}_{i,h}^{\min}  \leq r_{i,h}^{\pi} \leq 1 $ since $ V_{h+1}^{\min} - V_{h}^{\min} \leq 0$ by definition, and the last inequality holds by \eqref{eq:key-concentration-bound}.

Finally, combining \eqref{eq:variance-tight-bound-vstar} and the definition of $d_h^j$ in \eqref{eq:defn-of-d}, the term of interest can be controlled as

\begin{align}
&\sum_{j=h}^H \left< d_h^j, \mathsf{Var}_{\Pv_{i,j}^{\pi, \widehat{V}}}(\widehat{V}_{i,j+1}^{\pi, \ror_i})  \right> \notag \\
& = \sum_{j=h}^H (d_h^j)^\top \left( \Pv_{i,j}^{\pi, \widehat{V}} \left(\widehat{V}_{j+1}' \circ \widehat{V}_{j+1}'\right) -  \widehat{V}_j' \circ \widehat{V}_j' + 2 \|\widehat{V}_j'\|_\infty 1 + 6 \|\widehat{V}_j'\|_\infty   \sqrt{\frac{H^2\log \left(\frac{18S\allA nHN}{\delta} \right)}{N}}1\right) \notag \\
& \overset{\mathrm{(i)}}{\leq}\sum_{j=h}^H \left[(d_h^j)^\top \left( \Pv_{i,j}^{\pi, \widehat{V}} \left(\widehat{V}_{j+1}' \circ \widehat{V}_{j+1}'\right) -  \widehat{V}_j' \circ \widehat{V}_j'\right) \right]+ 2H\|\widehat{V}_h'\|_\infty + 6H^2\|\widehat{V}_h'\|_\infty\sqrt{\frac{\log \left(\frac{18S\allA nHN}{\delta} \right)}{N}} \notag \\
& = \sum_{j=h}^H \left[ (d_h^{j+1})^\top  \left(\widehat{V}_{j+1}' \circ \widehat{V}_{j+1}'\right) -  (d_h^{j})^\top \left(\widehat{V}_j' \circ \widehat{V}_j' \right) \right]+ 2H\|\widehat{V}_h'\|_\infty + 6H^2\|\widehat{V}_h'\|_\infty\sqrt{\frac{\log \left(\frac{18S\allA nHN}{\delta} \right)}{N}} \notag \\
& \leq \left\|d_h^{H+1} \right\|_1  \left\|\widehat{V}_{H+1}' \circ \widehat{V}_{H+1}'\right\|_\infty + 2H\|\widehat{V}_h'\|_\infty + 6H^2\|\widehat{V}_h'\|_\infty\sqrt{\frac{\log \left(\frac{18S\allA nHN}{\delta} \right)}{N}} \notag \\
&\leq 3H\|\widehat{V}_h'\|_\infty + 6H^2\|\widehat{V}_h'\|_\infty\sqrt{\frac{\log \left(\frac{18S\allA nHN}{\delta} \right)}{N}} \notag \\
& = 3H\|\widehat{V}_h'\|_\infty \left(1 + 2H\sqrt{\frac{\log(\frac{18S\allA nHN}{\delta})}{N}} \right),
\end{align}
where (i) holds by the fact $\|\widehat{V}_h'\|_\infty \geq \|\widehat{V}_{h+1}'\|_\infty \geq \cdots \geq \|\widehat{V}_{H}'\|_\infty $ and basic calculus.

\subsubsection{Proof of Lemma~\ref{eq:extra-lemma1}}\label{proof:eq:extra-lemma1}

We start with the proof about the empirical MG $\mathcal{MG}_{\mathsf{rob}}$.
To begin with, for any policy $\pi$ and the $i$-th agent, we define
\begin{align}
\forall h\in[H]: \quad V_{i,h}^{\mathsf{span}} \defn \widehat{V}_{i,h}^{\pi,\ror_i} - \min_{s'\in\cS} \widehat{V}_{i,h}^{\pi,\ror_i}(s') 1,
\end{align}
which leads to 
\begin{align}
 \left\|V_{i,h}^{\mathsf{span}} \right\|_\infty \leq  \min \left\{\frac{1}{\ror_i}, H-h+1 \right\}.
\end{align}
which holds by applying Lemma~\ref{lemma:pnorm-key-value-range}.

Armed with above notation and facts, considering any transition kernel $P'\in\mathbb{R}^S$ and any $\widetilde{P} \in\mathbb{R}^S$ obeying $\widetilde{P}\in \cU^{\ror_i}(P')$, we have for all $(i,h)\in [n]\times [H] $
\begin{align}
  \big|\mathsf{Var}_{P'}(\widehat{V}_{i,h}^{\pi,\ror_i}) - \mathsf{Var}_{\widetilde{P}}(\widehat{V}_{i,h}^{\pi,\ror_i}) \big|  & =   \big|\mathsf{Var}_{P'}(V_{i,h}^{\mathsf{span}}) - \mathsf{Var}_{\widetilde{P}}(V_{i,h}^{\mathsf{span}}) \big|  \notag \\
&   \leq  \big\|\widetilde{P}- P' \big\|_1\big\|V_{i,h}^{\mathsf{span}}\big\|_\infty \notag \\
& \leq    \ror_i  \left(\min \left\{\frac{1}{\ror_i}, H-h+1 \right\} \right)^2 \leq \min \left\{\frac{1}{\ror_i}, H-h+1 \right\}. \label{eq:vmax-sigma-big}
\end{align}

Similar facts can be verified for standard MG $\mathcal{MG}$ analogously.

\subsubsection{Proof of Lemma~\ref{lem:key-lemma-reduce-H-2}}\label{proof:lem:key-lemma-reduce-H-2}

Analogous to Appendix~\ref{proof:lem:key-lemma-reduce-H}, we introduce some auxiliary values and reward functions to control 
$$\sum_{j=h}^H \left< w_h^j, \mathsf{Var}_{\Pv_{i,j}^{\pi, V}}(V_{i,j+1}^{\pi,\ror_i})  \right>$$ as below: for any time step $h$ and the $i$-th agent
\begin{itemize}
	\item $V_h^{\min} \defn \min_{s\in\cS} V_{i,h}^{\pi,\ror_i} (s)$: $V_h^{\min}$ denote the minimum value of all the entries in vector $V_{i,h}^{\pi,\ror_i}$.
	\item $V_h'\defn   V_{i,h}^{\pi,\ror_i} - V_h^{\min} 1 $: truncated value function.
	\item $ r_{i,h}^{\min} = r_{i,h}^{\pi}  + \left( V_{h+1}^{\min} - V_{h}^{\min}\right) 1 $: truncated reward function.
\end{itemize}
 
Then applying the robust Bellman's consistency equation in \eqref{eq:Q-value-mg-matrix} gives
\begin{align}
V_h' = V_{i,h}^{\pi,\ror_i} - V_h^{\min} 1 &=r_{i,h}^{\pi} +\Pv_{i,h}^{\pi, V} V_{i,h+1}^{\pi,\ror_i} -  V_h^{\min} 1 \nonumber \\
&= r_{i,h}^{\pi} + \left( V_{h+1}^{\min} - V_{h}^{\min}\right) 1 + \Pv_{i,h}^{\pi, V} V_{h+1}' = r_{i,h}^{\min} + \Pv_{i,h}^{\pi, V} V_{h+1}'. \label{eq:bellman-minus-vmin-vstar2}
\end{align}

The above fact leads to
\begin{align}
	\mathsf{Var}_{\Pv_{i,h}^{\pi, V}}(V_{i,h+1}^{\pi,\ror_i})  &\overset{\mathrm{(i)}}{=} \mathrm{Var}_{\Pv_{i,h}^{\pi, V}}(V_{h+1}')  = \Pv_{i,h}^{\pi, V} \left(V_{h+1}' \circ V_{h+1}'\right) - \big(\Pv_{i,h}^{\pi, V} V_{h+1}'\big) \circ  \big(\Pv_{i,h}^{\pi, V} V_{h+1}' \big) \nonumber \\
	& \overset{\mathrm{(ii)}}{=} \Pv_{i,h}^{\pi, V} \left(V_{h+1}' \circ V_{h+1}'\right)  - \Big(V_h' - r_{i,h}^{\min} \Big)^{\circ 2} \nonumber \\
	& = \Pv_{i,h}^{\pi, V} \left(V_{h+1}' \circ V_{h+1}'\right) -  V_h' \circ V_h' + 2 V_h' \circ r_{i,h}^{\min} -  r_{i,h}^{\min} \circ r_{i,h}^{\min} \nonumber \\
	& \leq \Pv_{i,h}^{\pi, V} \left(V_{h+1}' \circ V_{h+1}'\right) -  V_h' \circ V_h' + 2 \|V_h'\|_\infty 1   \label{eq:variance-tight-bound-vstar2}
\end{align}
where (i) follows from the fact that $\mathrm{Var}_{\Pv_{i,h}^{\pi, V}}(V - b 1) = \mathrm{Var}_{\Pv_{i,h}^{\pi, \widehat{V}}}(V)$ for any value vector $V\in\mathbb{R}^S$ and scalar $b$, (ii) holds by \eqref{eq:bellman-minus-vmin-vstar2}, and the last inequality arises from $r_{i,h}^{\min}  \leq r_{i,h}^{\pi} \leq 1 $ since $ V_{h+1}^{\min} - V_{h}^{\min} \leq 0$ by definition.

Consequently, combining \eqref{eq:variance-tight-bound-vstar2} and the definition of $w_h^j$ in \eqref{eq:defn-of-d-2}, we arrive at
\begin{align}
&\sum_{j=h}^H \Big< w_h^j, \mathsf{Var}_{\Pv_{i,j}^{\pi, V}} \left(V_{i,j+1}^{\pi, \ror_i} \right)  \Big> \notag \\
& = \sum_{j=h}^H (w_h^j)^\top \left( \Pv_{i,j}^{\pi, V} \left(V_{j+1}' \circ V_{j+1}'\right) -  V_j' \circ V_j' + 2 \|V_h'\|_\infty 1 \right) \notag \\
& \overset{\mathrm{(i)}}{\leq}\sum_{j=h}^H \left[(w_h^j)^\top \left( \Pv_{i,j}^{\pi, V} \left(V_{j+1}' \circ V_{j+1}'\right) -  V_j' \circ V_j'\right) \right]+ 2H\|V_h'\|_\infty  \notag \\
& = \sum_{j=h}^H \left[ (w_h^{j+1})^\top  \left(V_{j+1}' \circ V_{j+1}'\right) -  (w_h^{j})^\top \left(V_j' \circ V_j' \right) \right]+ 2H\|V_h'\|_\infty \notag \\
& \leq \|w_h^{H+1}\|_1  \left\|V_{H+1}' \circ V_{H+1}'\right\|_\infty + 2H\|V_h'\|_\infty  \notag \\
&\leq 3H\|V_h'\|_\infty,
\end{align}
where (i) and the last inequality hold by the fact $\|V_h'\|_\infty \geq \|V_{h+1}'\|_\infty \geq \cdots \geq \|V_{H}'\|_\infty $ and basic calculus.

\section{Proof of Theorem~\ref{thm:robust-mg-lower-bound}}\label{proof:thm:robust-mg-lower-bound}

In this section, the proof will focus on a special and simpler class of \rmgs: distributionally robust Markov decision processes (RMDPs) --- single-agent \rmgs.

Before proceeding, to keep self-contained, we first briefly introduce the definition of a RMDP in finite-horizon episodic setting. Recall that a multi-agent general-sum robust Markov games (\rmg) with TV uncertainty set can be represented as $\mathcal{MG} = \big\{ \cS, \{\cA_i\}_{1 \le i \le n},\{\cU^{\ror_i}(P^\no)\}_{1 \le i \le n}, \rew,  H \big\}$. Resorting to the same notations for \rmgs, a finite-horizon episodic distributionally robust MDP (RMDP) can be represented as $\mathcal{M}_{\mathsf{rob}}= \big(\mathcal{S},\mathcal{A}_1, \unb^{\ror_1}(P^{\no}), \{r_{1,h}\}_{1\leq h\leq H}, H\big)$, i.e., let $n=1$. Then we can show an essential fact between \rmgs and RMDPs that allow us to turn to RMDPs for proving Theorem~\ref{thm:robust-mg-lower-bound}.
Without loss of generality, we consider the class of \rmgs with $n$ players that obey $|\cA_1| \geq \max\{|\cA_2|, \cdots, |\cA_m|\}$. Moreover, let $|\cA_2| = |\cA_3| = \cdots =|\cA_m| = 1$ for simplicity, which leaves those agents' ($i=2,3,\cdots, n$) choices of actions having no randomness or effects on the transitions or rewards for any agents. Consequently, it is clear that finding a robust NE/CE/CCE of such \rmgs degrades to finding the optimal policy of the first agent over a corresponding RMDP $\mathcal{M}_{\mathsf{rob}} = \big\{ \cS, \cA_1, \cU^{\ror_1}(P^\no), \{r_{1,h}\}_{1\leq h\leq H},  H \big\}$.

Therefore, in this section, we turn to construct the lower bound for finding the optimal policy over RMDPs instead, which directly imply a lower bound for finding equilibriums (robust NE/CE/CCE) of \rmgs.

Before continuing, we make note of the following useful property about the KL divergence in \citet[Lemma~2.7]{tsybakov2009introduction} which is useful in this section.
\begin{lemma}
    \label{lem:KL-key-result}
    For any $p, q \in (0,1)$, it holds that 
    \begin{align}
      \mathsf{KL}(p \parallel q) \leq \frac{(p-q)^2}{q(1-q)}. \label{eq:KL-dis-key-result} 
    \end{align}
\end{lemma}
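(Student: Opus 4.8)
The plan is to recognize that for scalars $p,q\in(0,1)$ the quantity $\mathsf{KL}(p\parallel q)$ is the Kullback--Leibler divergence between the two Bernoulli laws $\mathrm{Ber}(p)$ and $\mathrm{Ber}(q)$, so that by definition
$$\mathsf{KL}(p\parallel q) = p\log\frac{p}{q} + (1-p)\log\frac{1-p}{1-q}.$$
The entire argument then reduces to one elementary inequality followed by a single algebraic simplification; none of the RMG or sampling machinery developed earlier is needed here, since \eqref{eq:KL-dis-key-result} is a self-contained analytic fact quoted from \citet{tsybakov2009introduction}.

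First I would apply the elementary bound $\log x \le x-1$, valid for every $x>0$, separately to the two logarithmic terms, taking $x=p/q$ and $x=(1-p)/(1-q)$. Because the multiplying factors $p$ and $1-p$ are nonnegative, this preserves the inequality termwise and gives
$$\mathsf{KL}(p\parallel q) \le p\Big(\frac{p}{q}-1\Big) + (1-p)\Big(\frac{1-p}{1-q}-1\Big) = \frac{p^2}{q} + \frac{(1-p)^2}{1-q} - 1.$$

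It then remains to simplify the right-hand side. Placing the three terms over the common denominator $q(1-q)$ produces a single fraction whose numerator is $p^2(1-q) + (1-p)^2 q - q(1-q)$; expanding shows this equals $p^2 - 2pq + q^2 = (p-q)^2$, whence
$$\mathsf{KL}(p\parallel q) \le \frac{(p-q)^2}{q(1-q)},$$
which is exactly the claimed bound.

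The only step that warrants care — though it is entirely routine — is the expansion of the numerator, where the cross terms $-p^2 q$ and $+p^2 q$ cancel and the terms linear in $q$ telescope, leaving the clean square $(p-q)^2$; a sign slip here is the most likely source of error, so I would write out the expansion explicitly rather than assert it. I emphasize that I would use the log inequality directly in this form rather than route through a second-order Taylor expansion of $p\mapsto\mathsf{KL}(p\parallel q)$, since the latter would force me to locate an intermediate point $\xi$ between $p$ and $q$ and then bound $1/\big(2\xi(1-\xi)\big)$ by $1/\big(q(1-q)\big)$, a comparison that is not uniformly valid and would complicate an otherwise immediate proof.
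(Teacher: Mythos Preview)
Your proof is correct: the bound $\log x\le x-1$ applied to each term of the Bernoulli KL, followed by the algebraic simplification $p^2/q+(1-p)^2/(1-q)-1=(p-q)^2/\big(q(1-q)\big)$, is exactly the standard argument. The paper does not supply its own proof of this lemma at all---it simply quotes the inequality from \citet[Lemma~2.7]{tsybakov2009introduction}---so there is nothing further to compare.
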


\subsection{Constructing hard robust MDP instances}

The hard instances developed here are different from standard  MDP since we need to consider that the transition kernel can be perturbed in robust MDPs. This is the first lower bound for robust MDPs in episodic setting.

\paragraph{Step 1: constructing hard robust MDP instances.}

To begin with, we first introduce an auxiliary collection $\Theta \subseteq \{0,1\}^{H}$, consisting of $H$-dimensional vectors. In addition, resorting to the Gilbert-Varshamov lemma \citep{gilbert1952comparison}, we notice that there exists a set $\Theta \subseteq \{0, 1\}^{H}$ such that:%
\begin{equation}
      \text{for any }\theta,\widetilde{\theta}\in \Theta \text{ obeying }\theta \ne \widetilde{\theta}: \quad    \|\theta - \widetilde{\theta}\|_1 \ge \frac{H}{8}
     \qquad \text{and} \qquad 
    |\Theta| \ge e^{H/8}.
    \label{eq:property-Theta} 
\end{equation}

Without loss of generality, we denote the first component of $\Theta$ as $\theta^{\mathsf{base}}$ and denote $\Theta^\star$ as $\Theta \setminus \{\theta^{\mathsf{base}}\}$. 
With this in mind, we construct a set of RMDPs as below:
\begin{equation}
    \cM(\cW, \Theta) \defn \left\{ \mathcal{M}_{w}^{\theta} = 
    \big(\mathcal{S}, \mathcal{A},  \unb^\ror(P^{w,\theta}), \{r_h\}_{h=1}^H, H \big) 
    \mid w \in \cW =\{0,1,\cdots, SA-1\}, \theta = [\theta_h]_{1\leq h\leq H} \in \Theta^\star
    \right\}, \label{eq:theta-class}
\end{equation}
where
\begin{align*}
    \cS = \{0, 1, \ldots, S-1\}, 
    \qquad \text{and} \qquad  \mathcal{A} = \{0, 1,\cdots, A-1\},
\end{align*}
and $\ror$ will be introduced momentarily.

In words, the collection of $\cM(\cW, \Theta)$ consists of $|\cW| = SA$ subsets, with each includes $|\Theta^\star|$ different RMDPs associated with some $w \in \cW$. The state space of each RMDP $\cM_w^\theta \in \cM(\cW, \Theta)$ is denoted as $\cS_{\cM}$, includes two classes of states $\cX = \{x_{i} \mid i\in \cW\}$ and $\cY = \{y_{i} \mid i\in\cW\}$. Each state in $\cX$ and $\cY$ only have two possible actions $\cA_{\cM} = \{0,1\}$. So we have totally $2|\cW| = 2SA$ states and there is in total $|\cS_{\cM}||\cA_{\cM}| = 4SA$ state-action pairs.

We shall define the nominal transition kernels for $\cM(\cW, \Theta)$, where any state $x_{i} \in \cX$ only transits to the corresponding $y_{i}\in\cY$ or itself. For convenience, for any $s = x_{i}\in \cX$, we denote the corresponding state $y_{i} \in \cY$ as $s^{x\rightarrow y}$. 

Armed with above notations, we define a basic nominal transition kernel associated with $\theta^{\mathsf{base}}$ as below: for all $(h,s,a)\in [H] \times \cS_{\cM} \times \cA_{\cM}$,
\begin{align}
P^{\star}_h(s^{\prime} \mymid s, a) = \left\{ \begin{array}{lll}
         (p+\Delta)\mathds{1}(s^{\prime} = s^{x \rightarrow y}) + (1-p-\Delta)\mathds{1}(s^{\prime} = s)  & \text{if} & s \in \cX, a = \theta^{\mathsf{base}}_h \\
          p\mathds{1}(s^{\prime} = s^{x \rightarrow y}) + (1-p)\mathds{1}(s^{\prime} = s)  & \text{if} & s\in \cX, a = 1-\theta^{\mathsf{base}}_h \\
          \ind(s'=s) & \text{if} & s\in \cY.
                \end{array}\right.
        \label{eq:Ph-construction-lower-bound-finite-theta}
\end{align}

In addition, for any RMDP $\cM_{w}^\theta \in \cM(\cW, \Theta)$, the transition kernel $P^{w,\theta} = \{P^{w,\theta}_{h}\}_{h=1}^H$ is specified as follows: for any $(s,a,s',h)\in \cS_{\cM}\times \cA_{\cM} \times \cS_{\cM}\times [H]$,
\begin{align}
P^{w,\theta}_{h}(s^{\prime} \mymid s, a) =
\left\{ \begin{array}{lll}
         p\mathds{1}(s^{\prime} = y_{w}) + (1-p)\mathds{1}(s^{\prime} = s)  & \text{if} \quad  s= x_{w}, a = \theta_h \\
          q\mathds{1}(s^{\prime} = y_w) + (1-q)\mathds{1}(s^{\prime} = s)  & \text{if} \quad s= x_{w}, a = 1-\theta_h \\
          P^{\star}_h(s^{\prime} \mymid s, a) & \text{otherwise} 
                \end{array}\right.
        \label{eq:Ph-construction-lower-bound-finite-theta2}
\end{align}
Here,
 $p$ and $q$ are set according to
\begin{align}\label{eq:p-q-defn-infinite}
   0 \leq p \leq p +\Delta \leq 1 \quad \text{ and } \quad 0\leq q = p - \Delta
\end{align}
for some $p$ and $\Delta>0$ that will be introduced momentarily.
In words, the transition kernel of each $\cM_{w}^\theta \in \cM(\cW, \Theta)$ only differs slightly from the basic nominal transition kernel $P^{\star}_{h}$ when $s= x_w$, which makes all the components within $\cM(\cW, \Theta)$ closed to each other.

To continue, the reward function is defined as
\begin{align}
\forall (h, s, a) \in[H] \times \cS_{\cM} \times  \{0,1\}: \quad r_h(s, a) = \left\{ \begin{array}{lll}
         1 & \text{if } s \in \cY \\
         0 & \text{otherwise}.
                \end{array}\right.
        \label{eq:rh-construction-lower-bound-infinite}
\end{align}

\paragraph{Uncertainty set of the transition kernels.}
Denote the transition kernel vector as
\begin{align}
  \forall (h, s, a) \in[H] \times \cS_{\cM} \times \{0,1\}: \quad P_{h,s,a}^{w,\theta} \defn P^{w,\theta}_h(\cdot \mymid s,a) \in \Delta(\cS).
\end{align}
Recalling the uncertainty set defined in \eqref{eq:sa-rec-defn}, we know $\cU^{\ror}(P^{w,\theta})$ represents:
\begin{align}
\unb^{\ror}(P^{w,\theta}) \defn  \otimes \; \cU^{\ror}(P^{w,\theta}_{h,s,a}),\qquad &\cU^{\ror}(P^{w,\theta}_{h,s,a}) \defn \Big\{ \widetilde{P}^{w,\theta}_{h,s,a} \in \Delta (\cS): \frac{1}{2} \big\|  \widetilde{P}^{w,\theta}_{h,s,a} - P^{w,\theta}_{h,s,a}\big\|_1 \leq \ror \Big\}, \label{eq:tv-ball-infinite-P-recall1}
\end{align}
where $\otimes$ represents the Cartesian product over $(h,s,a)\in [H] \times \cS_{\cM} \times \cA_{\cM}$.

For such TV uncertainty set, without loss of generality, let the uncertainty level to be $\ror \in (0, 1-c_0]$ for some $0< c_0 < 1$. Then taking $c_2 \leq \frac{1}{4}$ amd $c_1 \defn \frac{c_0}{2} \leq \frac{1}{4}$,  $p$ and $\Delta$ are set as 
\begin{align}\label{eq:p-q-defn-infinite2}
    p =  \begin{cases} \frac{c_2}{H}, &\text{if } \ror\leq \frac{c_2}{2H} \\
    \left( 1 + \frac{c_1}{H} \right) \sigma & \text{otherwise}
    \end{cases}
      \qquad \text{and} \qquad
      \Delta \leq \begin{cases}  \frac{c_2}{2H}, &\text{if } \ror\leq \frac{c_2}{2H} \\
      \frac{c_1}{H} \sigma & \text{otherwise}
      \end{cases}
\end{align}
Combined with $H\geq 2$, it is easily verified that $ 0 \leq p + \Delta \leq 1$ as follows:
\begin{align}\label{eq:tv-1-p-bound}
&\text{when $\ror > \frac{c_2}{2H}$}: \quad \left(1 + \frac{c_1}{H} \right)\ror + \frac{c_1}{H} \ror  \leq 1- c_0 + \frac{2c_1}{H} \ror \leq 1-\frac{c_0(H-1)}{H} < 1,  \notag \\
& \text{when $\ror\leq \frac{c_2}{2H}$}: \quad
\frac{3c_2}{2H} \leq 1.
\end{align}

Then we introduce some useful notations and facts throughout this section. First, for any RMDP $\cM_{w}^\theta \in \cM(\cW, \Theta)$ and any $(h,s,a,s')\in [H] \times \cS_{\cM}\times \cA_{\cM} \times \cS_{\cM}$, we denote the minimum probability of transiting from $(s,a)$ to $s'$ determined by any perturbed transition kernel $P_{h,s,a} \in \unb^{\ror}(P^{w,\theta}_{h,s,a})$ as  
\begin{align}\label{eq:infinite-lw-def-p-q}
\underline{P}_h^{w,\theta}(s' \mymid s,a) &\defn \inf_{P_{h,s,a} \in\unb^{\ror}(P^{w,\theta}_{h,s,a}) } P_h(s'  \mymid s,a) = \max \{P_h(s'  \mymid s,a) - \ror, 0\},
\end{align}
where the last equation can be easily verified by the definition of $\cU^{\ror}(\cdot)$ in \eqref{eq:tv-ball-infinite-P-recall1} and distributing the probability on $s'$ to other states.

Especially, for convenience, we denote the transition from each $s \in \cX$ to the corresponding state $s^{x \rightarrow y} \in \cY$ of any $\cM_w^\theta$ as below, which plays an important role in the analysis: for all $h\in [H]$,
\begin{align}\label{eq:infinite-lw-p-q-perturb-inf}
\text{for } x_w: \quad  &\underline{p}_h \defn \underline{P}_h^{w,\theta}(y_w \mymid x_w, \theta_h) = p - \ror ,\qquad \underline{q}_h  \defn \underline{P}_h^{w,\theta}(y_w \mymid x_w, 1 - \theta_h)  = q - \ror, \notag \\
\text{for } s\in \cX \setminus \{x_w\}: \quad  & \underline{p}_h' \defn \underline{P}_h^{w,\theta}(s^{x\rightarrow y} \mymid s, \theta^{\mathsf{base}}_h) = p +\Delta- \ror,\qquad \underline{q}'_h  \defn \underline{P}_h^{w,\theta}(s^{x\rightarrow y} \mymid s, 1 - \theta^{\mathsf{base}}_h)  = p - \ror,
\end{align}
which follows from the following fact that is clear from \eqref{eq:p-q-defn-infinite2}
\begin{align}\label{eq:infinite-p-q-bound}
    p + \Delta \geq p \geq q = p -\Delta \geq  \max \left\{ \frac{c_2}{2H}, \sigma\right\}.
\end{align}

Then it is obvious that
\begin{align}
    \underline{p}_1 = \underline{p}_2 = \cdots \underline{p}_H, \quad \underline{q}_1 = \underline{q}_2 = \cdots \underline{q}_H, \quad \underline{p}'_1 = \underline{p}'_2 = \cdots \underline{p}'_H, \quad \underline{q}'_1 = \underline{q}'_2 = \cdots \underline{q}'_H, \label{eq:finite-lw-upper-p-q-theta-0}
\end{align}
which motivates us to abbreviate them consistently as $\underline{p} \defn \underline{p}_1$, $\underline{q} \defn \underline{q}_1$, $\underline{p}' \defn \underline{p}'_1$, and $\underline{q}' \defn \underline{q}'_1$ later.

\paragraph{Robust value functions and optimal policies.}
Now we are ready to characterize the corresponding robust value functions and  identify the optimal policies for RMDP instances. With abuse of notations, for any RMDP $\cM_w^\theta \in \cM(\cW, \Theta)$, we denote $\pi^{\star,w, \theta} = \{\pi^{\star,w, \theta}_h\}_{h=1}^H$ as the optimal policy. In addition, at each step $h$, we let $V_{h}^{\pi,\ror, w, \theta}$ (resp.~$V_{h}^{\star, \ror, w,  \theta}$) represent the robust value function of any policy $\pi$ (resp.~$\pi^{\star, w, \theta}$) with uncertainty level $\ror$. 
Armed with these notations, the following lemma shows some essential properties concerning the robust value functions and optimal policies; the proof is postponed to Appendix~\ref{proof:lem:finite-lb-value-theta}.
\begin{lemma}\label{lem:finite-lb-value-theta}
Consider any $\cM_w^\theta \in \cM(\cW, \Theta)$ and any policy $\pi$. Defining
\begin{align}
x_h^{\pi, w, \theta} = \underline{p} \pi_h(\theta_h\mymid x_w) + \underline{q} \pi_h(1-\theta_h\mymid x_w),\label{eq:finite-x-h-theta}
\end{align}
it holds that
\begin{subequations}
\begin{align}
  \forall h\in[H]: \quad& V_{h}^{\pi,\ror, w,\theta}(x_w) =x_h^{\pi, w,\theta} V_{h+1}^{\pi,\ror, w,\theta}(y_w) + (1-x_h^{\pi, w,\theta}) V_{h+1}^{\pi,\ror, w,\theta}(x_w), \label{eq:finite-lemma-value-0-pi-theta} \\
  \forall (s,h)\in \cY \times [H]: \quad&  V_{h}^{\pi,\ror, w,\theta}(s) = 1 + (1-\ror) V_{h+1}^{\pi,\ror, w,\theta}(s) + \ror  V_{h+1}^{\pi,\ror, w,\theta}(x_w).  \label{eq:finite-lemma-value-0-pi-theta-yw} 
\end{align}
\end{subequations}
In addition, for all $h\in[H]$, the optimal policy and the optimal value function obey
\begin{subequations}
    \label{eq:finite-lb-value-lemma-theta}
\begin{align}
  \pi_h^{\star,w,\theta}(\theta_h \mymid x_w) &= \pi_h^{\star,w,\theta}(\theta_h \mymid y_w) =  1, \notag \\
    \pi_h^{\star,w,\theta}(\theta^{\mathsf{base}}_h \mymid s) &= \pi_h^{\star,w,\theta}(\theta^{\mathsf{base}}_h \mymid s^{x\rightarrow y}) =  1,  \quad \forall s\in\cX \setminus \{x_w\}
\end{align}
\end{subequations}

and 
\begin{align}
 V_{h}^{\star,\ror, w,\theta}(x_w) = \underline{p} V_{h+1}^{\pi,\ror, w,\theta}(y_w) + (1-\underline{p}) V_{h+1}^{\pi,\ror, w,\theta}(x_w) . \label{eq:finite-lemma-value-0-pi-theta2} 
\end{align}

\end{lemma}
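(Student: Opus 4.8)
The plan is to prove all four claims by a single backward induction on the step index $h$, running from the terminal layer $h=H+1$ down to $h=1$. Specializing the robust Bellman equation \eqref{eq:bellman-consistency-sa} to the single-agent instance $\cM_w^\theta$ and recalling that the reward vanishes on $\cX$ and equals $1$ on $\cY$ (cf.~\eqref{eq:rh-construction-lower-bound-infinite}), the only work in each layer is to evaluate the worst-case inner problem $\inf_{P\in\unb^{\ror}(P^{w,\theta}_{h,s,a})} P V_{h+1}^{\pi,\ror,w,\theta}$ for each state-action pair. Alongside the two recursions \eqref{eq:finite-lemma-value-0-pi-theta}--\eqref{eq:finite-lemma-value-0-pi-theta-yw}, I would carry as an inductive invariant the ordering facts that, for every policy $\pi$, the state $x_w$ attains the pointwise minimum of $V_{h}^{\pi,\ror,w,\theta}$ over all states and that $V_h^{\pi,\ror,w,\theta}(y_w)\ge V_h^{\pi,\ror,w,\theta}(x_w)$. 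The base case $h=H+1$ is immediate since all robust values are $0$, so the invariant holds trivially.

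For the inductive step, fix $h$ and assume the invariant at step $h+1$. At $s=x_w$ the nominal kernel is supported on $\{x_w,y_w\}$ (cf.~\eqref{eq:Ph-construction-lower-bound-finite-theta2}); since $V_{h+1}^{\pi,\ror,w,\theta}(y_w)\ge V_{h+1}^{\pi,\ror,w,\theta}(x_w)$ and $x_w$ is the global minimizer of $V_{h+1}^{\pi,\ror,w,\theta}$, the worst-case distribution simply moves $\ror$ units of mass off $y_w$ onto $x_w$ (the budget is feasible because $q\ge\ror$, hence $p\ge\ror$, by \eqref{eq:infinite-p-q-bound}), yielding $\inf_P PV_{h+1}^{\pi,\ror,w,\theta} = \underline{p}\, V_{h+1}^{\pi,\ror,w,\theta}(y_w)+(1-\underline{p})V_{h+1}^{\pi,\ror,w,\theta}(x_w)$ under action $\theta_h$ and the analogous expression with $\underline{q}$ under $1-\theta_h$, matching \eqref{eq:infinite-lw-p-q-perturb-inf}. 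Averaging over $a\sim\pi_h(x_w)$ and using $\pi_h(\theta_h\mymid x_w)+\pi_h(1-\theta_h\mymid x_w)=1$ collects the coefficients into $x_h^{\pi,w,\theta}$, giving \eqref{eq:finite-lemma-value-0-pi-theta}. At $s=y_w$ the nominal kernel is the point mass at $y_w$; because $x_w$ is the global minimizer, the worst case transfers $\ror$ mass from $y_w$ to $x_w$ regardless of the action, which produces \eqref{eq:finite-lemma-value-0-pi-theta-yw} after adding the reward $1$. Either the direct mass-shifting argument or the dual form of Lemma~\ref{lemma:tv-dual-form} can certify these worst cases.

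The optimal-policy claim \eqref{eq:finite-lb-value-lemma-theta} and the optimal-value recursion \eqref{eq:finite-lemma-value-0-pi-theta2} then follow by inspecting the sign of the coefficients. Since $\underline{p}>\underline{q}$ (because $p>q$, cf.~\eqref{eq:infinite-p-q-bound}) and $V_{h+1}^{\pi,\ror,w,\theta}(y_w)\ge V_{h+1}^{\pi,\ror,w,\theta}(x_w)$, the right-hand side of \eqref{eq:finite-lemma-value-0-pi-theta} is increasing in the weight $x_h^{\pi,w,\theta}$, which is maximized by placing all mass on $\theta_h$; hence $\pi_h^{\star,w,\theta}(\theta_h\mymid x_w)=1$, and substituting $x_h^{\star}=\underline{p}$ into \eqref{eq:finite-lemma-value-0-pi-theta} yields \eqref{eq:finite-lemma-value-0-pi-theta2}. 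The identical computation at a state $s\in\cX\setminus\{x_w\}$ uses $\underline{p}'>\underline{q}'$ to select $\theta^{\mathsf{base}}_h$, while at $\cY$-states the transition is action-independent so any action (and in particular the stated choice) is optimal; this gives the remaining entries of \eqref{eq:finite-lb-value-lemma-theta}.

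The crux---and the step I expect to require the most care---is closing the induction by re-establishing the ordering invariant at step $h$, since that ordering is exactly what licenses the worst-case evaluations used to derive \eqref{eq:finite-lemma-value-0-pi-theta} and \eqref{eq:finite-lemma-value-0-pi-theta-yw}. Here I would argue that $x_w$ has the strictly smallest escape probability into the reward region $\cY$: its best worst-case escape rate is $\underline{p}=p-\ror$, whereas every other $x$-state escapes at rate $\underline{p}'=p+\Delta-\ror>\underline{p}$, and every $\cY$-state already collects reward $1$ each step while only leaking $\ror$ mass back to $x_w$. Combining this with a symmetry argument across the non-$w$ states (all $x_j$ with $j\ne w$ share one value, and all $y_j$ share another) and the induction hypothesis, the recursions show $V_h^{\pi,\ror,w,\theta}(x_w)\le V_h^{\pi,\ror,w,\theta}(s)$ for every $s$ and $V_h^{\pi,\ror,w,\theta}(y_w)\ge V_h^{\pi,\ror,w,\theta}(x_w)$, restoring the invariant and completing the induction.
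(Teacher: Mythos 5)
Your proposal is correct and takes essentially the same route as the paper's own proof: a backward induction that carries an ordering of the robust values across states, evaluates each TV worst case by shifting $\ror$ units of mass onto the global minimizer $x_w$ (feasible since $q\ge\ror$), and reads off the optimal policy from monotonicity of the recursion in the escape probability. The one refinement to make explicit is that the inductive invariant must be the full three-tier ordering $V_{h}^{\pi,\ror,w,\theta}(x_w)\le V_{h}^{\pi,\ror,w,\theta}(s) < V_{h}^{\pi,\ror,w,\theta}(s')$ for $s\in\cX\setminus\{x_w\}$, $s'\in\cY$, together with equality of values within $\cY$ (the paper's \eqref{eq:ordering-states}), since evaluating the worst case at the non-$w$ states requires knowing their $\cY$-partners sit above them; your closing paragraph's symmetry and escape-rate argument effectively supplies exactly this.
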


\subsection{Establishing the lower bound}

Recall our goal: for any policy estimator $\widehat{\pi}$ computed based on the dataset with $N$ samples, we plan to control the quantity
\begin{align}
\max_{(w,\theta)\in\cW\times \Theta^\star}\max_{s\in \cX\cup \cY} \left\{ V_{1}^{\star,\ror, w,\theta}(s) - V_{1}^{\widehat{\pi},\ror, w,\theta}(s) \right\} \geq \max_{(w,\theta)\in\cW\times \Theta^\star}\max_{s\in \cX} \left\{ V_{1}^{\star,\ror, w,\theta}(s) - V_{1}^{\widehat{\pi},\ror, w,\theta}(s) \right\}. \label{eq:lower-bound-goal}
\end{align}

\paragraph{Step 1: converting the goal to estimate $(w,\theta)$.}
Towards this, we make the following  essential claim which shall be verified in Appendix~\ref{proof:eq:tv-Value-0-recursive}: letting
\begin{align}
 \varepsilon \leq \begin{cases} \frac{c_2}{H}, &\text{if } \ror\leq \frac{c_2}{2H} \\
    1 & \text{otherwise}
    \end{cases}
\end{align}
and
\begin{align}\label{eq:Delta-chosen}
    \Delta  =  c_5\begin{cases} \frac{\varepsilon}{H^2}, &\text{if } \ror\leq \frac{c_2}{2H} \\
    \frac{\ror\varepsilon}{H}  & \text{otherwise}
    \end{cases} 
\end{align}
which satisfies \eqref{eq:p-q-defn-infinite2},
it leads to that for any policy $\pi$ obeying 
\begin{align}
    \sum_{h=1}^H \big\|\widehat{\pi}_h(\cdot\mymid x_w) - \pi_h^{\star, w,\theta}(\cdot\mymid x_w) \big\|_1 \ge \frac{H}{8}, \label{eq:theta-assumption}
\end{align}
 one has
\begin{align}
    V_{1}^{\star,\ror, w,\theta}(x_w) - V_{1}^{\widehat{\pi},\ror, w,\theta}(x_w)  > \varepsilon. \label{eq:lower-bound-assumption}
\end{align}

Now we are ready to convert the estimation of an optimal policy to estimate $(w,\theta)$. Towards this, we denote $\mathbb{P}_{w,\theta}$ as the probability distribution when the RMDP is $\mathcal{M}_w^\theta$ for any $(w,\theta) \in \cW\times \Theta^\star$. In addition, we represent the subset of $\cM(\cW,\Theta)$ excluding the ones associated with some $w\in\cW$ as below:
\begin{align}
\cG_{-w} \defn \cW  \setminus \{w\} \times  \Theta^\star.
\end{align}

Then, for any $(w,\theta) \in \cW \times \Theta^\star$, suppose there exists a policy $\widehat{\pi}$ that achieves
\begin{align}
    \mathbb{P}_{w,\theta} \left\{ V_{1}^{\star,\ror, w,\theta}(x_w) - V_{1}^{\widehat{\pi},\ror, w,\theta}(x_w) \leq \varepsilon\right\} \geq \frac{3}{4},
    \label{eq:assumption-theta-small-LB-finite}
\end{align}
which in view of \eqref{eq:lower-bound-assumption} indicates that we necessarily have
\begin{align}
      \mathbb{P}_{w,\theta} \left\{  \sum_{h=1}^H \big\|\widehat{\pi}_h(\cdot\mymid x_w) - \pi_h^{\star, w,\theta}(\cdot\mymid x_w) \big\|_1 < \frac{H}{8} \right\} \geq \frac{3}{4}.
\end{align}

Consequently, taking $ \widetilde{\theta} =\arg\min_{\theta\in\Theta}  \sum_{h=1}^H \big\|\widehat{\pi}_h(\cdot\mymid x_w) - \pi_h^{\star, w,\theta}(\cdot\mymid x_w) \big\|_1 $, we are motivated to construct the following estimate of $(w,\theta)$:
\begin{align}
    \big(\widehat{w},\widehat{\theta} \big)  \begin{cases} = (w, \widetilde{\theta}) & \quad \text{ if } \quad \widetilde{\theta}\in\Theta^\star \\
     \in \cG_{-w}  & \quad \text{ if } \quad \widetilde{\theta} = \Theta \setminus \Theta^\star = \theta^{\mathsf{base}}.
    \end{cases}
    \label{eq:defn-theta-hat-inf-LB}
\end{align}

Then let us focus on the first kind of scenarios in \eqref{eq:defn-theta-hat-inf-LB} when $\widetilde{\theta}\in\Theta^\star$ so that we have the hope to estimate $(w,\theta)$ correctly. Namely, if  $\sum_{h=1}^H \big\|\widehat{\pi}_h(\cdot\mymid x_w) - \pi_h^{\star, w,\theta}(\cdot\mymid x_w) \big\|_1 < \frac{H}{8}$ holds for some $\theta\in\Theta^\star$, then for any $\theta' \in \Theta^\star$ obeying $\theta' \neq \theta$, one has
\begin{align}
\sum_{h=1}^H \big\|\widehat{\pi}_h(\cdot\mymid x_w) - \pi_h^{\star, w,\theta'}(\cdot\mymid x_w) \big\|_1 & \geq \sum_{h=1}^H \big\|\pi^{\star,w,\theta}_h(\cdot\mymid x_w) - \pi_h^{\star, w,\theta'}(\cdot\mymid x_w) \big\|_1 - \sum_{h=1}^H \big\|\widehat{\pi}_h(\cdot\mymid x_w) - \pi_h^{\star, w,\theta}(\cdot\mymid x_w) \big\|_1 \notag \\
& > \frac{H}{4} - \frac{H}{8} = \frac{H}{8},
\end{align}
where the first inequality holds by the triangle inequality, and the last inequality follows from the assumption $\sum_{h=1}^H \big\|\widehat{\pi}_h(\cdot\mymid x_w) - \pi_h^{\star, w,\theta}(\cdot\mymid x_w) \big\|_1 < \frac{H}{8}$ and the separation property of $\theta\in\Theta$ (see  \eqref{eq:property-Theta}).
Similarly,
It shows that we have $(\widehat{w},\widehat{\theta}) = (w, \theta)$ if
\begin{align}
	\sum_{h=1}^H \big\|\widehat{\pi}_h(\cdot\mymid x_w) - \pi_h^{\star, w,\theta}(\cdot\mymid x_w) \big\|_1 < \frac{H}{8} < \sum_{h=1}^H \big\|\widehat{\pi}_h(\cdot\mymid x_w) - \pi_h^{\star, w,\theta'}(\cdot\mymid x_w) \big\|_1 
\end{align}
holds for all $(w', \theta')\in \cW\times \Theta $ that $(w', \theta')\neq (w, \theta)$. It is clear that the above equation can be directly achieved when $\sum_{h=1}^H \big\|\widehat{\pi}_h(\cdot\mymid x_w) - \pi_h^{\star, w,\theta}(\cdot\mymid x_w) \big\|_1 < \frac{H}{8}$, which gives
\begin{align}
	\mathbb{P}_{w,\theta} \left[(\widehat{w},\widehat{\theta}) = (w,\theta)\right] \geq  \mathbb{P}_{w,\theta} \left\{  \sum_{h=1}^H \big\|\widehat{\pi}_h(\cdot\mymid x_w) - \pi_h^{\star, w,\theta}(\cdot\mymid x_w) \big\|_1 < \frac{H}{8} \right\} \geq \frac{3}{4}. \label{eq:consequence-of-sum-H}
\end{align}

\paragraph{Step 2: developing the probability of error in testing multiple hypotheses.}

Before proceeding, we discuss the data generation choices of the dataset $\cD$.
Recall that each RMDP inside the set $\cM(\cW,\Theta)$ under testing has two classes of states $\cX$ and $\cY$, with each has $|\cW| = SA$ components. Noticing that accordingly, $\cM(\cW,\Theta)$ consists of $|\cW|$ subset, with each $\{ \cM_w^\theta \}_{\theta\in\Theta^\star}$ constructed symmetrically around one pair of state $(x_w, y_w) \in \cX \times \cY$, respectively. Therefore, at each time step $h$, it is clear that the dataset are supposed to be generated uniformly by the transition kernels on each pair of states $(x_w, y_w) \in \cX \times \cY$ to maximize the information gain. Namely, the dataset $\cD$ has in total $\frac{N}{|\cW|H} = \frac{N}{SAH}$ samples for the two states $(x_w, y_w) \in \cX \times \cY$ at each time step $h\in[H]$.

Now we turn to the hypothesis testing problem over $(w,\theta) \in \cW \times \Theta^\star$. We shall develop  the information theoretical lower bound for the probability of error. In particular, we consider the minimax probability of error defined as follows:
\begin{equation}
    p_{\mathrm{e}} \coloneqq \inf_{(\widehat{w}, \widehat{\theta})}\max_{(w,\theta)\in \cW\times \Theta^\star} \big\{ \mathbb{P}_{w,\theta} \big( (\widehat{w}, \widehat{\theta}) \neq (w,\theta) \big)\big\} , \label{eq:error-prob-two-hypotheses-finite-LB}
\end{equation}
where the infimum is taken over all possible tests $(\widehat{w}, \widehat{\theta})$ constructed from the dataset.

To continue, armed with the dataset $\cD$ with $N$ samples generated independently, we denote $\mu^{w,\theta}$ (resp.~$\mu^{w,\theta}_h(s,a)$) as the distribution vector (resp.~distribution) of each sample tuple $(s_h, a_h, s_h')$ at time step $h$ under the nominal transition kernel $P^{w,\theta}$ associated with $\mathcal{M}_w^{\theta}$. With this in mind, combined with Fano's inequality from \citet[Theorem~2.2]{tsybakov2009introduction} and  the additivity of the KL divergence (cf.~\citet[Page~85]{tsybakov2009introduction}), we obtain
\begin{align}
p_{\mathrm{e}} &  \geq 1 - N\frac{ \mathop{\max}\limits_{(w,\theta), (\widetilde{w},\widetilde{\theta}) \in \cW\times \Theta^\star, (w,\theta)\neq (\widetilde{w},\widetilde{\theta}) } \mathsf{KL} \big(\mu^{w,\theta} \mymid \mu^{w,\theta}\big)  + \log 2}{\log |\cW| |\Theta^\star|}   \nonumber\\
     &\overset{\mathrm{(i)}}{\geq} 1 - \frac{8N}{H}\mathop{\max}\limits_{(w,\theta), (\widetilde{w},\widetilde{\theta}) \in \cW\times \Theta^\star, (w,\theta)\neq (\widetilde{w},\widetilde{\theta}) } \mathsf{KL} \big(\mu^{w,\theta} \mymid \mu^{w,\theta}\big)  -  \frac{\log2}{H} \notag \\
    &\overset{\mathrm{(ii)}}{\geq} \frac{1}{2} - \frac{8N}{H}\max_{(w,\theta), (\widetilde{w},\widetilde{\theta}) \in \cW\times \Theta^\star, (w,\theta)\neq (\widetilde{w},\widetilde{\theta}) } \mathsf{KL} \big(\mu^{w,\theta} \mymid \mu^{w,\theta}\big) 
    \label{eq:finite-remainder-KL}
\end{align}
where (i) and (ii) holds by $|\cW| | \Theta^\star| \geq 2 (e^{H/8} - 1) \geq e^{H/8}$ as long as $H \geq 16 \log2$.

To continue, applying the chain rule of the KL divergence \citep[Lemma 5.2.8]{duchi2018introductory} with the dataset $\cD$ generated independently yields:
\begin{align}
 \mathsf{KL} \big(\mu^{w,\theta} \mymid \mu^{w,\theta}\big) & =\sum_{h=1}^{H} \mathop{\mathbb{E}}\limits_{(s,a) \sim \mu_h^{w,\theta}(s,a) }\left[\mathsf{KL}\big(P^{w,\theta}_h(\cdot \mymid s,a) \parallel P^{ \widetilde{w}, \widetilde{\theta}}_h(\cdot \mymid s,a) \big)\right] \notag \\
& \overset{\mathrm{(i)}}{= } \sum_{h=1}^{H} \sum_{s\in \{x_w, x_{\widetilde{w}}\}, a\in\{0,1\}} \mu_h^{w,\theta}(s,a) \left[\mathsf{KL}\big(P^{w,\theta}_h(\cdot \mymid, s,a) \parallel P^{ \widetilde{w}, \widetilde{\theta}}_h(\cdot \mymid, s,a) \big)\right] \notag \\
& \leq \frac{1}{SAH} \sum_{h=1}^{H} \sum_{s\in \{x_w, x_{\widetilde{w}}\}, a\in\{0,1\}}  \left[\mathsf{KL}\big(P^{w,\theta}_h(\cdot \mymid, s,a) \parallel P^{ \widetilde{w}, \widetilde{\theta}}_h(\cdot \mymid, s,a) \big)\right], \label{eq:KL-summary}
\end{align}
where (i) follows from the fact $P^{w,\theta}_h(\cdot \mymid, s,a)$ and $P^{ \widetilde{w}, \widetilde{\theta}}_h(\cdot \mymid, s,a)$ only differs from each other on state $x_w, x_{\widetilde{w}}$ (see the definitions in \eqref{eq:Ph-construction-lower-bound-finite-theta}), and the last inequality holds by noticing $\mu_h^{w,\theta}(s,a) \leq \sum_{a\in\{0,1\}} \mu_h^{w,\theta}(s,a) = \frac{1}{SAH}$.

Consequently, now we turn to focus on terms in \eqref{eq:KL-summary} in different cases of the uncertainty level $\ror$.

\begin{itemize}
\item When $0< \ror \leq \frac{c_2}{2H}$. When $w=\widetilde{w}$, it is clear that
\begin{align}
 \sum_{s\in \{x_w, x_{\widetilde{w}}\}, a\in\{0,1\}}  \mathsf{KL}\big(P^{w,\theta}_h(\cdot \mymid, s,a) \parallel P^{ \widetilde{w}, \widetilde{\theta}}_h(\cdot \mymid, s,a) \big) = 0
\end{align}
as long as $\theta_h = \widetilde{\theta}_h$. Then if $\theta_h \neq \widetilde{\theta}_h$, without loss of generality, we suppose $\theta_h = 0$ and $\widetilde{\theta}_h = 1$, which indicates
\begin{align}
	P^{w,\theta}_h(0 \mymid x_w, 0) = 1-p \quad \text{and} \quad P^{w,\widetilde{\theta} }_h(0 \mymid x_w, 0) = 1-q.
\end{align}

Applying Lemma~\ref{lem:KL-key-result} gives
\begin{align}
	\mathsf{KL}\big(P^{w,\theta}_h(0 \mymid x_w, 0) \parallel P^{w,\widetilde{\theta} }_h(0 \mymid x_w, 0)  \big) & \leq \frac{(p-q)^2}{q(1-q)} \overset{\mathrm{(i)}}{=} \frac{\Delta^2}{q(1-q)} \notag\\
    & \overset{\mathrm{(ii)}}{=} \frac{ (c_5)^2  \varepsilon^2 }{H^4 q(1-q)} \leq  \frac{ 4(c_5)^2  \varepsilon^2 }{c_2 H^3},
    \label{eq:chi2-finite-KL-bounded1}
\end{align}
where (i) and (ii) follows from the definitions in \eqref{eq:p-q-defn-infinite} or \eqref{eq:Delta-chosen},  and the last inequality arises from $q  = p -\Delta \geq \frac{c_2}{2H} $ (see \eqref{eq:p-q-defn-infinite2}) and $1-q \geq 1-p \geq 1-\frac{c_2}{H} \geq \frac{1}{2}$.

The same bound can be established for $\mathsf{KL}\big(P^{w,\theta}_h(0 \mymid x_w, 1) \parallel P^{w,\widetilde{\theta} }_h(0 \mymid x_w, 1)  \big)$. In addition, it is easily verified that when $w\neq \widetilde{w}$ and $\theta_h \neq \theta^{\mathsf{base}}_h$ (resp.~$\widetilde{\theta}_h \neq \theta^{\mathsf{base}}_h$), the same bound can be developed for $\mathsf{KL}\big(P^{w, \theta}_h(0 \mymid x_w, 0) \parallel P^{\widetilde{w},\widetilde{\theta} }_h(0 \mymid x_w, 0)  \big)$ and $\mathsf{KL}\big(P^{w, \theta}_h(0 \mymid x_w, 1) \parallel P^{\widetilde{w},\widetilde{\theta} }_h(0 \mymid x_w, 1)  \big)$ (resp.~$\mathsf{KL}\big(P^{w, \theta}_h(0 \mymid x_{\widetilde{w}}, 0) \parallel P^{\widetilde{w},\widetilde{\theta} }_h(0 \mymid x_{\widetilde{w}}, 0)  \big)$ and $\mathsf{KL}\big(P^{w, \theta}_h(0 \mymid x_{\widetilde{w}}, 1) \parallel P^{\widetilde{w},\widetilde{\theta} }_h(0 \mymid x_{\widetilde{w}}, 1)  \big)$).

Summing up the results with the fact in \eqref{eq:chi2-finite-KL-bounded1}, we arrive at
\begin{align}
\sum_{s\in \{x_w, x_{\widetilde{w}}\}, a\in\{0,1\}}  \mathsf{KL}\big(P^{w,\theta}_h(\cdot \mymid, s,a) \parallel P^{ \widetilde{w}, \widetilde{\theta}}_h(\cdot \mymid, s,a) \big) \leq \frac{ 16(c_5)^2  \varepsilon^2 }{c_2 H^3}. \label{eq:case1-KL-bound}
\end{align}

\item When $\frac{c_2}{2H} < \ror \leq 1-c_0$. Following the same pipeline, it then boils down to control the main term as below:
\begin{align}
\mathsf{KL}\big(P^{w,\theta}_h(0 \mymid x_w, 0) \parallel P^{w,\widetilde{\theta} }_h(0 \mymid x_w, 0)  \big) & \leq \frac{(p-q)^2}{q(1-q)} \overset{\mathrm{(i)}}{=}\frac{\Delta^2}{q(1-q)} \notag\\
    & \overset{\mathrm{(ii)}}{=} \frac{ (c_5)^2 \ror^2 \varepsilon^2 }{H^2 q(1-q)} \leq  \frac{ 2(c_5)^2 \ror \varepsilon^2 }{c_0 H^2},
\end{align}
where (i) and (ii) follows from the definitions in \eqref{eq:p-q-defn-infinite} or \eqref{eq:Delta-chosen}. Here, the last inequality arises from 
\begin{align}
1-q &\geq 1-p   = 1- (1+\frac{c_1}{H})\ror \overset{\mathrm{(i)}}{\geq}  c_0 - \frac{c_1}{H} \overset{\mathrm{(ii)}}{\geq}  \frac{c_0}{2} \notag \\
p &\geq q = p -\Delta\overset{\mathrm{(iii)}}{ \geq}  \ror,
\end{align}
where (ii) holds by the definition of $c_1 = \frac{c_0}{2}$, and (iii) follows from \eqref{eq:infinite-p-q-bound}.
Consequently, we arrive at
\begin{align}
\sum_{s\in \{x_w, x_{\widetilde{w}}\}, a\in\{0,1\}}  \mathsf{KL}\big(P^{w,\theta}_h(\cdot \mymid, s,a) \parallel P^{ \widetilde{w}, \widetilde{\theta}}_h(\cdot \mymid, s,a) \big) \leq  \frac{ 8(c_5)^2 \ror \varepsilon^2 }{c_0 H^2}. \label{eq:case2-KL-bound}
\end{align}

\end{itemize}

Summing up \eqref{eq:case1-KL-bound} and \eqref{eq:case2-KL-bound}, we achieve for any $(w,\theta), (\widetilde{w},\widetilde{\theta}) \in \cW\times \Theta^\star$ with  $(w,\theta)\neq (\widetilde{w},\widetilde{\theta}) $ and any time step $h\in[H]$
\begin{align}
\sum_{s\in \{x_w, x_{\widetilde{w}}\}, a\in\{0,1\}}  \mathsf{KL}\big(P^{w,\theta}_h(\cdot \mymid, s,a) \parallel P^{ \widetilde{w}, \widetilde{\theta}}_h(\cdot \mymid, s,a) \big) \leq  \frac{ 16(c_5)^2  \varepsilon^2 }{c_0 c_2 H^2} \max\{ \ror, 1/H \}. \label{eq:KL-bound-sum-final}
\end{align}

Plugging \eqref{eq:KL-bound-sum-final} back to \eqref{eq:KL-summary} and then \eqref{eq:finite-remainder-KL} leads to the following fact:
\begin{align}
p_{\mathrm{e}} &  \geq \frac{1}{2} - \frac{8N}{H}\max_{(w,\theta), (\widetilde{w},\widetilde{\theta}) \in \cW\times \Theta^\star, (w,\theta)\neq (\widetilde{w},\widetilde{\theta}) } \mathsf{KL} \big(\mu^{w,\theta} \mymid \mu^{w,\theta}\big) \notag \\
& \geq \frac{1}{2} - \frac{8N}{H}\max_{(w,\theta), (\widetilde{w},\widetilde{\theta}) \in \cW\times \Theta^\star, (w,\theta)\neq (\widetilde{w},\widetilde{\theta}) }  \frac{1}{SAH} \sum_{h=1}^{H} \sum_{s\in \{x_w, x_{\widetilde{w}}\}, a\in\{0,1\}}  \left[\mathsf{KL}\big(P^{w,\theta}_h(\cdot \mymid, s,a) \parallel P^{ \widetilde{w}, \widetilde{\theta}}_h(\cdot \mymid, s,a) \big)\right] \notag \\
& \geq \frac{1}{2} - \frac{ 128N(c_5)^2  \varepsilon^2 }{c_0 c_2 SAH^3} \max\{ \ror, 1/H \} \geq \frac{1}{4} \label{eq:error-lower-bound}
    \end{align}
    as long as the sample size $N$ of the dataset is selected as
    \begin{align}
    N \leq \frac{c_0 c_2 SAH^3 \min\{1/\ror, H\} }{512(c_5)^2\varepsilon^2}.  \label{eq:error-lower-bound-condition}
    \end{align}

\paragraph{Step 3: summing up the results together.}
We suppose that there exists an estimator $\widehat{\pi}$ such that
\begin{align}
\max_{(w,\theta)\in\cW\times \Theta^\star} \mathbb{P}_{w,\theta} \left[\max_{s\in \cX\cup \cY} \left\{ V_{1}^{\star,\ror, w,\theta}(s) - V_{1}^{\widehat{\pi},\ror, w,\theta}(s) \right\} \geq \varepsilon \right] <\frac{1}{4}, \label{eq:final-goal-over-class}
\end{align}%
then according to \eqref{eq:lower-bound-goal}, we necessarily have
\begin{align}
 \forall w\in\cW: \quad \max_{\theta \in  \Theta^\star} \mathbb{P}_{w,\theta} \left[\left\{ V_{1}^{\star,\ror, w,\theta}(x_w) - V_{1}^{\widehat{\pi},\ror, w,\theta}(x_w) \right\} \geq \varepsilon \right] <\frac{1}{4}. \label{eq: all-s-meet-epsilon}
\end{align}

To meet \eqref{eq: all-s-meet-epsilon} for any $w\in\cW$, we require 
\begin{align}
     \forall \theta\in\Theta^\star:  \mathbb{P}_{w,\theta} \left\{ V_{1}^{\star,\ror, w,\theta}(x_w) - V_{1}^{\widehat{\pi},\ror, w,\theta}(x_w) < \varepsilon\right\} \geq \frac{3}{4},
    \label{eq:assumption-theta-small-LB-finite}
\end{align}
which in view of \eqref{eq:lower-bound-assumption} indicates that we necessarily have
\begin{align}
     \forall \theta\in\Theta^\star: \quad \mathbb{P}_{w,\theta} \left\{  \sum_{h=1}^H \big\|\widehat{\pi}_h(\cdot\mymid x_w) - \pi_h^{\star, w,\theta}(\cdot\mymid x_w) \big\|_1 < \frac{H}{8} \right\} \geq \frac{3}{4}.
\end{align}

As a consequence,  \eqref{eq:consequence-of-sum-H} indicates
\begin{align}
  \forall \theta\in\Theta^\star:  \mathbb{P}_{w,\theta} \left[(\widehat{w},\widehat{\theta}) = (w,\theta) \right] \geq \frac{3}{4} . \label{eq:consequence-of-sum-H-to-final-step}
\end{align}

Applying the fact in \eqref{eq:consequence-of-sum-H-to-final-step} to all $w\in\cW$ leads to one necessarily has
\begin{align}
 \forall (w,\theta)\in \cW \times \Theta^\star: \quad  \mathbb{P}_{w,\theta} \left[(\widehat{w},\widehat{\theta}) = (w,\theta) \right] \geq \frac{3}{4}
\end{align}
to achieve \eqref{eq:final-goal-over-class}.

However, this would contract with \eqref{eq:error-lower-bound} as long as the sample size condition in  \eqref{eq:error-lower-bound-condition} is satisfied. Thus, if the sample size obeys the condition \eqref{eq:error-lower-bound-condition}, we can't achieve an estimate $\widehat{\pi}$ that satisfies \eqref{eq:final-goal-over-class}, which complete the proof.

\subsection{Proof of the auxiliary facts}
\subsubsection{Proof of Lemma~\ref{lem:finite-lb-value-theta} }\label{proof:lem:finite-lb-value-theta}

As all RMDPs within $ \cM(\cW, \Theta)$ are constructed analogously over each $w\in\cW$ and $\theta \in \Theta^\star$, in this section, we shall focus on one specific RMDP $\cM_w^\theta \in \cM(\cW, \Theta)$, whose facts can be carried on for all other RMDPs in $\cM(\cW, \Theta)$ directly.

\paragraph{Step 1: ordering the robust value function over different states.} Before proceeding,  we introduce several facts and notations that are useful throughout this section. First, we observe that for any $\cM_w^\theta$ and any policy $\pi$: at the final step $H+1$, 
\begin{align}
\forall s\in \cX \cup \cY: \quad V_{H+1}^{\pi,\ror, w,\theta}(s) = 0. \label{eq:H+1-step}
\end{align}
Then for the step $H$, we can easily verified that
\begin{align}
	\forall s\in\cY: \quad V_{H}^{\pi,\ror, w,\theta}(s) &= \mathbb{E}_{a\sim\pi_H(\cdot \mymid s)}\left[r_H( s ,a) + \inf_{ \cP \in \unb^{\sigma}(P^{w,\theta}_{H, s, a})}  \cP  V_{H+1}^{\pi,\ror, w,\theta}\right] =1 \notag \\
	\forall s\in\cX: \quad V_{H}^{\pi,\ror, w,\theta}(s) &= \mathbb{E}_{a\sim\pi_H(\cdot \mymid s)}\left[r_H( s ,a) + \inf_{ \cP \in \unb^{\sigma}(P^{w,\theta}_{H, s, a})}  \cP  V_{H+1}^{\pi,\ror, w,\theta}\right] = 0,
\end{align}
which holds by \eqref{eq:H+1-step} and the definition of the reward function (see \eqref{eq:rh-construction-lower-bound-infinite}). The above fact directly indicates that 
\begin{align}
    \forall (s,s') \in \cX \setminus\{x_w\} \times \cY :&\quad  \min_{\widetilde{s}\in\cS} V_{H}^{\pi,\ror, w,\theta}(\widetilde{s}) = V_{H}^{\pi,\ror, w,\theta}(x_w) \leq V_{H}^{\pi,\ror, w,\theta}(s) < V_{H}^{\pi,\ror, w,\theta}(s'), \notag \\
    \forall (s,s')\in \cY \times \cY:& \quad  V_{H }^{\pi,\ror, w,\theta}(s) =V_{H }^{\pi,\ror, w,\theta}(s').  \label{eq:ordering-states-base-case}
\end{align}

Then we introduce a claim which we will proof by induction in a moment as below:
\begin{align}
\forall (h,s,s')\in [H] \times \cX \setminus\{x_w\} \times \cY :&\quad  V_{h}^{\pi,\ror, w,\theta}(x_w) \leq V_{h}^{\pi,\ror, w,\theta}(s) < V_{h }^{\pi,\ror, w,\theta}(s')\notag \\
    \forall (s,s')\in \cY \times \cY:& \quad  V_{h }^{\pi,\ror, w,\theta}(s) =V_{h }^{\pi,\ror, w,\theta}(s'). \label{eq:ordering-states}
\end{align}
Note that the base case when the time step is $H+1$ is verified in \eqref{eq:ordering-states-base-case}. Assuming that the following fact at time step $h+1$ holds
\begin{align}
 \forall (s,s') \in \cX \setminus\{x_w\} \times \cY :&\quad  \min_{\widetilde{s}\in\cS} V_{h+1}^{\pi,\ror, w,\theta}(\widetilde{s}) =V_{h+1}^{\pi,\ror, w,\theta}(x_w) \leq V_{h+1}^{\pi,\ror, w,\theta}(s) < V_{h+1}^{\pi,\ror, w,\theta}(s'),
 \notag \\
    \forall (s,s')\in \cY \times \cY:& \quad  V_{h+1}^{\pi,\ror, w,\theta}(s) =V_{h+1 }^{\pi,\ror, w,\theta}(s'), \label{eq:ordering-states-assumption}
\end{align}
the rest of the proof focuses on proving the same property for time step $h$. For RMDP $\cM_w^\theta \in \cM(\cW, \Theta)$ and any policy $\pi$, we characterize the robust value function of different states separately:
\begin{itemize}
	\item {\em For state $s\in\cY$.} We observe that for any $s\in\cY$,
\begin{align}
V_{h}^{\pi,\ror, w,\theta}(s) &= \mathbb{E}_{a\sim\pi_h(\cdot \mymid s)}\bigg[r_h( s ,a) + \inf_{ \cP \in \unb^{\sigma}(P^{w,\theta}_{h, s, a})}  \cP  V_{h+1}^{\pi,\ror, w,\theta} \bigg] \notag \\
&\overset{\mathrm{(i)}}{=} 1 +  \mathbb{E}_{a\sim \pi_h(\cdot  \mymid s)} \left[ \underline{P}_h^{w,\theta}(s \mymid s, a) V_{h+1}^{\pi,\ror, w,\theta}(s) \right] + \ror  V_{h+1}^{\pi,\ror, w,\theta}(x_w)  \notag \\
&=  1 + (1-\ror) V_{h+1}^{\pi,\ror, w,\theta}(s) + \ror  V_{h+1}^{\pi,\ror, w,\theta}(x_w) , \label{eq:tv-s-value1}
\end{align}
where (i) holds by $r_h(s,a)=1$ for all $s\in \cY$  (see \eqref{eq:rh-construction-lower-bound-infinite}), the fact that $\min_{\widetilde{s}\in\cS} V_{h+1}^{\pi,\ror, w,\theta}(\widetilde{s}) =V_{h+1}^{\pi,\ror, w,\theta}(x_w)$ induced by the induction assumption (cf.~\eqref{eq:ordering-states-assumption}) and the definition of $\underline{P}_h^{w,\theta}(s \mymid s, a)$ in \eqref{eq:infinite-lw-def-p-q}, and the last equality follows from $P^{w,\theta}(s\mymid s,a)=1$ for all $(s,a)\in \cY \times \cA_{\cM}$.
Resorting to the induction assumption in \eqref{eq:ordering-states-assumption}, we have
\begin{align}
 \forall (s,s')\in \cY \times \cY:& \quad  V_{h }^{\pi,\ror, w,\theta}(s) =V_{h }^{\pi,\ror, w,\theta}(s'). \label{eq:induction-result-1}
\end{align}
	\item {\em For state $x_w$.} First, the robust value function at state $x_w$ obeys
\begin{align}
& \quad  V_{h}^{\pi,\ror, w,\theta}(x_w) \notag \\
&= \mathbb{E}_{a\sim\pi_h(\cdot \mymid x_w)}\left[r_h( x_w ,a) + \inf_{ \cP \in \unb^{\sigma}(P^{w,\theta}_{h, x_w, a})}  \cP  V_{h+1}^{\pi,\ror, w,\theta}\right] \notag \\
  & \overset{\mathrm{(i)}}{=}  0 +  \pi_h(\theta_h \mymid x_w )\inf_{ \cP \in \unb^{\sigma}(P^{w,\theta}_{h, x_w, \theta_h})}  \cP  V_{h+1}^{\pi,\ror, w,\theta}   +   \pi_h(1-\theta_h \mymid x_w )\inf_{ \cP \in \unb^{\sigma}(P^{w,\theta}_{h, x_w, 1-\theta_h})}  \cP  V_{h+1}^{\pi,\ror, w,\theta}  \notag \\
  & \overset{\mathrm{(ii)}}{=} \pi_h(\theta_h \mymid x_w)\Big[ \underline{p}  V_{h+1}^{\pi,\ror, w,\theta}(y_w) + \left(1- \underline{p}\right)  V_{h+1}^{\pi,\ror, w,\theta}(x_w)\Big] \notag \\
  &\quad + \pi_h(1-\theta_h \mymid x_w)\Big[ \underline{q}  V_{h+1}^{\pi,\ror, w,\theta}(y_w) + \left(1-\underline{q} \right)  V_{h+1}^{\pi,\ror, w,\theta}(x_w)\Big] \notag\\
    & \overset{\mathrm{(iii)}}{=} x_h^{\pi, w,\theta} V_{h+1}^{\pi,\ror, w,\theta}(y_w) + (1-x_h^{\pi, w,\theta}) V_{h+1}^{\pi,\ror, w,\theta}(x_w) \label{eq:tv-s0-value-def-xw-1} \\
    & \leq (1-\ror) V_{h+1}^{\pi,\ror, w,\theta}(y_w) + \ror  V_{h+1}^{\pi,\ror, w,\theta}(x_w). \label{eq:tv-s0-value-def}
\end{align}
where (i) uses the definition of the robust value function and the reward function in \eqref{eq:rh-construction-lower-bound-infinite}, (ii) uses the induction assumption in \eqref{eq:ordering-states-assumption} so that the minimum is attained by picking the choice specified in \eqref{eq:infinite-lw-p-q-perturb-inf} to absorb probability mass  to state $x_w$, and (iii) holds by plugging in the definition \eqref{eq:finite-x-h-theta} of $x_h^{\pi,w,\theta}$ in (iii). Finally, the last inequality follows from the fact that function $f(x) \defn x V_{h+1}^{\pi,\ror, w,\theta}(y_w) + (1-x) V_{h+1}^{\pi,\ror, w,\theta}(x_w)$ is monotonically increasing with $x$ since $V_{h+1}^{\pi,\ror, w,\theta}(y_w) > V_{h+1}^{\pi,\ror, w,\theta}(x_w)$ (see the induction assumption \eqref{eq:ordering-states-assumption}), and the fact $x_h^{\pi,w,\theta} \leq 1 - \ror$.

\item {\em For state $s\in \cX \setminus \{x_w\}$.}
Then we consider other states $s\in \cX \setminus \{x_w\}$. Before proceeding, analogous to \eqref{eq:finite-x-h-theta}, we define 
\begin{equation}
 x_{\mathsf{base}}^s = (\underline{p}+\Delta) \pi_h(\theta^{\mathsf{base}}_h \mymid s) + (\underline{q}+\Delta) \pi_h(1-\theta^{\mathsf{base}}_h\mymid s).
\end{equation}

Recall that the nominal transition kernel at any state $s\in \cX \setminus \{x_w\}$ are the same $\{P_{h,s,a}^\star\}_{h\in[H]}$ for all $a\in\cA_{\cW}$ associated with the basic $\theta^{\mathsf{base}} \in \Theta$ (see the definitions of the transition kernels in \eqref{eq:Ph-construction-lower-bound-finite-theta} and \eqref{eq:Ph-construction-lower-bound-finite-theta2}).
Consequently, for any $s\in \cX \setminus \{x_w\}$, following the same argument pipeline of \eqref{eq:tv-s0-value-def}, we arrive at
\begin{align}
 V_{h}^{\pi,\ror, w,\theta}(s) &= \pi_h(\theta^{\mathsf{base}}_h \mymid s)\Big[ (\underline{p}+\Delta)  V_{h+1}^{\pi,\ror, w,\theta}(s^{x\rightarrow y}) + \left(1- p -\Delta\right)  V_{h+1}^{\pi,\ror, w,\theta}(s) + \ror V_{h+1}^{\pi,\ror, w,\theta}(x_w) \Big] \notag \\
  &\quad + \pi_h(1-\theta^{\mathsf{base}}_h \mymid s)\Big[ (\underline{q} +\Delta)  V_{h+1}^{\pi,\ror, w,\theta}(s^{x\rightarrow y}) + \left(1-p \right)  V_{h+1}^{\pi,\ror, w,\theta}(s)  + \ror V_{h+1}^{\pi,\ror, w,\theta}(x_w) \Big] \notag \\
  & = x_{\mathsf{base}}^s V_{h+1}^{\pi,\ror, w,\theta}(s^{x\rightarrow y}) + (1-x_{\mathsf{base}}^s -\ror) V_{h+1}^{\pi,\ror, w,\theta}(s) + \ror V_{h+1}^{\pi,\ror, w,\theta}(x_w)\label{eq:tv-s0-value-def-cX-1}\\
  & \overset{\mathrm{(i)}}{=} x_{\mathsf{base}}^s V_{h+1}^{\pi,\ror, w,\theta}(y_w) + (1-x_{\mathsf{base}}^s -\ror) V_{h+1}^{\pi,\ror, w,\theta}(s) + \ror V_{h+1}^{\pi,\ror, w,\theta}(x_w)\label{eq:tv-s0-value-def-cX-1}\\
 & < (1-\ror) V_{h+1}^{\pi,\ror, w,\theta}(s^{x\rightarrow y}) + \ror  V_{h+1}^{\pi,\ror, w,\theta}(s), \label{eq:tv-s0-value-def-cX-2}
\end{align}
where (i) holds by $V_{h+1}^{\pi,\ror, w,\theta}(s) = V_{h+1}^{\pi,\ror, w,\theta}(s')$ for any two states $s,s'\in\cY$ (see \eqref{eq:tv-s0-value-def}), and the last inequality holds by $V_{h+1}^{\pi,\ror, w,\theta}(s) < V_{h+1}^{\pi,\ror, w,\theta}(s^{x\rightarrow y})$ induced by the induction assumption in \eqref{eq:ordering-states-assumption}.

In addition, to compare the robust value function $V_{h}^{\pi,\ror, w,\theta}(x_w)$ to that of other states $s\in\cX\setminus \{x_w\}$, we recall the definitions in \eqref{eq:finite-x-h-theta} and then introduce the following fact
\begin{align}
x_h^{\pi, w, \theta} &= \underline{p} \pi_h(\theta_h\mymid x_w) + \underline{q} \pi_h(1-\theta_h\mymid x_w) \notag \\
& \leq \underline{p} \leq (\underline{p}+\Delta) \pi_h(\theta^{\mathsf{base}}_h \mymid s) + \underline{p}  \pi_h(1-\theta^{\mathsf{base}}_h\mymid s) \notag \\
& =  (\underline{p}+\Delta) \pi_h(\theta^{\mathsf{base}}_h \mymid s) + (\underline{q} + \Delta ) \pi_h(1-\theta^{\mathsf{base}}_h\mymid s)  = x_{\mathsf{base}}^s,
\end{align}
which comes from the fact $p\geq q$ and the facts in \eqref{eq:infinite-lw-p-q-perturb-inf} and \eqref{eq:infinite-p-q-bound}.

With this in mind, continuing from \eqref{eq:tv-s0-value-def-xw-1}, we arrive at that for any $s\in \cX$:
\begin{align}
V_{h}^{\pi,\ror, w,\theta}(x_w)  &= x_h^{\pi, w,\theta} V_{h+1}^{\pi,\ror, w,\theta}(y_w) + (1-x_h^{\pi, w,\theta}) V_{h+1}^{\pi,\ror, w,\theta}(x_w) \notag \\
& \leq x_{\mathsf{base}}^s V_{h+1}^{\pi,\ror, w,\theta}(y_w) + (1-x_{\mathsf{base}}^s) V_{h+1}^{\pi,\ror, w,\theta}(x_w)  \notag \\
& \leq x_{\mathsf{base}}^s V_{h+1}^{\pi,\ror, w,\theta}(y_w) + (1-x_{\mathsf{base}}^s-\ror) V_{h+1}^{\pi,\ror, w,\theta}(s) + \ror V_{h+1}^{\pi,\ror, w,\theta}(x_w) \notag \\
&=  V_{h}^{\pi,\ror, w,\theta}(s) \label{eq:tv-s0-value-def-cX-3}
\end{align}
where the last equality holds by \eqref{eq:tv-s0-value-def-cX-1}.

\end{itemize}

Summing up \eqref{eq:tv-s0-value-def-cX-3}, then \eqref{eq:tv-s-value1}, and \eqref{eq:tv-s0-value-def-cX-2}, we verify the induction property at time step $h$ as below 
\begin{align}
\forall (s,s')\in  \cX \setminus\{x_w\} \times \cY :  V_{h}^{\pi,\ror, w,\theta}(x_w) \leq V_{h}^{\pi,\ror, w,\theta}(s) < V_{h }^{\pi,\ror, w,\theta}(s').
\end{align}

Combined above results with \eqref{eq:induction-result-1}, we confirm the claim in \eqref{eq:ordering-states}.

\paragraph{Step 2: deriving the optimal policy and optimal robust value function.}
We shall characterize the optimal policy and corresponding optimal robust value function for different states separately:
\begin{itemize}
	\item {\em For states in $\cX$.} Recall \eqref{eq:tv-s0-value-def-xw-1}
	\begin{align}
 V_{h}^{\pi,\ror, w,\theta}(x_w) &= x_h^{\pi, w,\theta} V_{h+1}^{\pi,\ror, w,\theta}(y_w) + (1-x_h^{\pi, w,\theta}) V_{h+1}^{\pi,\ror, w,\theta}(x_w)
 \label{eq:x_w-value-function}
	\end{align}

and the fact  $V_{h+1}^{\pi,\ror, w,\theta}(y_w) > V_{h+1}^{\pi,\ror, w,\theta}(x_w)$ in \eqref{eq:ordering-states}. We observe that \eqref{eq:x_w-value-function} is monotonicity increasing with respect to $x_h^{\pi, w,\theta}$, and $x_h^{\pi, w,\theta}$ is also increasing in $\pi_h(\theta_h \mymid x_w)$ (refer to the fact $\underline{p}\geq \underline{q}$ since $p\geq q$; see \eqref{eq:p-q-defn-infinite} and \eqref{eq:infinite-lw-p-q-perturb-inf}). Consequently, the optimal policy and optimal robust value function in state $x_w$ thus obey
\begin{align}
  \forall h\in[H]: \quad   &\pi_h^{\star,w,\theta}(\theta_h \mymid x_w) = 1 \notag \\
  &   V_{h}^{\star,\ror, w,\theta}(x_w) = \underline{p} V_{h+1}^{\star,\ror, w,\theta}(y_w)  + \left[ 1 - \underline{p} \right]  V_{h+1}^{\star,\ror, w,\theta}(x_w) \label{eq:infinite-lb-optimal-policy}.
\end{align}

Similarly, for any state $s\in \cX\setminus \{x_w\}$, recalling \eqref{eq:tv-s0-value-def-cX-1} yields
\begin{align}
V_{h}^{\pi,\ror, w,\theta}(s) &= x_{\mathsf{base}}^s V_{h+1}^{\pi,\ror, w,\theta}(y_w) + (1-x_{\mathsf{base}}^s -\ror) V_{h+1}^{\pi,\ror, w,\theta}(s) + \ror V_{h+1}^{\pi,\ror, w,\theta}(x_w),
\end{align}
which indicates $V_{h}^{\pi,\ror, w,\theta}(s)$ achieves the maximum when $x_{\mathsf{base}}^s = (\underline{p}+\Delta) \pi_h(\theta^{\mathsf{base}}_h \mymid s) + (\underline{q}+\Delta) \pi_h(1-\theta^{\mathsf{base}}_h\mymid s)$ attain the maximum. Therefore, the optimal policy in state $s$ satisfies
\begin{align}
	\pi_h^{\star,w,\theta}(\theta^{\mathsf{base}}_h \mymid s) = 1.
\end{align}

	\item {\em For states $s\in  \cY$.} Recall the transitions in \eqref{eq:Ph-construction-lower-bound-finite-theta} and \eqref{eq:Ph-construction-lower-bound-finite-theta2}. Considering that the action does not influence the state transition for all states $s\in \cY$, without loss of generality, we choose the robust optimal policy obeying
\begin{align}\label{eq:infinite-lower-optimal-pi}
    \forall s \in\cY: \quad \pi_h^{\star,w,\theta}( \theta_h \mymid s) = 1.
\end{align}
\end{itemize}

\subsubsection{Proof of claim \eqref{eq:lower-bound-assumption}} \label{proof:eq:tv-Value-0-recursive}

Recalling \eqref{eq:finite-lemma-value-0-pi-theta} and \eqref{eq:finite-lemma-value-0-pi-theta2}, we first consider a more general form
\begin{align}
&V_{h}^{\star,\ror, w,\theta}(x_w) - V_{h}^{\pi,\ror, w,\theta}(x_w) \notag \\
 & = 
 \underline{p} V_{h+1}^{\star,\ror, w,\theta}(y_w) + (1-\underline{p}) V_{h+1}^{\star,\ror, w,\theta}(x_w) - \left( x_h^{\pi, w,\theta}V_{h+1}^{\pi,\ror, w,\theta}(y_w) + \left[ 1 - x_h^{\pi, w,\theta} \right]  V_{h+1}^{\pi,\ror, w,\theta}(x_w) \right) \notag \\
 & = \left( \underline{p} -  x_h^{\pi, w,\theta} \right) V_{h+1}^{\star,\ror, w,\theta}(y_w) + x_h^{\pi, w,\theta} \left( V_{h+1}^{\star,\ror, w,\theta}(y_w) - V_{h+1}^{\pi,\ror, w,\theta}(y_w)  \right)\notag \\
 & \quad +  (1-\underline{p})\left( V_{h+1}^{\star,\ror, w,\theta}(x_w) - V_{h+1}^{\pi,\ror, w,\theta}(x_w)  \right) - \left( \underline{p} -  x_h^{\pi, w,\theta} \right)V_{h+1}^{\pi,\ror, w,\theta}(x_w) \notag \\
 & = x_h^{\pi, w,\theta} \left( V_{h+1}^{\star,\ror, w,\theta}(y_w) - V_{h+1}^{\pi,\ror, w,\theta}(y_w)  \right) + (1-\underline{p})\left( V_{h+1}^{\star,\ror, w,\theta}(x_w) - V_{h+1}^{\pi,\ror, w,\theta}(x_w)  \right) \notag \\
 &\quad  + \left( \underline{p} -  x_h^{\pi, w,\theta} \right) \left(V_{h+1}^{\star,\ror, w,\theta}(y_w)  - V_{h+1}^{\star,\ror, w,\theta}(x_w)\right) \notag \\
 & \geq (1-\underline{p})\left( V_{h+1}^{\star,\ror, w,\theta}(x_w) - V_{h+1}^{\pi,\ror, w,\theta}(x_w)  \right) + \left( \underline{p} -  x_h^{\pi, w,\theta} \right) \left(V_{h+1}^{\star,\ror, w,\theta}(y_w)  - V_{h+1}^{\star,\ror, w,\theta}(x_w)\right) \notag \\
 & \geq (1-\underline{p})\left( V_{h+1}^{\star,\ror, w,\theta}(x_w) - V_{h+1}^{\pi,\ror, w,\theta}(x_w)  \right) \notag \\
 & \quad +\frac{1}{2}(p-q)\big\|\pi_{h}^{\star,w,\theta}(\cdot\mymid x_w)-\pi_{h}(\cdot\mymid x_w)\big\|_{1} \left(V_{h+1}^{\star,\ror, w,\theta}(y_w)  - V_{h+1}^{\star,\ror, w,\theta}(x_w)\right) \label{eq:finite-lb-recursion-theta}
\end{align}
where the last inequality holds by applying \eqref{eq:infinite-lw-p-q-perturb-inf} and deriving as follows:
\begin{align}
 \underline{p} -  x_h^{\pi, w,\theta}&= \big(\underline{p} -  \underline{q}\big) \big(1-\pi_{h}(\theta_{h}\mymid x_w)\big) = ( p-q)\big(1-\pi_{h}(\theta_{h}\mymid x_w)\big) \notag \\
 &=\frac{1}{2}(p-q)\big(1-\pi_{h}(\theta_{h}\mymid x_w)+\pi_{h}(1-\theta_{h}\mymid x_w )\big)=\frac{1}{2}(p-q)\big\|\pi_{h}^{\star,w,\theta}(\cdot\mymid x_w)-\pi_{h}(\cdot\mymid x_w)\big\|_{1}.
\end{align}

To further control \eqref{eq:finite-lb-recursion-theta},  applying Lemma~\ref{lem:finite-lb-value-theta} yields
\begin{align}
	&V_{h}^{\star,\ror, w,\theta}(y_w)  - V_{h}^{\star,\ror, w,\theta}(x_w) \notag \\
	& = 1 + (1-\ror) V_{h+1}^{\star,\ror, w,\theta}(y_w) + \ror  V_{h+1}^{\star,\ror, w,\theta}(x_w)  - \left( \underline{p} V_{h+1}^{\star,\ror, w,\theta}(y_w) + (1-\underline{p}) V_{h+1}^{\star,\ror, w,\theta}(x_w) \right) \notag \\
	& = 1 + (1 - \underline{p} - \ror) \left( V_{h+1}^{\star,\ror, w,\theta}(y_w)- V_{h+1}^{\star,\ror, w,\theta}(x_w) \right) \notag \\
	& = 1 + (1 - p) \left( V_{h+1}^{\star,\ror, w,\theta}(y_w)- V_{h+1}^{\star,\ror, w,\theta}(x_w) \right) \notag \\
	& = \cdots = \sum_{j=0}^{H-h}(1-p)^j, \label{eq:gap-recursive}
	\end{align}
where the penultimate equality holds by \eqref{eq:infinite-lw-p-q-perturb-inf}.
Then, we consider two cases with respect to the uncertainty level $\ror$ to control \eqref{eq:gap-recursive}, respectively:
\begin{itemize}
	\item {\em When $0<\ror \leq \frac{c_2}{2H}$.} Recall $p =  \begin{cases} \frac{c_2}{H}, &\text{if } \ror\leq \frac{c_2}{2H} \\
    1 + \frac{c_1}{H} \sigma & \text{otherwise}
    \end{cases}$. In this case, applying \eqref{eq:gap-recursive}, we have
	\begin{align}
	&V_{h}^{\star,\ror, w,\theta}(y_w)  - V_{h}^{\star,\ror, w,\theta}(x_w) \notag \\
	& = \sum_{j=0}^{H-h}(1-p)^j \geq \sum_{j=0}^{H-h} \left(1-\frac{c_2}{H}\right)^j = \frac{ 1 - \left(1-\frac{c_2}{H}\right)^{H-h+1}}{c_2/H} \geq \frac{2c_2 (H-h+1)}{3} \label{eq:gap-yw-1}
	\end{align}

Here, the final inequality holds by observing
\begin{align}
\left(1-\frac{c_2}{H}\right)^{H-h+1} \leq \exp\left( -\frac{c_2(H-h+1)}{H} \right)  \leq 1 - \frac{2c_2(H-h+1)}{3H}
\end{align}
where the first inequality holds by noticing $c_2< 0.5$ and then $1-x \leq \exp(-x)$, and the last inequality holds by $\exp(-x) \leq 1 - \frac{2x}{3}$ for any $0\leq x\leq 1/2$.

Plugging above fact in \eqref{eq:gap-yw-1} back to \eqref{eq:finite-lb-recursion-theta}, we arrive at
\begin{align}
&V_{h}^{\star,\ror, w,\theta}(x_w) - V_{h}^{\pi,\ror, w,\theta}(x_w) \notag \\
 & \geq (1-\underline{p})\left( V_{h+1}^{\star,\ror, w,\theta}(x_w) - V_{h+1}^{\pi,\ror, w,\theta}(x_w)  \right) \notag \\
 & \quad +\frac{1}{2}(p-q)\big\|\pi_{h}^{\star,w,\theta}(\cdot\mymid x_w)-\pi_{h}(\cdot\mymid x_w)\big\|_{1}\frac{2c_2 (H-h+1)}{3}.  \label{eq:finite-lb-recursion-theta-recursion}
\end{align}

Then invoking the assumption 
 \begin{align}
    \sum_{h=1}^H \big\|\pi_h(\cdot\mymid x_w) - \pi_h^{\star, w,\theta}(\cdot\mymid x_w) \big\|_1 \ge \frac{H}{8}
\end{align}
in \eqref{eq:theta-assumption} and applying \eqref{eq:finite-lb-recursion-theta-recursion} recursively for $h=1,2,\cdots, H$ yields
\begin{align}
V_{1}^{\star,\ror, w,\theta}(x_w) - V_{1}^{\pi,\ror, w,\theta}(x_w) & \geq \frac{c_2 }{3} \sum_{h=1}^H (1-\underline{p})^{h-1} (p-q)(H-h+1) \big\|\pi_{h}^{\star,w,\theta}(\cdot\mymid x_w)-\pi_{h}(\cdot\mymid x_w)\big\|_{1} \notag \\
& \overset{\mathrm{(i)}}{\geq}  \frac{c_2 }{3} \sum_{h=1}^H (1- \frac{c_2}{H})^{h-1} (p-q)(H-h+1) \big\|\pi_{h}^{\star,w,\theta}(\cdot\mymid x_w)-\pi_{h}(\cdot\mymid x_w)\big\|_{1} \notag \\
& \overset{\mathrm{(ii)}}{\geq}  \frac{c_2 }{6} \sum_{h=1}^H  (p-q)(H-h+1) \big\|\pi_{h}^{\star,w,\theta}(\cdot\mymid x_w)-\pi_{h}(\cdot\mymid x_w)\big\|_{1} \notag \\
& \overset{\mathrm{(iii)}}{=}  \frac{c_2 \Delta }{6} \sum_{h=1}^H h \big\|\pi_{H-h+1}^{\star,w,\theta}(\cdot\mymid x_w)-\pi_{H-h+1}(\cdot\mymid x_w)\big\|_{1} \notag \\
& \overset{\mathrm{(iv)}}{\geq} \frac{c_2 \Delta }{6} \sum_{h=1}^{ \left \lfloor H/16\right \rfloor } 2h \geq  \frac{c_2 \Delta }{6} \left \lfloor H/16\right \rfloor \left( \left \lfloor H/16\right \rfloor+ 1 \right), \label{eq:gap-final-1}
\end{align}
where (i) follows from $1-\underline{p} \geq 1-p = 1 -\frac{c_2}{H}$, and (ii) holds by
\begin{align}
\forall h\in[H]: \quad (1- \frac{c_2}{H})^{h-1} \geq (1- \frac{c_2}{H})^{H} \geq \frac{1}{2}
\end{align}
as long as $c_2 \leq \frac{1}{2}$. Here, (iii) arises from the definition of $p,q$ in \eqref{eq:p-q-defn-infinite}; (iv) can be verified by the fact that for any series $0 \leq x_1, x_2,\cdots, x_H \leq x_{\max}$ that obeys $\sum_{h=1}^H x_h \geq y$, one has
\begin{align}
\sum_{h=1}^H x_h h \geq \sum_{h=1}^{\left \lfloor x_{\max} /y\right \rfloor} x_{\max} h,
\end{align}
and taking $x_h = \big\|\pi_{H-h+1}(\cdot\mymid x_w) - \pi_{H-h+1}^{\star, w,\theta}(\cdot\mymid x_w) \big\|_1 \leq 2 = x_{\max}$ and $y = \frac{H}{8}$.

Consequently, observed from \eqref{eq:gap-final-1}, we have
\begin{align}
V_{1}^{\star,\ror, w,\theta}(x_w) - V_{1}^{\pi,\ror, w,\theta}(x_w) &  \geq  \frac{c_2 \Delta }{6} \left \lfloor H/16\right \rfloor \left( \left \lfloor H/16\right \rfloor+ 1 \right) \geq c_3 \Delta H^2 > \varepsilon
\end{align}
holds for some small enough constant $c_3$ and letting $\Delta = \frac{ \varepsilon}{c_3H^2}$.

\item {\em When $\frac{c_2}{2H} < \ror \leq 1 - c_0$.}
Similarly,  recalling $p =  \begin{cases} \frac{c_2}{H}, &\text{if } \ror\leq \frac{c_2}{2H} \\
    \left(1 + \frac{c_1}{H}\right) \sigma & \text{otherwise}
    \end{cases}$ and invoking \eqref{eq:gap-recursive} gives
	\begin{align}
	&V_{h}^{\star,\ror, w,\theta}(y_w)  - V_{h}^{\star,\ror, w,\theta}(x_w) \notag \\
	&= \sum_{j=0}^{H-h}(1-p)^j = \sum_{j=0}^{H-h} \left(1- \left(1+\frac{c_1}{H} \right)\ror\right)^j \notag \\
	& \geq  \frac{ 1 - \left(1-(1+\frac{c_1}{H})\ror\right)^{H-h+1}}{(1+\frac{c_1}{H})\ror} \geq \frac{c_2 (H-h+1)}{3\ror H}, \label{eq:gap-yw-2}
	\end{align}
where the final inequality holds by observing
\begin{align}
\left(1- \left(1+\frac{c_1}{H} \right)\ror\right)^{H-h+1} &\leq \exp\left( -\left(1+\frac{c_1}{H}\right)\ror(H-h+1) \right) \notag \\
& \overset{\mathrm{(i)}}{\leq}  \exp\left( -\frac{c_2}{2H} \left(1+\frac{c_1}{H}\right) (H-h+1)  \right) \leq 1 - \left(1+\frac{c_1}{H}\right)\frac{c_2(H-h+1)}{3H}.
\end{align}
Here,  (i) holds by observing $\frac{c_2}{2H} < \ror$, and the last inequality holds by $\left(1+\frac{c_1}{H}\right) \leq 2$, $c_2 \leq 0.5$, and the fact $\exp(-x) \leq 1 - \frac{2x}{3}$ for any $0\leq x\leq 1/2$.

Plugging above fact in \eqref{eq:gap-yw-2} back to \eqref{eq:finite-lb-recursion-theta} gives
\begin{align}
&V_{h}^{\star,\ror, w,\theta}(x_w) - V_{h}^{\pi,\ror, w,\theta}(x_w) \notag \\
 & \geq (1-\underline{p})\left( V_{h+1}^{\star,\ror, w,\theta}(x_w) - V_{h+1}^{\pi,\ror, w,\theta}(x_w)  \right) \notag \\
 & \quad +\frac{1}{2}(p-q)\big\|\pi_{h}^{\star,w,\theta}(\cdot\mymid x_w)-\pi_{h}(\cdot\mymid x_w)\big\|_{1}\frac{c_2 (H-h+1)}{3\ror H}.  \label{eq:finite-lb-recursion-theta-recursion2}
\end{align}

Following the same routine to achieve \eqref{eq:gap-final-1}, applying \eqref{eq:finite-lb-recursion-theta-recursion2} recursively for $h= 1,2, \cdots, H$ gives

\begin{align}
V_{1}^{\star,\ror, w,\theta}(x_w) - V_{1}^{\pi,\ror, w,\theta}(x_w) & \geq  \sum_{h=1}^H (1-\underline{p})^{h-1} (p-q) \frac{c_2 (H-h+1)}{6\ror H} \big\|\pi_{h}^{\star,w,\theta}(\cdot\mymid x_w)-\pi_{h}(\cdot\mymid x_w)\big\|_{1} \notag \\
& \overset{\mathrm{(i)}}{=}  \frac{c_2 (p-q)}{6\ror H} \sum_{h=1}^H (1- \frac{c_1}{H})^{h-1} (H-h+1) \big\|\pi_{h}^{\star,w,\theta}(\cdot\mymid x_w)-\pi_{h}(\cdot\mymid x_w)\big\|_{1} \notag \\
& \overset{\mathrm{(ii)}}{\geq}  \frac{c_2 \Delta}{12\ror H}  \left \lfloor H/16\right \rfloor \left( \left \lfloor H/16\right \rfloor+ 1 \right)  \label{eq:gap-final-2}
\end{align} 
where (i) follows from $1-\underline{p} = 1-(p-\ror) = 1 -\frac{c_1}{H}\ror$, and (ii) holds by letting $c_1 \leq \frac{1}{2}$ and following the same routine of \eqref{eq:gap-final-1}.

Consequently, \eqref{eq:gap-final-2} yields
\begin{align}
V_{1}^{\star,\ror, w,\theta}(x_w) - V_{1}^{\pi,\ror, w,\theta}(x_w) &  \geq  \frac{c_2 \Delta}{12\ror H}  \left \lfloor H/16\right \rfloor \left( \left \lfloor H/16\right \rfloor+ 1 \right)  \geq \frac{c_4\Delta H}{\ror} > \varepsilon
\end{align}
holds for some small enough constant $c_4$ and letting $\Delta = \frac{ \ror \varepsilon}{c_4H}$.

\end{itemize}

\end{document}